\title{Long-time asymptotics of noisy SVGD \\ outside the population limit}
\author{V.~Priser \\
Télécom Paris\\
\And P.~Bianchi\\
Télécom Paris\\
\And A.~Salim\\
Microsoft Research\\
}
\newcommand{\eqdef}{:=}
\newcommand{\ps}[1]{ \langle#1 \rangle}
\newcommand{\bP}{{\mathbb P}}
\newcommand{\bE}{{\mathbb E}}
\newcommand{\bR}{{\mathbb R}}
\newcommand{\bI}{{\mathbb I}}
\newcommand{\bN}{{\mathbb N}}
\newcommand{\cA}{{\mathcal A}}
\newcommand{\cB}{{\mathcal B}}
\newcommand{\cC}{{\mathcal C}}
\newcommand{\cD}{{\mathcal D}}
\newcommand{\cL}{{\mathcal L}}
\newcommand{\cP}{{\mathcal P}}
\newcommand{\cE}{{\mathcal E}}
\newcommand{\cH}{{\mathcal H}}
\newcommand{\mcL}{{\mathscr L}}
\newcommand{\cF}{{\mathcal F}}
\newcommand{\cK}{{\mathcal K}}
\newcommand{\sV}{{\mathsf V}} 
\newcommand{\sW}{{\mathsf W}}
\newcommand{\dr}{{d}}
\newcommand{\ex}{\mathrm{e}}
\newcommand{\6}{\partial}
\newcommand\KL[2]{\mathrm{KL}(#1||#2)}
\newcommand\IS[2]{\mathcal{I}_{\text{stein}}(#1||#2)}
\newcommand\I[2]{\mathcal{I}(#1||#2)}
\newcommand\esp[1]{\mathbb{E}\left[#1\right]} 
\newcommand{\indicatrice}{\mathds{1}}
\newcommand\dd[2]{\frac{{d} #1}{{d} #2}}
\newcommand\p[1]{\left( #1 \right)}
\newcommand\norm[1]{\left\lVert #1 \right\rVert}
\newcommand\abs[1]{\left\lvert #1 \right\rvert}
\theoremstyle{plain}
\newtheorem{theorem}{Theorem}
\newtheorem{coro}{Corollary}
\newtheorem{lemma}{Lemma}
\newtheorem{prop}{Proposition}
\newtheorem{assumption}{Assumption}
\theoremstyle{remark}
\newtheorem{definition}{Definition}
\begin{document}

\maketitle

\begin{abstract}
  Stein Variational Gradient Descent (SVGD) is a widely used sampling algorithm that has been successfully applied in several areas of Machine Learning. SVGD operates by iteratively moving a set of $n$ interacting particles (which represent the samples) to approximate the target distribution. Despite recent studies on the complexity of SVGD and its variants, their long-time asymptotic behavior (i.e., after numerous iterations $k$) is still not understood in the finite number of particles regime. We study the long-time asymptotic behavior of a noisy variant of SVGD. First, we establish that the limit set of noisy SVGD for large $k$ is well-defined. We then characterize this limit set, showing that it approaches the target distribution as $n$ increases. In particular, noisy SVGD provably avoids the variance collapse observed for SVGD. Our approach involves demonstrating that the trajectories of noisy SVGD closely resemble those described by a McKean-Vlasov process.
\end{abstract}

\section{Introduction}

Sampling is a fundamental task of machine learning, at the core of Bayesian inference and generative modeling. Mathematically, the task of sampling can be formulated as the task of generating samples, \textit{i.e.}, random variables, from a given (or learnt) probability distribution $\pi$. This task can be achieved by means of a sampling algorithm that iteratively generates the samples, which are meant to asymptotically approximate the target distribution.

The question of the convergence in distribution of the samples to the target $\pi$ is therefore of primary interest in the theory of sampling. This question has been investigated by several works in the sampling literature, with precise convergence rates for some sampling algorithms such as the celebrated Langevin algorithm, see~\cite{chewi2023log} for an overview. 

Stein Variational Gradient Descent (SVGD)~\cite{liu2016stein} is an algorithm to sample from a target distribution $\pi$ whose density w.r.t. Lebesgue measure is known up to a normalizing factor and written in the form
\begin{equation}
\pi(x) \propto \exp(-F(x)), \quad \text{where} \quad F: \bR^d \to \bR.
\end{equation}
SVGD (and its variants) is an alternative to the Langevin algorithm that has been successfully applied in several areas of machine learning, see~\cite{liu2017policy,zhang2018learning,zhang2019scalable,tao2019variational,pu2017vae,kassab2020federated,messaoud2024s} among others. For example, the SVGD dynamics can be seen as a "kernelized" version of the probability flow ODE used in generative modeling~\cite{song2020score,chen2024probability}. The SVGD algorithm takes the form of an interacting particles system of $n$ particles. The empirical distribution of the $n$ particles at time $k$, denoted $\mu^n_k$, is meant to approximate the target $\pi$ when the number of iterations $k$ is large.




\subsection{Related works}
Several works have investigated the convergence of SVGD, \textit{i.e.}, the convergence of $\mu^n_k$ to $\pi$. 

Most of these works have considered the hypothetical regime $n = \infty$, called the population limit~\cite{korba2020non,salim2022convergence,sun2023convergence,nusken2021stein}. More precisely, in the population limit, \cite{korba2020non,salim2022convergence,sun2023convergence} showed that for every $k>0$, 
\begin{equation}
\label{eq:poplimit}
\IS{\mu^\infty_k}{\pi} < \frac{C}{k},
\end{equation}
where $C>0$ is a constant and $\mathcal{I}_{\text{stein}}$ denotes the Stein Fisher Information, a discrepancy between the current iterate $\mu^\infty_k$ and the target $\pi$. The convergence in distribution of SVGD to the target $\pi$ can be deduced, 
in the population limit, by letting $k \to \infty$ in~\eqref{eq:poplimit}, see~\cite{salim2022convergence}.

More recently, some works have considered the finite number of particles regime $n < \infty$~\cite{shi2024finite,das2024provably,carrillo2023convergence,liu2024towards,jaghargh2023stochastic}. 
More precisely, in this regime, one can show that SVGD approximates its population limit provided that $k$ is small enough~\cite{korba2020non,shi2024finite,lu2019scaling,liu2017stein}. Combining this fact with~\eqref{eq:poplimit}, \cite{shi2024finite,carrillo2023convergence} showed that $\IS{\mu^n_k}{\pi} < C'/k$, where $C'>0$ is a constant, provided that $k$ is small enough (\textit{e.g.}, $k < \log \log (n)$ in~\cite{shi2024finite}). Because of this upper bound on $k$, the convergence of SVGD, in the finite number of particles regime, cannot be deduced by letting $k \to \infty$. 

Indeed, SVGD does not converge to the target when $n < \infty$. Because the iterates of SVGD are discrete measures with a finite support of $n$ points, whereas the target $\pi$ has a continuous density w.r.t. Lebesgue. Therefore, we ask the following question.

\emph{What does SVGD converge to (\textit{i.e.}, when $k \to \infty$) in the finite number of particles regime (\textit{i.e.}, when $n < \infty$ is fixed)?}

To the best of our knowledge, this question remains unanswered except in the particular case where $\pi$ is a centered Gaussian distribution, see~\cite[Theorem 10]{liu2024towards}. For a fixed \( n \), the paper \cite{jaghargh2023stochastic} demonstrates that SVGD converges in expectation to a system of \( n \) continuous-time particles, but does not enable the establishment of consistency with the target distribution \( \pi \), when \( n \) becomes large.

However, we can already make a few observations. 

\begin{itemize}
\item As mentioned above, SVGD does not converge to the target $\pi$ because the iterates of SVGD are discrete whereas $\pi$ is continuous.  
\item The best one can hope in general is for the SVGD iterates to converge to some "limit" $\mcL^n$ that approaches $\pi$ as $n$ grows.
\item Even if we were able to show that the limit $\mcL^n$ is well-defined (this task is already non trivial since some particles could diverge for example), $\mcL^n$ would probably not approach the target $\pi$ as $n$ grows. Indeed, SVGD has been empirically shown not to converge to the target $\pi$ in high dimension. More precisely, SVGD has been observed to underestimate the variance of the target distribution and the particles of SVGD have been observed to collapse to some modes of the distribution, see~\cite{ba2021understanding,zhuo2018message,d2021annealed}. 
\end{itemize}

\subsection{Contributions}

In this paper, we introduce a new noisy variant of SVGD where each iteration is regularized by noise which takes the form of an iteration of the Langevin algorithm. We study the "limit" $\mcL^n$ of our algorithm, noisy SVGD, with $n < \infty$ particles, when the number of iterations $k \to \infty$. More precisely, our contributions are the following.

\begin{itemize}
\item We propose a new noisy variant of SVGD where each iteration is regularized by noise which takes the form of an iteration of the Langevin algorithm.
\item We first show that, when the number of particles $n < \infty$ is fixed, noisy SVGD converges when $k \to \infty$ to a well-defined limit set $\mcL^n$ (Th.~\ref{th:tight}).
\item Then, we describe this limit set $\mcL^n$: it cannot contain the target $\pi$, but we show that $\mcL^n$  approaches $\pi$ as $n$ grows (Th.~\ref{th:ergo}).
\item Finally, we obtain Cor.~\ref{coro:ergodic} on the convergence of noisy SVGD in the regime $\displaystyle \lim_{n \to \infty} \lim_{k \to \infty}$. Since the convergence in the regime $\displaystyle \lim_{k \to \infty} \lim_{n \to \infty}$ can be deduced from the existing works mentioned above, Cor.~\ref{coro:ergodic} implies that $\displaystyle \lim_{n \to \infty}$ and $\displaystyle \lim_{k \to \infty}$ can be exchanged. 
\item Our overall approach relies on proving that the trajectories of noisy SVGD mimic that of a McKean-Vlasov process~\cite{bianchi2024long}, a dynamical result of independent interest (Proposition~\ref{prop:MinfV2}).
\item Our convergence results prove that noisy SVGD avoids the variance collapse of SVGD, a fact that we verify experimentally by comparing noisy SVGD to SVGD (Fig.~\ref{fig:images}). 
\end{itemize}

\subsection{Paper structure}
This paper is organized as follows. We review some background material in Section~\ref{sec:background}. In Section~\ref{sec:svgd}, we introduce our main algorithm, noisy SVGD. Next, we state our main results regarding the convergence of noisy SVGD in Section~\ref{sec:marginal}. In Section~\ref{sec:proof}, we provide an overview of our convergence proof, which relies on relating the trajectories of noisy SVGD with those of a McKean-Vlasov process. In Section~\ref{sec:collapse}, we empirically show that noisy SVGD, unlike SVGD, does not suffer from the particles collapse. Finally, we conclude in Section~\ref{sec:ccl}. The proofs are deferred to the Appendix.

\section{Background }
\label{sec:background}
\subsection{Notations}


The Euclidean inner product and norm of $\bR^d$ are denoted $\langle \cdot, \cdot \rangle$ and $\| \cdot \|$. We consider a Reproducing Kernel Hilbert Space (RKHS) $\cH_0$ whose kernel is denoted $K: \bR^d\times \bR^d \to \bR$. The product space $\cH:= \cH_0^d$, is a Hilbert space whose inner product and norm are denoted $\langle \cdot, \cdot \rangle_{\cH}$ and $\| \cdot \|_{\cH}$.



\subsection{Optimal transport}
For every topological space $E$, we denote by $\cP(E)$ the set of probability measures on
the Borel $\sigma$-field $\cB(E)$.
If $E$ is a Polish (complete, metrizable) space, then $\cP(E)$ equipped with the weak$\star$ topology is Polish as well.
A subset $\cA$ of random variables on $E$ is called \emph{tight}, if, for every $\varepsilon>0$, there exists a compact set $A\subset E$, such that $\bP(X\in A)>1-\varepsilon$, for every $X\in \cA$.
If $E$ is a Banach space, we define
$$
\cP_2(E)\eqdef \{\mu\in \cP(E)\,:\, \int\|x\|^2\dr\mu(x)<\infty\}\,,
$$
and the Wasserstein-2 distance by
\begin{equation*}\label{eq:WpE}
{ W}_2(\mu,\nu) \eqdef \left(\inf_{\varsigma\in \Pi(\mu,\nu)} 
  \int \|x-y\|^2 d\varsigma(x,y)\right)^{1/2}\,,
\end{equation*}
where $\Pi(\mu,\nu)$ is the set couplings of $\mu \in \cP_2(E)$ and $\nu \in \cP_2(E)$, \textit{i.e.}, the set of measures $\varsigma\in\cP(E\times E)$ such 
that $\varsigma(\,\cdot\,\times E)=\mu$ and $\varsigma(E\times\,\cdot\,)=\nu$.
The Wasserstein space, \textit{i.e.}, the set $\cP_2(E)$ endowed with the distance ${ W}_2$, is a Polish space. 



In the proofs, we need to consider the case where the space $E$ coincides with the set $\cC$ of continuous function on $[0,\infty)$ to $\bR^d$. Eventhough $\cC$ is not a Banach space, the definitions follow the same lines.
The set $\cC$ is equipped with the topology of uniform convergence on compact intervals.
For every $\rho\in \cP(\cC)$, we denote by $\rho^T$ the restriction of $\rho$ to functions on the compact interval $[0,T]$ (that is, $\rho^T = (\pi_{[0,T]})_\#\rho$, the pushforward of $\rho$ by the map $\pi_{[0,T]}$ which, to every function $f\in \cC$, associates its restriction to the compact interval $[0,T]$). We denote by $\cP_2(\cC)$ the set of measures $\rho\in \cP(\cC)$ such that $\rho^T\in \cP_2(C([0,T],\bR^d))$ for all $T>0$. This space is naturally equipped with the following topology: a sequence $\rho_n$ converges to $\rho$ in the Wasserstein-2 sense if $\rho_n^T\to \rho^T$ in the 
Wasserstein-2 sense, for every $T>0$. Then, $\cP_2(\cC)$ is metrizable, and we denote by $W_2(\rho,\rho')$ a proper distance~\cite[Sec. 2.2]{bianchi2024long}.


\subsection{Functional inequalities  }
Let $\pi \in \cP_2(\bR^d)$ be the target distribution, \textit{i.e.}, $\pi \propto \exp(-F)$.
The Kullback-Leibler divergence with respect to $\pi$ is defined for every $\mu \in \cP_2(\bR^d)$ by
\[
\KL{\mu}{\pi} = \int {\log\dd{\mu}{\pi}}\dr\mu\,,
\]
if $\mu$ has a density $\dd{\mu}{\pi}$ w.r.t. $\pi$, and $\KL{\mu}{\pi} = +\infty$ else.
The Stein Fisher Information w.r.t. $\pi$ is defined by 
\[
  \IS{\mu}{\pi} := \norm{P_\mu\nabla\log \dd{\mu}{\pi}}^2_\cH,
\]
where $P_\mu : L^2(\mu) \to \cH$ is the so-called kernel integral operator $P_\mu f = \int K(\cdot,y)f(y)\dr\mu(y)$.
The Fisher Information w.r.t. $\pi$ is defined by
\[
\I{\mu}{\pi} := \int \norm{\nabla\log \dd{\mu}{\pi}}^2 \dr \mu(x)\,.
\]
Finally, we recall the Log Sobolev Inequality (LSI) that relates the Kullback-Leibler divergence and the Fisher Information.
\begin{definition}[Logarithmic Sobolev Inequality]
   The distribution $\pi$ satisfies the Logarithmic Sobolev Inequality, if there exists $\alpha>0$ such that for every $\mu\in\cP_2(\bR^d)$, 
   \[
    \KL{\mu}{\pi} \le \frac 1{2\alpha} \I{\mu}{\pi}.   
   \]
\end{definition}
The LSI is satisfied when $F$ is $\alpha$-strongly convex but can also be used to study the convergence of sampling algorithms in the case where $F$ is not convex~\cite[Section 21]{villani2008optimal} (see also~\cite{vempala2019rapid}).
 
\section{Noisy Stein Variational Gradient Descent}
\label{sec:svgd}

The Stein Variational Gradient Descent (SVGD) algorithm~\cite{liu2016stein} is used to sample from a distribution $\pi \propto \exp(-F)$, where $F:\bR^d\to\bR$ is a differentiable function. At every iteration $k$, the algorithm updates the values of $n$ $\bR^d$-valued vectors, refered to as the particles $X_k^{1,n},\cdots,X_k^{n,n}$.
We study a generalization of SVGD, called noisy SVGD, that incorporates noise in the form of a Langevin iteration at each step of SVGD.

Let $(\Omega,\cF,\bP)$ be a probability space, $\lambda\ge 0$ and $(\gamma_k)$ be a positive deterministic sequence in $\bR$. 
Starting with a $n$--uple $(X_0^{1,n},\dots, X_0^{n,n})$ of $\bR^d$-valued random variables, the particles are updated according to Algorithm~\ref{tab:SVGD} where $(\xi^{i,n}_k)_{i,k}$ is a family of i.i.d standard Gaussian vectors in $\bR^d$.
\begin{algorithm}[H]
    \caption{Noisy Stein Variational Gradient Descent}
    \begin{algorithmic}
        \STATE {\bf Initialization}: generate $n$ particles $(X_0^{1,n},\dots, X_0^{n,n})$  \FOR{$k=0,1,2,\ldots$}
        \FOR{$i=1,2,\ldots,n$}
        \STATE \begin{multline}\label{eq:algo}
  X_{k+1}^{i,n} = X_k^{i,n} - \frac{\gamma_{k+1}}{n}\sum_{j\in [n]} \p{K(X^{i,n}_k,X^{j,n}_k) \nabla F(X^{j,n}_k)-\nabla_2 K(X^{i,n}_k,X^{j,n}_k)  }\\  \underbrace{- \lambda\gamma_{k+1}\nabla F(X^{i,n}_k) +\sqrt{2\lambda \gamma_{k+1}} \xi^{i,n}_{k+1}}_{\text{Langevin regularization}}\,.
\end{multline}
        \ENDFOR
        \ENDFOR
    \end{algorithmic}
    \label{tab:SVGD}
\end{algorithm}
Noisy SVGD boils down to the standard deterministic SVGD algorithm when $\lambda = 0$. The regularization parameter \(\lambda>0\) allows the introduction of noise into the algorithm with the aim of preventing the mode collapse phenomenon described in the introduction.
We state our assumptions on the step size and the noise sequence.
\begin{assumption}\label{hyp:algo}
    Let the following holds.
\begin{enumerate}[i)]
    \item   \label{hyp:gamma}  $(\gamma_k)$ is a non-negative deterministic sequence satisfying 
    $\lim_{k\to\infty}\gamma_k = 0$, and  $\sum_k\gamma_k = +\infty$.
    \item  \label{hypenum:exch}$(\xi^{i,n}_{k})_{k\in\bN, i\in[n]}$ is an i.i.d. sequence of standard Gaussian variables, independent of $(X_0^{i,n})_{i\in [n]}$.
\end{enumerate}
\end{assumption}
Noisy SVGD allows for the approximation of linear functionals of the form \(\int f \, d\pi\), where \(f\) is an arbitrary integrand, by the discrete sum
$\frac{1}{n} \sum_{i=1}^n f(X_k^{i,n})\,.$
The latter can be written as \(\int f \, d\mu_k^n\), where \(\mu_k^n\) is the empirical measure of the particles, defined by
$$
\mu_k^n \eqdef \frac{1}{n} \sum_{i\in[n]} \delta_{X_k^{i,n}}\,.
$$
Note that \((\mu_k^n)_k\) is a sequence of \emph{random} measures. A useful convergence result for noisy SVGD involves studying the convergence in probability of this sequence towards the target distribution \(\pi\).
In some situations, it is more convenient to study the \emph{averaged} empirical measure $\bar\mu_k^n$, defined for $k,n\in\bN^*$, by:
\[
    \bar\mu_k^n := \frac{\sum_{i\in[k]}\gamma_i \mu_i^n}{\sum_{i\in[k]}\gamma_i }\,.
\]


\section{Convergence results of noisy SVGD}\label{sec:marginal}


\subsection{Limit set of noisy SVGD is well-defined}

We start our analysis by studying the limit set of SVGD as $k$ tend to infinity, for a fixed number $n$ of particles. As the number of particles is fixed, it cannot be expected that the limit of $\mu_k^n$ coincides with $\pi$ as $k\to\infty$, because a discrete measure with a fixed number of atoms cannot approach a density. 
We formally describe the limit set of the empirical measures in a distributional sense 
\begin{definition}[Distributional limit set]
    Let $\nu$, $(\nu_k:k\in\bN)$ be random variables on $\cP(\bR^d)$. We say that $\nu$ is a distributional cluster point of $(\nu_k)$, if $\nu_k$ converges in distribution to $\nu$ along a subsequence. The distributional limit set $\mathscr{L}((\nu_k))$ of the sequence $(\nu_k)$ is defined as the set of distributional cluster points of $(\nu_k)$.
\end{definition}
We denote by $\mathscr{L}^n\eqdef \mathscr{L}((\mu_k^n))$ the distributional limit set of the sequence $(\mu_k^n:k\in\bN)$, when $k\to\infty$, $n$ being fixed. In words, $\mathscr{L}^n$ is the set of random measures $\nu^n$ such that $\mu_k^n$ converges to $\nu_n$ in distribution, along a subsequence. 
Similarly, we denote by $\bar{\mathscr{L}}^n$ the limit set of the sequence $(\bar\mu_k^n)$.
\begin{assumption}\label{hyp:stab}
There exists four non-negative constant $c,c',C,C'$, such that for every $x,y\in\bR^d$, the following holds.
\begin{enumerate}[i)]
    \item   The hessian $H_F(x)$ is well-defined and $\norm{H_F(x)}_{op}\le C$.

    \item  \label{hypenum:dissipation}$c' F(x) -C\le \norm{\nabla F(x)}^2\le C' F(x) + C$ and  $ c\norm{x}^2-  C  \le F(x)$.
    \item  $  \norm{K(\cdot,y)}_{\cH_0} + \norm{\nabla_2 K(\cdot, y)}_{\cH}\le C .$
        \item \label{hypenum:}$\sup_{n}\bE\p{(X_0^{1,n})^4} <\infty .$
\end{enumerate}
\end{assumption}
Given the previous assumption, we can establish the stability of our algorithm, in the form of the following lemma.
\begin{lemma}\label{lem:stab}
  Let Assumptions~\ref{hyp:algo} and~\ref{hyp:stab} be satisfied. Assume $\lambda >0$. Then,
  $
    \sup_{k,n} \bE\|  X^{1,n}_k \|^4 <\infty 
  $.
\end{lemma}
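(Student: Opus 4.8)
The plan is a two–level Lyapunov (drift) argument. Replacing $F$ by $F+C+1$ — which only rescales $\pi$, leaves $\nabla F$ and $H_F$ untouched and preserves the inequalities of Assumption~\ref{hyp:stab} up to harmless changes of the constants — we may assume $F\ge 1$. Since the recursion~\eqref{eq:algo} is symmetric in the particle index and the initialisation is exchangeable, $\bE\|X_k^{1,n}\|^4=\bE\big[\tfrac1n\sum_{i\le n}\|X_k^{i,n}\|^4\big]$, so by the coercivity $F(x)\ge c\|x\|^2$ it is enough to bound, uniformly in $(k,n)$, first $\Phi_k:=\tfrac1n\sum_i F(X_k^{i,n})$ and then $V_k:=\tfrac1n\sum_i F(X_k^{i,n})^2$. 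Both will follow, once the one–step drift estimates below are in hand, from the elementary fact that a nonnegative sequence satisfying $u_{k+1}\le(1-a\gamma_{k+1})u_k+b\gamma_{k+1}$ is bounded when $\gamma_k\to0$ and $\sum_k\gamma_k=\infty$ (Assumption~\ref{hyp:algo}); the finitely many initial steps (before $\gamma_k$ is small enough) are handled by a direct induction, the increments having at most linear growth in $\max_j\|X_k^{j,n}\|$ — because $\|\nabla F\|$ grows at most linearly, which itself follows from $\|H_F\|_{\mathrm{op}}\le C$ together with $\|\nabla F\|^2\le C'F+C$ — and the Gaussian noise having all moments, with $\bE[\Phi_0],\bE[V_0]<\infty$ by Assumption~\ref{hyp:stab}.

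\textbf{Second moment.} Write the update as $X_{k+1}^{i,n}-X_k^{i,n}=-\gamma_{k+1}b_k^{i,n}+\sqrt{2\lambda\gamma_{k+1}}\,\xi_{k+1}^{i,n}$ with $b_k^{i,n}=\lambda\nabla F(X_k^{i,n})+\tfrac1n\sum_j\big(K(X_k^{i,n},X_k^{j,n})\nabla F(X_k^{j,n})-\nabla_2K(X_k^{i,n},X_k^{j,n})\big)$. A second–order Taylor expansion of $F$ (legitimate since $\|H_F\|_{\mathrm{op}}\le C$) followed by conditioning on $\cF_k$ and averaging over $i$ gives
\[
\bE[\Phi_{k+1}\mid\cF_k]\le\Phi_k-\gamma_{k+1}\lambda G_k-\gamma_{k+1}\Big\|\tfrac1n\textstyle\sum_i\nabla F(X_k^{i,n})\otimes K(X_k^{i,n},\cdot)\Big\|_{\cH}^2+\gamma_{k+1}\,r_k,
\]
where $G_k=\tfrac1n\sum_i\|\nabla F(X_k^{i,n})\|^2$ and $r_k$ collects the $\nabla_2K$ term, the $\lambda d$ Itô term and the $O(\gamma_{k+1})$ remainders. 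The crucial observation is that the second interaction term is manifestly $\ge0$: it is exactly the $\cH$–norm squared produced by the positive semidefiniteness of the reproducing kernel $K$, hence it can only help. Using $\|K(\cdot,y)\|_{\cH_0},\|\nabla_2K(\cdot,y)\|_{\cH}\le C$ and $\|\nabla F\|^2\le C'F+C$, the term $r_k$ is absorbed into $\tfrac12\lambda G_k$ once $\gamma_{k+1}$ is small; combined with $G_k\ge c'\Phi_k-C$ this yields $\bE[\Phi_{k+1}\mid\cF_k]\le(1-\tfrac{c'\lambda}{4}\gamma_{k+1})\Phi_k+D\gamma_{k+1}$ for large $k$, whence $\sup_{k,n}\bE[\Phi_k]<\infty$, i.e.\ a uniform second–moment bound.

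\textbf{Fourth moment and the main obstacle.} Expanding $F(X_{k+1}^{i,n})^2$, conditioning and averaging over $i$ produces
\[
\bE[V_{k+1}\mid\cF_k]\le V_k-2\gamma_{k+1}\lambda\,\tfrac1n\textstyle\sum_iF(X_k^{i,n})\|\nabla F(X_k^{i,n})\|^2-2\gamma_{k+1}\tilde A_k+\gamma_{k+1}\tilde r_k,
\]
with $\tilde A_k=\tfrac1{n^2}\sum_{i,j}F(X_k^{i,n})K(X_k^{i,n},X_k^{j,n})\langle\nabla F(X_k^{i,n}),\nabla F(X_k^{j,n})\rangle$ and $\tilde r_k$ gathering the $\nabla_2K$ contribution and the $O(\gamma_{k+1})$ remainders. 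By $\|\nabla F\|^2\ge c'F-C$ the dissipative term dominates a $-c'\lambda\gamma_{k+1}V_k$ effect; the pieces of $\tilde r_k$, and the $\nabla_2K$ part, are controlled by Young's inequality and the uniform $\bE[\Phi_k]$ bound above (using $\|\nabla F\|^2\le C'F+C$ to turn gradient moments into $F$–moments). The genuine difficulty is $\tilde A_k$: the weight $F(X_k^{i,n})$ breaks the manifest nonnegativity of the interaction, and a crude bound by magnitudes returns a term of order $V_k$, the same size as the dissipation. The resolution is to retain the kernel structure: writing $\tilde A_k=\sum_\ell\langle\tilde U_\ell,\tilde W_\ell\rangle_{\cH_0}$ with $\tilde W_\ell=\tfrac1n\sum_j\partial_\ell F(X_k^{j,n})K(X_k^{j,n},\cdot)$ and $\tilde U_\ell=\tfrac1n\sum_iF(X_k^{i,n})\partial_\ell F(X_k^{i,n})K(X_k^{i,n},\cdot)$, one factors out the common weight, $\tilde U_\ell=\Phi_k\tilde W_\ell+\tilde D_\ell$ with $\tilde D_\ell=\tfrac1n\sum_i(F(X_k^{i,n})-\Phi_k)\partial_\ell F(X_k^{i,n})K(X_k^{i,n},\cdot)$, so that $\langle\tilde U_\ell,\tilde W_\ell\rangle\ge\Phi_k\|\tilde W_\ell\|^2-\|\tilde D_\ell\|\|\tilde W_\ell\|\ge-\|\tilde D_\ell\|^2/(4\Phi_k)$; because $\tilde D_\ell$ sees only the \emph{centred} weights $F(X_k^{i,n})-\Phi_k$, the quantity $\sum_\ell\|\tilde D_\ell\|^2$ is, via Cauchy--Schwarz and $\|\nabla F\|^2\le C'F+C$, controlled by $\Phi_k$, $G_k$ and the empirical variance $V_k-\Phi_k^2$ — i.e.\ in a form to be balanced against the dissipation $-c'\lambda\gamma_{k+1}V_k$ (the positive semidefiniteness of $K$ reappearing through the discarded nonnegative diagonal part $\tfrac1{n^2}\sum_iF(X_k^{i,n})K(X_k^{i,n},X_k^{i,n})\|\nabla F(X_k^{i,n})\|^2$). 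Arranging this bookkeeping so that the residual coefficient of $V_k$ stays negative — which is precisely where the reproducing–kernel structure and the precise form of Assumption~\ref{hyp:stab} must be used — is the technical heart of the proof; the remainder is a lengthy but routine drift computation, closed by the sequence lemma, yielding $\sup_{k,n}\bE[V_k]<\infty$ and hence the claim.
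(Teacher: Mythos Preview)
Your two-level Lyapunov plan and your treatment of $\Phi_k$ (the paper's $I_{k,n}:=\tfrac1n\sum_iF(X_k^{i,n})$) are correct and match the paper; in particular the observation that the averaged kernel interaction equals $-\|\tfrac1n\sum_jK(\cdot,X_k^{j,n})\nabla F(X_k^{j,n})\|_\cH^2\le0$ is exactly what drives that step. The gap is in the fourth-moment step. Your pathwise RKHS decomposition of $\tilde A_k$ yields, after Cauchy--Schwarz and $\|\nabla F\|^2\le C'F+C$, a bound $-2\gamma_{k+1}\tilde A_k\le \kappa\,\gamma_{k+1}(V_k-\Phi_k^2)$ with $\kappa$ depending on the kernel bound and on $C'$ but \emph{not} on $\lambda$. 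Since $V_k-\Phi_k^2$ can be of the same order as $V_k$, the net coefficient of $V_k$ in your drift is essentially $-2c'\lambda+\kappa$, and nothing in Assumption~\ref{hyp:stab} forces this to be negative. Your sentence that ``arranging this bookkeeping so that the residual coefficient of $V_k$ stays negative \dots\ is the technical heart'' names the obstacle but does not resolve it; I do not see how to close it with this decomposition alone.

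The paper closes it by inserting the intermediate step you skipped: $\sup_{k,n}\bE[\Phi_k^2]<\infty$, obtained by \emph{squaring} the one-step drift inequality for $\Phi_k$ (where the kernel term keeps the good sign, so the squared recursion $\bE[\Phi_{k+1}^2]\le(1-\tilde c\gamma)\bE[\Phi_k^2]+C\gamma\bE[\Phi_k^2]^{3/4}+\ldots$ closes). Via exchangeability this bounds the cross moment $\bE[F(X_k^{1,n})F(X_k^{2,n})]$ uniformly in $k,n$. Then, instead of the pathwise $V_k$, the paper works with $\bE[F(X_k^{1,n})^2]$ and applies Cauchy--Schwarz \emph{in expectation} to the cross term: $\bE\big[|F_1\langle\nabla F_1,\nabla F_2\rangle|\big]\le\bE[F_1^2]^{1/2}\,\bE[\|\nabla F_1\|^2\|\nabla F_2\|^2]^{1/2}$, and the second factor is uniformly bounded because $\|\nabla F\|^2\le C'F+C$ and $\bE[F_1F_2]\le C$. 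The whole cross contribution is therefore $O(\bE[F_1^2]^{3/4})$, strictly sublinear, so the recursion $\bE[F_{k+1}^2]\le(1-\tilde c\gamma)\bE[F_k^2]+C\gamma(\bE[F_k^2]^{3/4}+\bE[F_k^2]^{1/2}+1)$ closes for every $\lambda>0$, independently of the relative sizes of $c',C',C$.
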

Lem.~\ref{lem:stab} is the key component for establishing our first theorem.
\begin{theorem}\label{th:tight}
Let Assumptions \ref{hyp:algo} and~\ref{hyp:stab} hold. Assume $\lambda>0$. Then, for every $n\in\bN^*$, the sequence of random variables $(\mu_k^n)_k$ is tight.
As a consequence, the sets $\mathscr{L}^n$ and $\bar{\mathscr{L}}^n$ are non empty.
Finally, all random measures of $\mathscr{L}^n$ and $\bar{\mathscr{L}}^n$ belong almost surely to $\cP_2(\bR^d)$.
\end{theorem}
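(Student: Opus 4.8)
The plan is to deduce tightness of $(\mu_k^n)_k$ from the uniform fourth-moment bound of Lemma~\ref{lem:stab}, and then obtain the non-emptiness of the distributional limit sets and the $\cP_2(\bR^d)$-membership as consequences. First I would fix $n\in\bN^*$ and recall that $\mu_k^n = \frac1n\sum_{i\in[n]}\delta_{X_k^{i,n}}$ is a random element of $\cP(\bR^d)$, so that tightness of $(\mu_k^n)_k$ means tightness of this sequence of random probability measures, i.e. for every $\varepsilon>0$ there is a \emph{compact} $\mathcal{K}\subset \cP(\bR^d)$ (in the weak$\star$ topology) with $\bP(\mu_k^n\in\mathcal{K})>1-\varepsilon$ for all $k$. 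By Prokhorov's theorem applied on the Polish space $\cP(\bR^d)$, such compact sets are exactly the weakly closed, uniformly tight families of measures; a convenient sufficient condition is to control a moment. Precisely, for $R>0$ let $\mathcal{K}_R\eqdef\{\nu\in\cP(\bR^d):\int\|x\|^2\,d\nu(x)\le R\}$. This set is tight as a family of measures (Markov's inequality gives $\nu(\|x\|>M)\le R/M^2$ uniformly over $\nu\in\mathcal{K}_R$) and weakly closed (the map $\nu\mapsto\int (\|x\|^2\wedge m)\,d\nu$ is weakly continuous and increases to $\nu\mapsto\int\|x\|^2 d\nu$, so the sublevel set is an intersection of closed sets), hence $\mathcal{K}_R$ is weak$\star$-compact.

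It then remains to bound $\bP(\mu_k^n\notin\mathcal{K}_R)$ uniformly in $k$. We have $\int\|x\|^2\,d\mu_k^n(x)=\frac1n\sum_{i\in[n]}\|X_k^{i,n}\|^2$, so by Markov's inequality,
\[
\bP\!\left(\mu_k^n\notin\mathcal{K}_R\right)=\bP\!\left(\frac1n\sum_{i\in[n]}\|X_k^{i,n}\|^2>R\right)\le \frac1R\,\bE\!\left[\frac1n\sum_{i\in[n]}\|X_k^{i,n}\|^2\right]\le\frac{1}{R}\sup_{k,n,i}\bE\|X_k^{i,n}\|^2.
\]
By exchangeability of the particles (the dynamics \eqref{eq:algo} and the i.i.d. noise are symmetric in $i$, provided the initialization is exchangeable, as implicit in Assumption~\ref{hyp:stab}~\ref{hypenum:}), $\bE\|X_k^{i,n}\|^2=\bE\|X_k^{1,n}\|^2$; and by Jensen, $\bE\|X_k^{1,n}\|^2\le(\bE\|X_k^{1,n}\|^4)^{1/2}\le M$ where $M\eqdef(\sup_{k,n}\bE\|X_k^{1,n}\|^4)^{1/2}<\infty$ by Lemma~\ref{lem:stab}. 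Choosing $R=M/\varepsilon$ gives $\bP(\mu_k^n\notin\mathcal{K}_R)\le\varepsilon$ for all $k$, which is the desired tightness. The same argument applied to $\bar\mu_k^n$, using convexity of $\nu\mapsto\int\|x\|^2 d\nu$ (so $\int\|x\|^2 d\bar\mu_k^n\le\frac{\sum_{i\le k}\gamma_i\int\|x\|^2 d\mu_i^n}{\sum_{i\le k}\gamma_i}$) and linearity of expectation, shows $(\bar\mu_k^n)_k$ is tight too.

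For the remaining conclusions: since $\cP(\bR^d)$ is Polish, Prokhorov's theorem gives that the tight sequence $(\mu_k^n)_k$ is relatively compact in distribution, so it has at least one distributional cluster point, i.e. $\mathscr{L}^n\ne\emptyset$, and likewise $\bar{\mathscr{L}}^n\ne\emptyset$. Finally, let $\nu^n\in\mathscr{L}^n$, so $\mu_{k_j}^n\to\nu^n$ in distribution along a subsequence $(k_j)$. By Skorokhod's representation theorem we may realize this convergence almost surely on a common probability space; then Fatou's lemma (for the weakly lower semicontinuous functional $\nu\mapsto\int\|x\|^2 d\nu$, together with a second application of Fatou in the expectation) gives $\bE\!\left[\int\|x\|^2 d\nu^n\right]\le\liminf_j\bE\!\left[\int\|x\|^2 d\mu_{k_j}^n\right]\le M<\infty$, so $\int\|x\|^2 d\nu^n<\infty$ almost surely, i.e. $\nu^n\in\cP_2(\bR^d)$ a.s.; the identical argument handles $\bar{\mathscr{L}}^n$. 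The only genuinely substantive input is Lemma~\ref{lem:stab}; everything here is a standard tightness/Prokhorov packaging, so the main thing to be careful about is the measure-theoretic bookkeeping (weak$\star$-compactness of second-moment sublevel sets, and the lower semicontinuity used in the last step), not any hard estimate.
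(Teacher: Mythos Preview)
Your proof is correct and more elementary than the paper's. The paper does not argue directly with moment sublevel sets in $\cP(\bR^d)$; instead it first lifts everything to the path space $\cC$, proves (via the cited Lemma from \cite{bianchi2024long}) that the \emph{intensity measures} $\bI(m_t^n)$ are relatively compact in $\cP_2(\cC)$, invokes the general criterion ``tightness of random measures follows from relative compactness of their intensities'' (Lemma~\ref{lem:meleardGen}), and then pushes forward by the time-$0$ marginal $(\pi_0)_\#$ to recover the statement for $\mu_k^n$; the averaged case is handled by noting that $\bI(\bar\mu_k^n)$ is a convex combination of the $\bI(\mu_i^n)$. Your route bypasses the path-space machinery entirely and feeds Lemma~\ref{lem:stab} straight into a Markov/Prokhorov argument, which is cleaner if one only cares about Theorem~\ref{th:tight}; the paper's detour pays off because the same path-space tightness is reused verbatim to obtain Proposition~\ref{prop:tightm} on $(M_t^n)$. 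One small remark: because Lemma~\ref{lem:meleardGen} is phrased on $\cP_2(\bR^d)$, the paper in effect obtains tightness in the Wasserstein topology, which is slightly stronger than the weak$\star$ tightness you prove; if you ever need that upgrade, replace your sets $\cK_R$ by the fourth-moment sublevel sets $\{\nu:\int\|x\|^4\,d\nu\le R\}$, which are $W_2$-compact, and apply Markov with the fourth-moment bound of Lemma~\ref{lem:stab} directly.
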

It remains to characterize the limit sets. As mentioned earlier, the random variable equal to $\pi$ a.s. does not belong to the set $\mathscr{L}^n$. 
Therefore, the question is whether $\mathscr{L}^n$ reduces to the singleton $\pi$ as $n$ goes to infinity.

\subsection{Description of the limit set}

Consider the target measure $\pi$.
\begin{definition}
For every $n\geq 1$, let $\mathscr{E}^n$ be a set of random measures on $\cP_2(\bR^d)$. We say that the sequence of random sets $(\mathscr{E}^n:n\in \bN^*)$ converges in probability to $\pi$, denoted by $\mathscr{E}^n\xrightarrow[]{\bP}\pi$, if the Hausdorff-Wasserstein distance between $\mathscr E^n$ and $\pi$ converges in probability to zero:
$$
\forall \varepsilon>0,\ \lim_{n\to\infty} \bP(\sup_{\nu\in \cE^n} W_2(\nu,\pi)>\varepsilon) = 0\,.
$$
\end{definition}

Consider the following regularity assumption on the kernel $K$. 
\begin{assumption}\label{hyp:holder}
    There exists $\beta>0$, such that for every $x,x',y\in\bR^d$, we obtain $$\abs{K(x,y) -K(x',y)} + \norm{\nabla_2 K(x,y) - \nabla_2 K(x',y)}\le C \norm{x-x'}^\beta\,.$$
\end{assumption}
\begin{theorem}\label{th:ergo}
    Let Assumptions~\ref{hyp:algo}
,~\ref{hyp:stab}, and~\ref{hyp:holder} hold. Assume $\lambda>0$. 
Then,
$$
\bar{\mathscr{L}}^n\xrightarrow[n\to\infty]{\bP}\pi\,.
$$
\end{theorem}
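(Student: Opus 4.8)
The plan is to embed the discrete random trajectory $k\mapsto\mu_k^n$ into continuous time, show that it shadows the deterministic mean-field flow of noisy SVGD as $n\to\infty$, and read off the statement from the long-time behaviour of that flow. Set $\tau_k\eqdef\sum_{i\le k}\gamma_i$ and let $t\mapsto\mu^n(t)$ be the interpolation of $(\mu_k^n)_k$ along the clock $(\tau_k)$; by Assumption~\ref{hyp:algo} this is a legitimate embedding with $\tau_k\to\infty$, and $\bar\mu_k^n=\tfrac1{\tau_k}\int_0^{\tau_k}\mu^n(s)\,\dr s+o(1)$, i.e. asymptotically $\bar\mu_k^n$ is the time average of $\mu^n(\cdot)$. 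Let $\Phi$ denote the semiflow on $\cP_2(\bR^d)$ generated by the McKean--Vlasov equation
\[
\partial_t\mu_t=\nabla\cdot\!\Big(\mu_t\,P_{\mu_t}\nabla\log\tfrac{\mu_t}{\pi}\Big)+\lambda\big(\Delta\mu_t+\nabla\cdot(\mu_t\nabla F)\big),
\]
i.e. the Stein gradient flow of $\KL{\cdot}{\pi}$ regularised by the Langevin (Fokker--Planck) flow; its well-posedness and asymptotics are the subject of~\cite{bianchi2024long}.

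The dynamical heart of the argument is Proposition~\ref{prop:MinfV2}, which I would use to show that, as $n\to\infty$, the interpolated noisy-SVGD trajectory is an asymptotic pseudo-trajectory of $\Phi$: on every finite horizon $T$ and uniformly in the starting time $s$, $W_2\big(\mu^n(s+\cdot)|_{[0,T]},\Phi_{\cdot}(\mu^n(s))\big)$ is small with probability tending to $1$ (Lemma~\ref{lem:stab} is what confines the trajectory to a $W_2$-tight subset of $\cP_2(\bR^d)$, so no mass escapes to infinity). By the standard chain-transitivity argument, transferred to the Polish space $\cP_2(\bR^d)$, the $k\to\infty$ distributional cluster points of $\bar\mu_k^n$ --- which are non-empty by Theorem~\ref{th:tight} --- then lie, with probability $\to1$ as $n\to\infty$, within any prescribed $\varepsilon>0$ (in $W_2$) of the set of invariant measures of $\Phi$; equivalently $\bar{\mathscr L}^n$ approaches that set in Hausdorff--$W_2$ distance, in probability.

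It remains to identify the invariant measures of $\Phi$ with $\{\pi\}$. Along $\Phi$ the relative entropy dissipates as $\tfrac{\dr}{\dr t}\KL{\mu_t}{\pi}=-\IS{\mu_t}{\pi}-\lambda\,\I{\mu_t}{\pi}\le-\lambda\,\I{\mu_t}{\pi}$, where $\lambda>0$ makes the right-hand side vanish only at $\mu_t=\pi$; the Langevin part also regularises the flow instantly, so $\KL{\mu_t}{\pi}$ is finite and, over the tight family of starting points above, uniformly bounded for each $t>0$. Hence $\int_{t_0}^{\infty}\I{\mu_s}{\pi}\,\dr s<\infty$ and $\tfrac1T\int_0^T\I{\mu_s}{\pi}\,\dr s\to0$; since $\mu\mapsto\I{\mu}{\pi}$ is convex under linear interpolation of measures (the term $\int\|\nabla\mu\|^2/\mu$ is a perspective function, the rest is linear in $\mu$), Jensen gives $\I{\tfrac1T\int_0^T\mu_s\,\dr s}{\pi}\le\tfrac1T\int_0^T\I{\mu_s}{\pi}\,\dr s\to0$. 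Finally $\pi$ is the unique invariant measure of $\Phi$, and via the Logarithmic Sobolev Inequality --- available here under the confinement and Hessian bounds of Assumption~\ref{hyp:stab}, cf.~\cite{villani2008optimal,vempala2019rapid} --- ``$\I{\cdot}{\pi}$ small'' upgrades to ``$W_2(\cdot,\pi)$ small'', Lemma~\ref{lem:stab} ruling out escape of mass. Plugging this in and letting $\varepsilon\downarrow0$ gives $\bar{\mathscr L}^n\xrightarrow[n\to\infty]{\bP}\pi$.

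The main obstacle is Proposition~\ref{prop:MinfV2}: that the random, finite-$n$ trajectory shadows the deterministic semiflow on each fixed window with an error controlled \emph{uniformly in the (arbitrarily large) starting time}. This requires quantitative propagation-of-chaos estimates on $[0,T]$ whose constants are independent of the starting time --- from the moment bound of Lemma~\ref{lem:stab}, the H\"older regularity of $K$ and $\nabla_2K$ (Assumption~\ref{hyp:holder}), and the one-sided control of $F$ (Assumption~\ref{hyp:stab}) --- together with control of the martingale part generated by the Gaussian increments and of the $O(\gamma_k)$ discretisation error. A secondary, structural point --- and the reason the statement concerns the \emph{averaged} measures $\bar\mu_k^n$ rather than $\mu_k^n$ --- is that $\KL{\cdot}{\pi}$ is only lower semicontinuous and equals $+\infty$ on every discrete measure, so a pointwise Lyapunov-descent argument on the iterates themselves is unavailable; working through the time average and the integrated dissipation $\int\I{\mu_t}{\pi}\,\dr t$ is what makes the entropy method applicable.
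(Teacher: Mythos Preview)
Your overall strategy---embed in continuous time, shadow the mean-field dynamics, identify the long-time limit---is in the right spirit, but there is a genuine gap in the final identification step. You invoke the Logarithmic Sobolev Inequality to pass from ``$\I{\cdot}{\pi}$ small'' to ``$W_2(\cdot,\pi)$ small'', asserting that LSI is ``available here under the confinement and Hessian bounds of Assumption~\ref{hyp:stab}''. It is not: a bounded Hessian together with quadratic growth of $F$ does not imply LSI (a smooth double-well potential with bounded second derivative already violates it). In the paper LSI is a separate hypothesis, Assumption~\ref{hyp:LSI}, and it is \emph{not} among the assumptions of Theorem~\ref{th:ergo}; it is invoked only for the non-averaged Theorem~\ref{th:nonergo}. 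As written, your argument therefore proves the wrong theorem.

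The paper sidesteps LSI by working one level up, on process-measures in $\cP_2(\cC)$ rather than on marginals in $\cP_2(\bR^d)$. Proposition~\ref{prop:MinfV2} is not an asymptotic-pseudo-trajectory statement for a semiflow on $\cP_2(\bR^d)$: it says that every distributional limit $M$ of the occupation measures $M_t^n=\tfrac1t\int_0^t\delta_{m_s^n}\,\dr s$ is supported on the set $\sV_2$ of McKean--Vlasov \emph{path laws}. Because $M$ is also invariant under the time-shift on $\cP(\cC)$, Poincar\'e recurrence forces $M$ to concentrate on \emph{recurrent} $\rho\in\sV_2$, and the Lyapunov identity of Proposition~\ref{prop:Lya} then shows that any recurrent $\rho$ has all its marginals equal to $\pi$. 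This gives directly that $\tfrac1{\tau_k}\sum_{i\le k}\gamma_i\,W_2(\mu_i^n,\pi)^2\to 0$ in probability as $(k,n)\to(\infty,\infty)$; the passage to $\bar\mu_k^n$ is then Jensen applied to the convex functional $\mu\mapsto W_2(\mu,\pi)^2$---no Fisher information, no LSI. Your Jensen step on $\I{\cdot}{\pi}$ is correct in isolation, but it leaves you one functional inequality short of the conclusion; the recurrence argument at the path level followed by Jensen on $W_2^2$ is precisely what buys the result under the stated hypotheses.
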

The motivation for studying the limit set \(\bar{\mathscr{L}}^n\) of the \emph{averaged} measure \(\bar\mu_k^n\) is technical. 
However, the limit set \(\mathscr{L}^n\) of the (non-averaged) empirical measure \(\mu_k^n\) can also be characterized, provided an additional assumption on the target density is met.
\begin{assumption}\label{hyp:LSI}
    The distribution $\pi$ satisfies the Logarithmic Sobolev Inequality for a constant $\alpha >0$.
\end{assumption}
\begin{theorem}\label{th:nonergo}
    Let Assumptions~\ref{hyp:algo}
,~\ref{hyp:stab},~\ref{hyp:holder} and~\ref{hyp:LSI} hold. Assume $\lambda>0$. Then,
$$
{\mathscr{L}}^n\xrightarrow[n\to\infty]{\bP}\pi\,.
$$
\end{theorem}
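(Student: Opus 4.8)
The plan is to read Theorem~\ref{th:nonergo} as the strengthening of Theorem~\ref{th:ergo} from the \emph{averaged} empirical measure $\bar\mu^n_k$ to the iterates $\mu^n_k$ themselves, the sole purpose of the extra Assumption~\ref{hyp:LSI} being to turn $\pi$ into a \emph{globally, uniformly exponentially stable} equilibrium of the McKean--Vlasov semiflow $\Phi$ that, by Proposition~\ref{prop:MinfV2}, governs the trajectories of noisy SVGD. Note that $\pi$ is \emph{already} the unique equilibrium of $\Phi$ without any LSI: the velocity field attached to a measure $m$ is $-(P_m+\lambda\,\mathrm{Id})\nabla\log(m/\pi)$, and since $P_m\succeq 0$ and $\lambda>0$ the operator $P_m+\lambda\,\mathrm{Id}$ is invertible, so the velocity vanishes only when $m=\pi$; this is the mechanism behind Theorem~\ref{th:ergo}. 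What the LSI adds is a \emph{quantitative} rate of attraction to $\pi$, uniform over bounded sets of initial measures, which is exactly what converts the occupation-measure information underlying Theorem~\ref{th:ergo} into convergence of the iterates.

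\emph{Step 1 (exponential contraction of $\Phi$ toward $\pi$).} Along $\Phi$, whose time-marginals solve $\partial_t m_t = \mathrm{div}\big(m_t(P_{m_t}+\lambda\,\mathrm{Id})\nabla\log\tfrac{m_t}{\pi}\big)$, I would first derive the free-energy dissipation identity
\[
\frac{\dr}{\dr t}\KL{m_t}{\pi}
= -\int\Big\langle \nabla\log\tfrac{m_t}{\pi},\;(P_{m_t}+\lambda\,\mathrm{Id})\nabla\log\tfrac{m_t}{\pi}\Big\rangle\,\dr m_t
= -\IS{m_t}{\pi} - \lambda\,\I{m_t}{\pi} \le -\lambda\,\I{m_t}{\pi},
\]
using that the kernel integral operator $P_{m_t}$ is positive semidefinite. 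Assumption~\ref{hyp:LSI} gives $\I{m_t}{\pi}\ge 2\alpha\,\KL{m_t}{\pi}$, so Gr\"onwall yields $\KL{m_t}{\pi}\le e^{-2\lambda\alpha t}\KL{m_0}{\pi}$, and Talagrand's $T_2$ inequality (implied by the LSI) gives $W_2(m_t,\pi)\le\sqrt{2/\alpha}\,e^{-\lambda\alpha t}\KL{m_0}{\pi}^{1/2}$. Moreover, because the Langevin part of $\Phi$ regularizes the law instantaneously --- with $\norm{H_F}_{op}\le C$ controlling the curvature of the Langevin drift and the SVGD drift bounded once the second moment is (Assumption~\ref{hyp:stab}) --- one has $\KL{m_{t_0}}{\pi}<\infty$ for every $t_0>0$, with a bound depending only on $\int\|x\|^2\,\dr m_0$. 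Combining these, for every $M,\varepsilon>0$ there is a finite horizon $T(M,\varepsilon)$ such that $W_2(m_t,\pi)\le\varepsilon$ for all $t\ge T(M,\varepsilon)$, uniformly over $m_0$ with $\int\|x\|^2\,\dr m_0\le M$. This uniformity is essential: an empirical measure has infinite relative entropy, so all quantitative mileage must come from the dynamics after a short smoothing time.

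\emph{Step 2 (back to the iterates).} Fix $\varepsilon>0$. By Lemma~\ref{lem:stab} and Theorem~\ref{th:tight}, the elements of $\mathscr{L}^n$ and the iterates $\mu^n_k$ have second moments bounded by some $M$ uniformly in $n$ (in probability/expectation), so it suffices to bound $\limsup_{k\to\infty} W_2(\mu^n_k,\pi)$, with probability tending to $1$ as $n\to\infty$, by a quantity that vanishes with $n$. I would argue by contradiction: if not, there are $n_\ell\to\infty$ and $\nu_\ell\in\mathscr{L}^{n_\ell}$ with $\bP(W_2(\nu_\ell,\pi)>\varepsilon)$ bounded below, so (in distribution, with non-vanishing probability) $\mu^{n_\ell}_k$ returns to within $\varepsilon$ of $\nu_\ell$ --- hence $\varepsilon$-far from $\pi$ --- infinitely often, at arbitrarily late times. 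On the other hand, the free-energy machinery behind Theorem~\ref{th:ergo} (which already handles the infinite relative entropy of empiricals via the Langevin smoothing) shows that the time-averaged McKean--Vlasov dissipation along the trajectory tends to $0$ in the joint limit; since by the LSI this dissipation dominates $2\lambda\alpha$ times a (smoothed) relative entropy, there exist late times at which $\mu^{n_\ell}_k$ is genuinely $W_2$-close to $\pi$. Starting the comparison at such a time and cutting the subsequent trajectory into consecutive windows of $\tau$-length $T(M,\varepsilon)$, Proposition~\ref{prop:MinfV2} applied on each window together with Step 1 shows that the endpoint distances to $\pi$ cannot grow --- after each window one lands within $\varepsilon$ plus the McKean--Vlasov approximation error, which tends to $0$ --- and, once the relative entropy of the smoothed iterate is small, it stays small because it is (approximately) non-increasing along $\Phi$, so $W_2(\mu^{n_\ell}_k,\pi)$ is small for all subsequent $k$. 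This contradicts the infinitely-often return $\varepsilon$-far from $\pi$; letting $\varepsilon\to0$ gives the claim.

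\emph{Main obstacle.} The hard part is the whole transfer in Step 2 from the averaged statement to the per-iterate statement, i.e.\ showing that $\mathscr{L}^n$ lies, up to an error vanishing with $n$, in the basin of attraction of $\pi$ for $\Phi$. Two points are delicate. First, ``$\bar\mu^n_k$ close to $\pi$'' does not by itself produce times at which $\mu^n_k$ is close to $\pi$ --- averages can be close through cancellation --- so one genuinely needs the coercivity of the dissipation functional, and the LSI is what makes the implication ``small dissipation $\Rightarrow$ $W_2$-close to $\pi$'' uniform on bounded-moment sets. Second, the McKean--Vlasov comparison of Proposition~\ref{prop:MinfV2} is valid only on bounded time windows and only in the joint limit $(k,n)\to(\infty,\infty)$, so the trapping argument requires a patching/restart scheme in which the comparison is re-initialized at each window endpoint and the errors are shown not to accumulate (they do not, precisely because $\Phi$ contracts and Step 1 is uniform), all while tracking relative entropy --- which is merely lower semicontinuous --- through the discretization and the short initial smoothing time $t_0>0$ needed to make the free-energy arguments available for empirical measures.
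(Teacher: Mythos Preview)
Your Step~1 is essentially the paper's Proposition~\ref{prop:contraction}: the dissipation identity of Proposition~\ref{prop:Lya}, the LSI, Gr\"onwall, and Talagrand's $T_2$ combine to give exponential $W_2$-contraction to $\pi$, uniformly over compact subsets of $\sV_2$ (the instantaneous regularization you invoke is Lemma~\ref{lem:density}, which makes $\KL{\rho_{t_1}}{\pi}$ finite and bounded on compacts for any $t_1>0$). This part is correct and matches the paper.

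Step~2 is where your argument has a genuine gap, and where the paper takes a much shorter route. Your patching scheme invokes ``Proposition~\ref{prop:MinfV2} applied on each window'', but that proposition is a statement about distributional limits of the \emph{occupation measures} $M^n_t=\tfrac1t\int_0^t\delta_{m^n_s}\,\dr s$ in $\cP(\cP_2(\cC))$; it does \emph{not} say that on a fixed window $[\tau_k,\tau_k+T]$ the interpolated trajectory is $W_2$-close to the McKean--Vlasov flow started from $\mu^n_k$. Finite-horizon propagation-of-chaos estimates of that type exist, but (as the paper notes in the introduction) they are only valid for $k$ small relative to $n$, precisely the regime you cannot be in when restarting at arbitrarily late times. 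Without such a uniform shadowing estimate, the restart has no quantitative input to feed into the contraction, and the claim ``errors do not accumulate because $\Phi$ contracts'' is circular: you need the discrete iterate at the end of each window to sit in a set where Step~1 applies, which is what you are trying to establish. You correctly flag this as the main obstacle; the point is that the paper's approach removes the obstacle rather than overcoming it.

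The paper instead proves a direct pointwise statement (Proposition~\ref{prop:pw}): for every $(\varphi_n,\psi_n)\to(\infty,\infty)$, $W_2(\mu^{\varphi_n}_{\psi_n},\pi)\to 0$ in probability. The argument stays at the \emph{process} level throughout. The family $(m^n_t)_{t,n}$ is tight in $\cP_2(\cC)$ (Lemma~\ref{lem:tightG}); along any subsequence, $m^{\varphi_n}_{t_n}$ has a distributional limit supported almost surely on $\sV_2$; by Skorokhod's representation one may assume $m^{\varphi_n}_{t_n}\to\rho^\infty\in\sV_2$ almost surely. Now apply the uniform contraction from Step~1 \emph{once}, to the limit: choose a compact $\cK\subset\cP_2(\cC)$ carrying most of the mass and a \emph{single} fixed $T$ with $\sup_{\rho\in\sV_2\cap\cK}W_2(\rho_T,\pi)\le\delta/2$; since $\rho\mapsto\rho_T$ is continuous, $W_2((\pi_0)_\# m^{\varphi_n}_{t_n+T},\pi)$ is eventually small with high probability. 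No restart, no error accumulation, no need to locate ``good times'' via Theorem~\ref{th:ergo}. Theorem~\ref{th:nonergo} then follows from Proposition~\ref{prop:pw} by exactly the same short contradiction argument used for Theorem~\ref{th:ergo}, with $\mu^n_k$ in place of $\bar\mu^n_k$.
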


\subsection{Long-time convergence of the empirical measure}

As a consequence of Th.~\ref{th:ergo} and Th.~\ref{th:nonergo} respectively, we can characterize the long-time convergence of the empirical measure of the particles, averaged and non-averaged respectively.
\begin{coro}\label{coro:ergodic}
    Let Assumptions~\ref{hyp:algo}
,~\ref{hyp:stab} and~\ref{hyp:holder} hold. Assume $\lambda>0$. Then, for every $\varepsilon>0$,
\[
\lim_{n\to\infty}\limsup_{k\to\infty} \bP(W_2(\bar \mu_k^n,\pi) >\varepsilon) = 0\,.
\]
If Assumption~\ref{hyp:LSI} moreover holds, the same result holds when $\bar\mu_k^n$ is replaced by $\mu_k^n$.
\end{coro}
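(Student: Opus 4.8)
The corollary should follow as a soft consequence of Theorem~\ref{th:tight} together with Theorem~\ref{th:ergo} (resp. Theorem~\ref{th:nonergo}, which only adds Assumption~\ref{hyp:LSI}): the plan is to translate ``the distributional limit set $\bar{\mathscr{L}}^n$ collapses to $\pi$ as $n\to\infty$'' into the stated bound on $\limsup_{k\to\infty}\bP(W_2(\bar\mu_k^n,\pi)>\varepsilon)$. I would treat the averaged case; the non-averaged one is word-for-word the same after replacing $\bar\mu_k^n,\bar{\mathscr{L}}^n$ and Theorem~\ref{th:ergo} by $\mu_k^n,\mathscr{L}^n$ and Theorem~\ref{th:nonergo}.

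First, fix $n$ and $\varepsilon>0$ and set $\delta_n:=\limsup_{k\to\infty}\bP(W_2(\bar\mu_k^n,\pi)>\varepsilon)$. Pick a subsequence $(k_j)$ along which $\bP(W_2(\bar\mu_{k_j}^n,\pi)>\varepsilon)\to\delta_n$. By Lemma~\ref{lem:stab}, $\sup_{k}\bE\int\|x\|^4\dr\bar\mu_k^n<\infty$, so the laws of the $\bar\mu_k^n$ form a tight family of random elements of the Polish space $(\cP_2(\bR^d),W_2)$ (sublevel sets of $\mu\mapsto\int\|x\|^4\dr\mu$ are relatively $W_2$-compact); Prokhorov's theorem then gives a further subsequence along which $\bar\mu_{k_j}^n$ converges in distribution, in the $W_2$ sense, to some random measure $\nu$ — and since $W_2$-convergence in distribution implies the weaker convergence in distribution on $\cP(\bR^d)$ used to define $\bar{\mathscr{L}}^n$, we have $\nu\in\bar{\mathscr{L}}^n$. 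Because $\mu\mapsto W_2(\mu,\pi)$ is $1$-Lipschitz, hence continuous, on $(\cP_2(\bR^d),W_2)$, the set $\{\mu:W_2(\mu,\pi)\ge\varepsilon\}$ is closed, and the Portmanteau theorem yields
$$
\delta_n=\lim_j\bP\big(W_2(\bar\mu_{k_j}^n,\pi)>\varepsilon\big)\le\limsup_j\bP\big(W_2(\bar\mu_{k_j}^n,\pi)\ge\varepsilon\big)\le\bP\big(W_2(\nu,\pi)\ge\varepsilon\big)\le\bP\Big(\sup_{\nu'\in\bar{\mathscr{L}}^n}W_2(\nu',\pi)\ge\varepsilon\Big),
$$
the last step using $\nu\in\bar{\mathscr{L}}^n$ and the measurability convention for the supremum implicit in the definition preceding Theorem~\ref{th:ergo}. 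Now let $n\to\infty$: Theorem~\ref{th:ergo} is precisely the statement that $\bP\big(\sup_{\nu'\in\bar{\mathscr{L}}^n}W_2(\nu',\pi)>\varepsilon'\big)\to0$ for all $\varepsilon'>0$, so taking $\varepsilon'=\varepsilon/2$ and using $\{\sup\ge\varepsilon\}\subseteq\{\sup>\varepsilon/2\}$ gives $\limsup_n\delta_n=0$, hence $\lim_n\delta_n=0$ since $\delta_n\ge0$. This is the first assertion; the second is identical via Theorem~\ref{th:nonergo}.

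I do not expect a genuine obstacle: the content of the corollary is entirely contained in Theorems~\ref{th:ergo}/\ref{th:nonergo}, and the argument above is pure bookkeeping. The only points that need a little care are (i) upgrading weak tightness to tightness in the Wasserstein space $(\cP_2(\bR^d),W_2)$ — needed so that the continuous-mapping and Portmanteau steps apply to $W_2(\cdot,\pi)$, which is $W_2$-continuous but not weakly continuous, and this is exactly where Lemma~\ref{lem:stab} enters; (ii) passing from the open event $\{>\varepsilon\}$ to the closed event $\{\ge\varepsilon\}$ before invoking Portmanteau; and (iii) keeping the meaning of $\sup_{\nu'\in\bar{\mathscr{L}}^n}W_2(\nu',\pi)$ consistent with the convention used in Theorem~\ref{th:ergo}.
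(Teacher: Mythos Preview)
Your proof is correct and follows essentially the same route as the paper's: extract a convergent subsequence via tightness, apply Portmanteau to the closed set $\{W_2(\cdot,\pi)\ge\varepsilon\}$, and conclude by Theorem~\ref{th:ergo} (resp.~\ref{th:nonergo}). The paper wraps this in a contradiction argument while you proceed directly, and you are more explicit about upgrading tightness to the Wasserstein topology $(\cP_2(\bR^d),W_2)$ via the fourth-moment bound so that $W_2(\cdot,\pi)$ is genuinely continuous---a point the paper leaves implicit in its application of Portmanteau.
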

 Since the convergence in the regime $\displaystyle \lim_{k \to \infty} \limsup_{n \to \infty}$ can be deduced from the existing works mentioned above, Cor.~\ref{coro:ergodic} implies that $\displaystyle \lim_{n \to \infty}$ and $\displaystyle \lim_{k \to \infty}$ can be exchanged.

\section{Overview of the convergence proof and dynamical behavior of noisy SVGD}
\label{sec:proof}

The method used to prove our main result involves studying the convergence of the particles at the level of stochastic processes.

\subsection{Interpolated process}
We consider for each $i\in[n]$ the random 
continuous-time process
$\bar X^{i,n} : [0,\infty) \to \bR^d, t\mapsto \bar X_t^{i,n}$ defined as the
piecewise linear interpolation of the particles $(X_k^{i,n})_k$. Specifically,
writing 
$
\tau_k\eqdef \sum_{j=1}^k\gamma_j 
$
, for each $k\in\bN$, we define: 
\begin{equation*}
\label{eq:interp} 
\forall t \in [\tau_k, \tau_{k+1}), \quad 
\bar X^{i,n}_t := 
X_k^{i,n} + \frac{t - \tau_k}{\gamma_{k+1}} 
 \left( X_{k+1}^{i,n}-X_k^{i,n} \right) . 
\end{equation*} 
The interpolated processes $\bar X^{i,n}$, for $i \in [n]$, are elements of the set
$\cC$ of continuous functions on $[0,\infty)\to\bR^d$.
Rather than solely examining the empirical measure of the particles $X_k^{i,n}$, our approach focuses on analyzing the empirical measure of the interpolated processes $\bar X^{i,n}$ across the entire positive real line. Define:
\begin{equation*}
  \label{eq:m}
  m^n_t := \frac 1n\sum_{i=1}^n \delta_{\bar X^{i,n}_{t+\cdot}}\,,
\end{equation*}
for each \(n\) and \(t\). Note that $m_t^n$ is a random variable on $\cP_2(\cC))$. The empirical measure $\mu_k^n$ of the discrete particles can be deduced from $m_t^n$ by marginalization, which is why we focus on $m_t^n$ from now on.

\subsection{McKean-Vlasov distributions}

For a fixed $n$, the particles $X_k^{i,n}$, for $i\in [n]$, can be interpreted as an Euler discretization scheme of a stochastic differential equation involving $n$ continuous-time particles. As the discretization step $\gamma_k$ tends to zero, the interpolated processes eventually share the same behavior as the continuous-time particles as $k$ tends to infinity.
Moreover, in the population limit where $n$ is large, any of the continuous-time particles coincides, in law, with the solution to a McKean-Vlasov equation, as defined below. This phenomenon is known as the propagation of chaos. We refer to \cite{Chaintron_2022} for a detailed exposition.
\begin{definition}
  \label{def:V}
   We say that a measure $\rho\in\cP_2(\cC)$ is a McKean-Vlasov distribution, if it coincides with the pathwise law of a weak solution \((X_t)_{t\geq 0}\) to the nonlinear Stochastic Differential Equation (SDE)
   \[
dX_t = -\int \left( K(X_t,y) \nabla F(y) - \nabla_2 K(X_t,y) \right) d\rho_t(y) \, dt - \lambda \nabla F(X_t) \, dt + \sqrt{2\lambda} \, dW_t,
\]
where $(W_t)_{t\ge 0}$ is a standard Brownian motion.
Denote by $\sV_2$ the set of  McKean-Vlasov distributions.
\end{definition}

\subsection{Limit measures of noisy SVGD are McKean-Vlasov distributions}

It remains to explain in which sense, the empirical measures $m_t^n$ converge to a McKean-Vlasov distribution as $(t,n)\to(\infty,\infty)$. The question requires the introduction of the following measure:
$$
M_t^n\eqdef\frac 1t\int_0^t \delta_{m^n_s} \dr s\,.
$$
To summarize, we introduced the following of random variables: (process level) $\bar X^{i,n}$ is a r.v. on $\cC$; (process-measure level) $m_t^n$ is a r.v. on $\cP_2(\cC)$; (process-measure-measure level) $M_t^n$ is a r.v. on $\cP(\cP_2(\cC))$.
As a consequence of Lem.~\ref{lem:stab}, we obtain the following result.
\begin{prop}\label{prop:tightm}
   Let Assumptions~\ref{hyp:algo} and~\ref{hyp:stab} be satisfied. Assume $\lambda >0$. For every $n\in\bN^*$, the sequence of random variables $(M^n_t)_t$ is tight.
\end{prop}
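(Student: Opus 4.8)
The goal is to show that $(M_t^n)_t$ is tight as a sequence of random variables on $\cP(\cP_2(\cC))$, for each fixed $n$. The plan is to exploit the hierarchical structure: tightness at the level of $\cP(\cP_2(\cC))$ can be reduced, via a standard criterion for tightness of laws of random measures, to a uniform-in-$t$ (second) moment bound and a tightness statement at the underlying level $\cP_2(\cC)$. Concretely, I would first recall that a family of random probability measures on a Polish space $S$ (here $S=\cP_2(\cC)$, itself Polish) is tight provided their mean measures form a tight family in $\cP(S)$; and, to upgrade tightness in $\cP(S_{\mathrm{weak}})$ to tightness in $\cP_2(\cC)$ (which carries the stronger Wasserstein topology), one additionally needs a uniform bound on $\int \big(\int \sup_{s\le T}\|f(s)\|^2 \, d\nu(f)\big)\, dM_t^n(\nu)$ for every $T>0$, i.e. a uniform-in-$t$ control of a Wasserstein-type moment. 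All of these quantities are, after unwinding the definitions of $M_t^n$ and $m_s^n$, time-averages of empirical path functionals of the $n$ interpolated processes $\bar X^{i,n}$.

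The second step is therefore to translate the required bounds into statements about the interpolated trajectories. By definition, $M_t^n = \frac1t\int_0^t \delta_{m_s^n}\,ds$ with $m_s^n = \frac1n\sum_{i=1}^n \delta_{\bar X^{i,n}_{s+\cdot}}$, so for any bounded continuous (or nonnegative measurable) $\Phi$ on $\cC$,
\[
\bE \int \Big( \int \Phi \, d\nu \Big)\, dM_t^n(\nu) = \frac1t\int_0^t \frac1n\sum_{i=1}^n \bE\big[\Phi(\bar X^{i,n}_{s+\cdot})\big]\, ds\,.
\]
Taking $\Phi(f) = \sup_{r\le T}\|f(r)\|^2$ reduces the moment bound to controlling $\sup_{s\ge 0}\bE\big[\sup_{r\in[s,s+T]}\|\bar X^{i,n}_r\|^2\big]$ uniformly in $s$ (and in $i$, by exchangeability, or just over the finite set $i\in[n]$). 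Since $\bar X^{i,n}$ is the piecewise-linear interpolation of $(X_k^{i,n})_k$, on any window $[s,s+T]$ the supremum of $\|\bar X^{i,n}_r\|$ is attained at an interpolation node and is thus bounded by $\max_k \|X_k^{i,n}\|$ over the finitely many indices $k$ with $\tau_k\in[s,s+T]$; this is where Lemma~\ref{lem:stab} enters, giving $\sup_{k,n}\bE\|X_k^{1,n}\|^4 < \infty$. A maximal-type estimate (e.g. controlling $\bE\max_{k\in J}\|X_k^{i,n}\|^2$ on a window $J$ by summing increments, using that $\gamma_k\to 0$ so a window of length $T$ contains $O(T/\min_{k\in J}\gamma_k)$ nodes, combined with the uniform fourth-moment bound on the $X_k^{i,n}$ and the fourth moment of the Gaussian noise) yields a finite bound depending only on $T$ and $n$, uniformly in $s$. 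For equicontinuity/tightness in $\cP_2(\cC)$, one also needs a modulus-of-continuity control: $\|\bar X^{i,n}_{r} - \bar X^{i,n}_{r'}\| $ for $|r-r'|$ small is controlled by the increments $\|X_{k+1}^{i,n}-X_k^{i,n}\|$, which by~\eqref{eq:algo} are of order $\gamma_{k+1}(1+\text{moments})+\sqrt{\gamma_{k+1}}\|\xi\|$; again Assumption~\ref{hyp:stab} and Lemma~\ref{lem:stab} bound these in $L^2$ uniformly, and $\gamma_k\to 0$ makes the resulting modulus vanish — establishing tightness of the family $\{\mathrm{Law}(\bar X^{i,n}_{s+\cdot}) : s\ge 0\}$ in $\cP_2(\cC)$.

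Assembling: the family of mean measures $\{\bE M_t^n : t>0\}$ is a family of averages of the laws $\mathrm{Law}(\bar X^{i,n}_{s+\cdot})$ over $s\in[0,t]$ and $i\in[n]$; since each such law lies in a fixed $W_2$-compact subset of $\cP_2(\cC)$ (by the uniform moment + modulus bounds, via a Prokhorov-type argument on $\cC$ together with uniform integrability of the squared sup-norm coming from the uniform fourth moment), their convex combinations — hence $\bE M_t^n$ — stay in a fixed tight, uniformly $2$-integrable subset of $\cP_2(\cC)$; this makes $\{\bE M_t^n\}$ tight in $\cP(\cP_2(\cC))$, and therefore $(M_t^n)_t$ is tight as a sequence of $\cP(\cP_2(\cC))$-valued random variables. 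I expect the main obstacle to be the bookkeeping needed to pass from the \emph{pointwise-in-$k$} fourth-moment bound of Lemma~\ref{lem:stab} to a \emph{path-level} maximal bound $\bE\sup_{r\in[s,s+T]}\|\bar X^{i,n}_r\|^2 < \infty$ uniform in $s$: a naive union bound over the $O(T/\gamma)$ nodes in a window could blow up as $\gamma_k\to 0$, so one must instead use a genuine maximal inequality (Doob/Burkholder applied to the martingale part of the increments, plus a deterministic Grönwall-type bound on the drift part) to get a bound that does not degrade with the discretization. Handling the noise term carefully — it accumulates like a random walk of $O(T/\gamma)$ steps each of size $\sqrt{\gamma}$, hence $O(\sqrt T)$ — is the technical heart of that step, but it is routine given the moment assumptions.
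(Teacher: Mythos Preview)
Your proposal is correct and follows essentially the same route as the paper: reduce tightness of $(M_t^n)_t$ to tightness of $(m_s^n)_s$ in $\cP_2(\cC)$ via the intensity criterion (Lemma~\ref{lem:meleardGen}), and obtain the latter from uniform-in-$s$ path moment and modulus-of-continuity bounds on $\bar X^{i,n}_{s+\cdot}$ derived from Lemma~\ref{lem:stab}; the paper simply packages these steps as Lemma~\ref{lem:tightG} and defers the details to \cite[Prop.~4 and Lem.~8]{bianchi2024long}. One small bookkeeping slip to fix in your final paragraph: $\bE M_t^n=\bI(M_t^n)$ is a measure on $\cP_2(\cC)$, not an average of the laws $\mathrm{Law}(\bar X^{i,n}_{s+\cdot})\in\cP(\cC)$ --- the clean chain is $(\bI(m_s^n))_s$ relatively compact in $\cP_2(\cC)\Rightarrow(m_s^n)_s$ tight $\Rightarrow(\bI(M_t^n))_t$ tight in $\cP(\cP_2(\cC))\Rightarrow(M_t^n)_t$ tight, and your displayed computation with $\Phi$ is exactly what feeds the first arrow.
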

In particular, Proposition~\ref{prop:tightm} implies Th.~\ref{th:tight} and the fact that the limit set of SVGD is non-empty. It remains to characterize the latter in the doubly asymptotic regime where $t,n$ both tend to infinity. To that end, we study the (distributional) limit points of $(M_t^n)$, as $(t,n)\to (\infty,\infty)$.
The following result is a extracted from~\cite[Lem.~9]{bianchi2024long}. \begin{prop}\label{prop:MinfV2}
   Let Assumptions~\ref{hyp:algo} and~\ref{hyp:stab} be satisfied. Assume $\lambda >0$. Let $M$ be a random measure on $\cP(\cP_2(\cC))$ such that $M_t^n$ converges in distribution to $M$ as $(t,n)\to (\infty,\infty)$, along some subsequence. Then, $M(\sV_2)=1$ a.s.
\end{prop}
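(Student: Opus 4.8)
### Proof Strategy for Proposition~\ref{prop:MinfV2}

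The plan is to characterize the limit measure $M$ via a martingale-problem argument adapted to the doubly asymptotic regime, exactly as in the cited work of Bianchi--Salim, and to identify that martingale problem with the nonlinear SDE of Definition~\ref{def:V}. Since $M$ is a random variable on $\cP(\cP_2(\cC))$, the statement $M(\sV_2) = 1$ a.s.\ means: for $M$-almost every path-measure $\rho$, the measure $\rho$ is the law of a weak solution of the McKean--Vlasov SDE. The key is that this is a \emph{closed} property: $\sV_2$ is a Borel (indeed, closed in an appropriate sense) subset of $\cP_2(\cC)$, characterized by vanishing of countably many martingale-problem functionals, so it suffices to pass those functionals to the limit.

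First I would set up the martingale problem. For a test function $\varphi \in C_b^2(\bR^d)$ and a path-measure $\rho \in \cP_2(\cC)$, define the generator acting along the flow of marginals $(\rho_s)$:
\[
\cL_s^\rho \varphi(x) = -\Big\langle \nabla\varphi(x),\, \textstyle\int\big(K(x,y)\nabla F(y) - \nabla_2 K(x,y)\big)\,\dr\rho_s(y) + \lambda \nabla F(x)\Big\rangle + \lambda\,\Delta\varphi(x)\,.
\]
Then $\rho\in\sV_2$ iff for every such $\varphi$, every $0\le s \le t$, and every bounded continuous functional $\Phi$ depending only on the path up to time $s$, one has
\[
\int \Big(\varphi(\omega_t) - \varphi(\omega_s) - \int_s^t \cL_u^\rho\varphi(\omega_u)\,\dr u\Big)\,\Phi(\omega)\,\dr\rho(\omega) = 0\,,
\]
together with the moment bound $\sup_{u\le T}\int\|\omega_u\|^2\dr\rho(\omega)<\infty$ for all $T$, which follows from Lem.~\ref{lem:stab}. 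Writing $G_{\varphi,s,t,\Phi}(\rho)$ for the left-hand side, the set $\sV_2$ is the intersection over a countable dense family of $(\varphi,s,t,\Phi)$ of the zero sets $\{\rho : G_{\varphi,s,t,\Phi}(\rho) = 0\}$, intersected with the moment constraint; each $G_{\varphi,s,t,\Phi}$ is continuous on $\cP_2(\cC)$ because the coefficients are continuous and controlled (Assumption~\ref{hyp:stab}), so $\sV_2$ is Borel.

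Next I would show that for the \emph{prelimit}, $m_t^n$ nearly satisfies this martingale property on average. The idea is: apply $\varphi$ to the interpolated process $\bar X^{i,n}$, use the recursion~\eqref{eq:algo} together with a Taylor expansion of $\varphi$, and sum over $i$ and average over $s\in[0,t]$ against $\dr M_t^n$; the discrete Gaussian increments $\sqrt{2\lambda\gamma_{k+1}}\,\xi^{i,n}_{k+1}$ produce, after the second-order Taylor term, the Laplacian $\lambda\Delta\varphi$ plus a martingale remainder, while the drift terms reproduce $\cL_u^{m_t^n}\varphi$ up to errors controlled by $\gamma_k \to 0$ and by the Hölder/Lipschitz regularity of the coefficients. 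Concretely, one shows
\[
\bE\Big[\,\Big|\textstyle\int G_{\varphi,s,t,\Phi}(\rho)\,\dr M_t^n(\rho)\Big|\,\Big] \le \varepsilon_{t,n} \xrightarrow[(t,n)\to(\infty,\infty)]{} 0\,,
\]
where $\varepsilon_{t,n}$ collects the discretization error (powers of $\sup_k\gamma_k$), a fluctuation term of order $1/\sqrt n$ from the i.i.d.\ noise and from propagation-of-chaos-type cancellations in the interaction, and a boundary term of order $1/t$ coming from averaging $\varphi(\bar X^{i,n}_{t+\cdot})$ over the window $[0,t]$. The moment estimates of Lem.~\ref{lem:stab} are what make all these error terms finite and uniformly controlled.

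Finally I would pass to the limit. Along the subsequence on which $M_t^n \Rightarrow M$, the functional $\rho \mapsto G_{\varphi,s,t,\Phi}(\rho)$ is continuous and, thanks to the uniform fourth-moment bound of Lem.~\ref{lem:stab}, uniformly integrable against the family $(M_t^n)$; hence $\int G_{\varphi,s,t,\Phi}\,\dr M_t^n \Rightarrow \int G_{\varphi,s,t,\Phi}\,\dr M$, and combined with the prelimit estimate above this forces $\bE\big[\,|\int G_{\varphi,s,t,\Phi}\,\dr M|\,\big] = 0$, i.e.\ $\int G_{\varphi,s,t,\Phi}(\rho)\,\dr M(\rho) = 0$ a.s. Taking a countable intersection over the dense family of test data, together with the a.s.\ moment bound inherited in the limit, yields $M\big(\{\rho : \rho\in\sV_2\}\big) = 1$ a.s. I expect the main obstacle to be the second step: carefully bookkeeping the discretization and fluctuation errors in $\varepsilon_{t,n}$ — in particular handling the interaction term $\frac1n\sum_j K(X_k^{i,n},X_k^{j,n})\nabla F(X_k^{j,n})$, which must be matched to $\int K(\cdot,y)\nabla F(y)\,\dr m_t^n$ with an error that is genuinely small after the time-average, and controlling the second-order Taylor remainder of $\varphi$ against the squared SVGD increment, whose magnitude is only $O(\gamma_{k+1})$ once the $\|\nabla F\|^2$ growth from Assumption~\ref{hyp:stab}\ref{hypenum:dissipation} is absorbed by the stability bound. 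Since this is precisely \cite[Lem.~9]{bianchi2024long}, the cleanest route is to verify that noisy SVGD fits the abstract framework there (correct coefficient regularity, correct noise structure, the stability input) and invoke it.
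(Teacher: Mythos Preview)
Your proposal is correct and aligns with the paper's treatment: the paper does not give its own proof of this proposition but simply extracts it from \cite[Lem.~9]{bianchi2024long}, and your final paragraph lands on exactly that route---verify that noisy SVGD satisfies the standing hypotheses of that abstract result (continuous drift with linear growth, additive Gaussian noise, and the stability input of Lem.~\ref{lem:stab}) and invoke it. Your martingale-problem sketch is a faithful outline of what that cited lemma establishes, so there is nothing to correct.
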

Let us explain the main consequence of this result. Let $f$ be the function defined by $f(\rho)=W_2(\rho,\sV_2)$ for every $\rho\in \cP_2(\bR^d)$. When $M_t^n$ tends to $M$ in distribution along some subsequence, our definition of $M_t^n$ implies that:
$$
\int f dM_t^n = \frac 1t\int_0^t W_2(m_s^n,\sV_2) ds\xrightarrow[]{\mathcal D} \int W_2(\rho,\sV_2) dM(\rho)=0\,,
$$
where the symbol $\xrightarrow[]{\mathcal D}$ stand for convergence in distribution. This shows that, in an ergodic sense, $m_t^n$ converges in probability to the set of McKean-Vlasov distributions, as $(t,n)\to (\infty,\infty)$. 

\subsection{Limit measures of noisy SVGD are time-shift recurrent}

More can be said about the particular McKean-Vlasov distribution in the limit set. For every $\tau>0$, denote by $\Phi_\tau:\cP(\cC)\to\cP(\cC)$ the map which shifts a process-measure by a time $\tau$, namely, $\Phi_\tau(\rho):f\mapsto \int f(x_{\tau+\cdot})d\rho(x)$. Obviously, $\Phi_\tau(m_t^n) = m_{\tau+t}^n$, which in turn implies that, as $t\to\infty$, for every bounded continuous function $G:\cP(\cC)\to\bR$,
$$
\int G(\Phi_\tau(\rho))dM_t^n(\rho) =\frac 1t\int_0^t G(m_{\tau+s}^n)ds \simeq \frac 1t\int_0^t G(m_{s}^n)ds = \int G(\rho) dM_t^n(\rho)\,,
$$
where the precise statement is found in the supplementary (see also \cite[Lem.~10]{bianchi2024long}).
Passing to the limit, this implies that every distributional limit point $M$ of $M_t^n$ is shift-invariant, in the sense that $\int G\circ \Phi_\tau dM = \int G dM$ a.s., for every bounded continuous $G$ and every $\tau>0$. Therefore, by the Poincar\'e recurrence theorem, $M$ is supported by the set of \emph{recurrent} McKean-Vlasov distributions, that is, the set of measures $\rho\in \sV_2$ for which there exists a sequence $\tau_l\to\infty$, such that $\rho = \lim \Phi_{\tau_l}(\rho)$. 

\subsection{Recurrent McKean-Vlasov distributions coincide with the target}

For any process-measure $\rho\in \cP(\cC)$, we denote by $(\rho_t:t\geq 0)$ its marginals in $\cP(\bR^d)$.
\begin{prop}\label{prop:Lya}
    Let Assumption~\ref{hyp:stab} and~\ref{hyp:holder} hold. Assume $\lambda>0$. Let $t_2>t_1>0$.
   For every $\rho\in\sV_2$ and every $t\in [t_1,t_2]$, $\rho_t$ admits a differentiable density w.r.t. the Lebesgue measure. Moreover,
    \begin{equation}
        \label{eq:KLlyap}
          \KL{\rho_{t_2}}{\pi}-\KL{\rho_{t_1}}{\pi}  
             =  -\int_{t_1}^{t_2}\p{\IS{\rho_t}{\pi} + \lambda \I{\rho_t}{\pi}} \dr t\,.
    \end{equation}
\end{prop}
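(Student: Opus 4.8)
The plan is to recognize that, for $\rho \in \sV_2$, the marginal flow $(\rho_t)$ solves a (nonlinear) Fokker--Planck equation, and then to differentiate $t \mapsto \KL{\rho_t}{\pi}$ along this flow. Writing the SDE of Definition~\ref{def:V} in the form $dX_t = b_t(X_t)\,dt + \sqrt{2\lambda}\,dW_t$ with the (time-dependent, $\rho$-dependent) drift
\[
b_t(x) = -\int\!\big(K(x,y)\nabla F(y) - \nabla_2 K(x,y)\big)\,d\rho_t(y) \;-\; \lambda \nabla F(x),
\]
the law $\rho_t$ satisfies, in the weak sense, $\partial_t \rho_t = -\operatorname{div}(\rho_t\, b_t) + \lambda \Delta \rho_t$. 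The first step is to establish the regularity needed to make this rigorous and to justify the computations: I would show $\rho_t$ has a smooth positive density on $[t_1,t_2]$ for $t_1>0$ using parabolic regularity / hypoellipticity (the Langevin noise $\sqrt{2\lambda}\,dW_t$ with $\lambda>0$ instantly regularizes; Assumptions~\ref{hyp:stab} and~\ref{hyp:holder} give enough regularity and growth control on $b_t$ — note $\nabla F$ is Lipschitz by the Hessian bound, $K$ and $\nabla_2 K$ are bounded in the relevant norms and $\beta$-Hölder, and Lemma~\ref{lem:stab}-type moment bounds control $\rho_t$). These bounds also ensure the various integrals below are finite.

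The second step is the differentiation itself. Using $\partial_t\rho_t = -\operatorname{div}(\rho_t b_t) + \lambda\Delta\rho_t = -\operatorname{div}\!\big(\rho_t(b_t - \lambda\nabla\log\rho_t)\big)$, write the effective velocity field as $v_t := b_t - \lambda\nabla\log\rho_t$, so that $\partial_t\rho_t = -\operatorname{div}(\rho_t v_t)$. Then the standard computation gives
\[
\frac{d}{dt}\KL{\rho_t}{\pi} = \int \big\langle \nabla\log\tfrac{\rho_t}{\pi},\, v_t\big\rangle\, d\rho_t
= \int \big\langle \nabla\log\tfrac{\rho_t}{\pi},\, b_t - \lambda\nabla\log\rho_t\big\rangle\, d\rho_t.
\]
Now decompose $b_t = -\lambda\nabla F - P_{\rho_t}\nabla F + (\text{repulsion term})$ and observe that $b_t + \lambda\nabla\log\rho_t = -\lambda\nabla\log\tfrac{\rho_t}{\pi} - P_{\rho_t}\nabla\log\tfrac{\rho_t}{\pi}$: indeed the Stein drift $-\int(K(x,y)\nabla F(y) - \nabla_2 K(x,y))\,d\rho_t(y)$ equals $-P_{\rho_t}\nabla\log\tfrac{\rho_t}{\pi}$ after an integration by parts moving $\nabla_2 K$ onto $\rho_t$ (this is the classical SVGD identity, using $\nabla_y \log\pi(y) = -\nabla F(y)$ and $\nabla_y\log\rho_t$ from the $\nabla_2 K$ term). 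Substituting,
\[
\frac{d}{dt}\KL{\rho_t}{\pi} = -\lambda\!\int\!\big\|\nabla\log\tfrac{\rho_t}{\pi}\big\|^2 d\rho_t \;-\; \big\langle \nabla\log\tfrac{\rho_t}{\pi},\, P_{\rho_t}\nabla\log\tfrac{\rho_t}{\pi}\big\rangle_{L^2(\rho_t)} = -\lambda\,\I{\rho_t}{\pi} - \IS{\rho_t}{\pi},
\]
where the second equality uses the adjoint relation between $P_{\rho_t}:L^2(\rho_t)\to\cH$ and its $L^2(\rho_t)$-pairing together with $\|P_{\rho_t}g\|_\cH^2 = \langle g, P_{\rho_t} g\rangle_{L^2(\rho_t)}$ applied to $g = \nabla\log\tfrac{\rho_t}{\pi}$. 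Integrating over $[t_1,t_2]$ yields \eqref{eq:KLlyap}.

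The main obstacle is the regularity/integrability bookkeeping in the first step: one must justify that $\rho_t$ has a differentiable (indeed locally Lipschitz, strictly positive) density for $t>0$, that $t\mapsto\KL{\rho_t}{\pi}$ is absolutely continuous with the claimed derivative, and that all the manipulated integrals (in particular $\I{\rho_t}{\pi}$ and the boundary entropy terms) are finite and the integration by parts producing the SVGD identity is valid — this requires combining hypoellipticity of the Langevin part with the dissipativity in Assumption~\ref{hyp:stab}\,\ref{hypenum:dissipation} (which controls $\int F\,d\rho_t$, hence $\KL{\rho_t}{\pi}$ from above, and forces $\I{\rho_t}{\pi}<\infty$ a.e.). Once this technical groundwork is in place, the chain-rule computation and the two algebraic identities (Stein integration by parts, and the $P_{\rho_t}$ adjoint identity) are routine. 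I would likely isolate the regularity claims as a separate lemma invoking standard results on McKean--Vlasov SDEs with Lipschitz-in-law bounded interaction and non-degenerate diffusion, e.g. along the lines of \cite{Chaintron_2022}.
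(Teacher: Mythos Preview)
Your proposal is correct and follows essentially the same route as the paper: your Step~2 computation is precisely the paper's ``sketch via Wasserstein calculus'' (chain rule for $\KL{\rho_t}{\pi}$ along the continuity equation, Stein integration-by-parts identity $-\int(K(x,y)\nabla F(y)-\nabla_2 K(x,y))\,d\rho_t(y)=-P_{\rho_t}\nabla\log\tfrac{\rho_t}{\pi}$, and the adjoint identity $\langle g,P_{\rho_t}g\rangle_{L^2(\rho_t)}=\|P_{\rho_t}g\|_\cH^2$), and you correctly flag the regularity/integrability bookkeeping as the real work. The paper discharges that obstacle exactly as you anticipate---a separate density lemma (via the heat-kernel-type estimates of Menozzi et al.\ rather than abstract hypoellipticity) giving $\varrho(t,\cdot)\in C^1$ with two-sided Gaussian bounds---and then makes the chain rule rigorous by a triple regularization (space mollifier $\eta_\varepsilon$, time mollifier $\theta_\delta$, spatial cutoff $\chi_R$) applied to the test function $\psi=(\log\varrho_{\varepsilon,\delta}+F_\varepsilon)\chi_R$ in the weak continuity equation, passing to the limit $\delta\to 0$, $\varepsilon\to 0$, $R\to\infty$.
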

The above proposition shows that the Kullback-Leibler divergence is a Lyapunov function, in the sense that $\KL{\rho_{t_2}}{\pi}\leq \KL{\rho_{t_1}}{\pi}$. The inequality is strict unless the r.h.s. of (\ref{eq:KLlyap}) is zero, which holds when $\rho_t = \pi$ for almost all $t$. This implies that if $\rho$ is a recurrent McKean-Vlasov distribution, its marginals coincide with $\pi$. Therefore, in an ergodic sense, the marginals of the process-measure $m_t^n$ converges in probability to $\pi$, as $(t,n)\to (\infty,\infty)$ (see Prop.~\ref{prop:ergo} in the Appendix).

The last step is to establish Th.~\ref{th:nonergo} under the additional Assumption~\ref{hyp:LSI}. In other words, one should discard the time-averaging. This can be done in the situation where, as $t\to\infty$, the marginal $\rho_t$ of any McKean-Vlasov distribution $\rho\in \sV_2$ converges to $\pi$ uniformly in the initial point $\rho_0$ in a compact set. This can be established using the LSI, as shown by the following result.


\begin{prop}\label{prop:contraction} Let the assumptions of Prop.~\ref{prop:Lya} hold. Moreover, we assume that Assumption~\ref{hyp:LSI} is satisfied with $\alpha>0$ and $\lambda >0$. For any compact set $\cK\subset \cP_2(\cC)$, 
for every $t_2>t_1>0$, there exists a constant $C_{t_1,\cK}>0$ depending on $t_1$ and $\cK$, such that
  \[
   \sup_{\rho\in \sV_2\cap \cK} W_2(\rho_{t_2},\pi) \le C_{t_1,\cK}\ex^{-\alpha\lambda (t_2-t_1)}\,.
  \]
  \end{prop}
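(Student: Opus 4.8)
\emph{Proof strategy.} The plan is to combine the Lyapunov identity of Prop.~\ref{prop:Lya} with the LSI to get exponential contraction of $\KL{\rho_t}{\pi}$, to pass from $\KL{}{}$ to $W_2$ via Talagrand's inequality, and finally to absorb the entropy at the fixed time $t_1$ into the constant $C_{t_1,\cK}$ through a smoothing estimate. Concretely, fix $\rho\in\sV_2$ and set $g(t)\eqdef\KL{\rho_t}{\pi}$. By Prop.~\ref{prop:Lya}, $g$ is absolutely continuous on $(0,\infty)$ with, for a.e.\ $t>0$,
\[
g'(t) = -\IS{\rho_t}{\pi}-\lambda\I{\rho_t}{\pi} \le -\lambda\I{\rho_t}{\pi} \le -2\alpha\lambda\,g(t)\,,
\]
the last inequality being Assumption~\ref{hyp:LSI} applied to $\rho_t\in\cP_2(\bR^d)$. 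Hence $t\mapsto\ex^{2\alpha\lambda t}g(t)$ is non-increasing on $(0,\infty)$, so $\KL{\rho_{t_2}}{\pi}\le\ex^{-2\alpha\lambda(t_2-t_1)}\KL{\rho_{t_1}}{\pi}$ for all $t_2>t_1>0$. Since LSI($\alpha$) implies Talagrand's $T_2(\alpha)$ inequality $W_2(\mu,\pi)^2\le\frac2\alpha\KL{\mu}{\pi}$, combining gives $W_2(\rho_{t_2},\pi) \le \sqrt{\tfrac2\alpha\,\KL{\rho_{t_1}}{\pi}}\;\ex^{-\alpha\lambda(t_2-t_1)}$. It therefore remains to prove the uniform bound $\sup_{\rho\in\sV_2\cap\cK}\KL{\rho_{t_1}}{\pi}<\infty$, after which one sets $C_{t_1,\cK}\eqdef\big(\tfrac2\alpha\sup_{\rho\in\sV_2\cap\cK}\KL{\rho_{t_1}}{\pi}\big)^{1/2}$.

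\emph{Main obstacle: the uniform entropy bound at time $t_1$.} The dissipation identity alone is not enough here, since $\KL{\rho_0}{\pi}$ may be $+\infty$ for $\rho\in\cK$ (e.g.\ $\rho_0$ a Dirac mass), so the monotonicity of $g$ cannot be propagated down to $t=0$. Instead I would exploit the non-degenerate noise $\sqrt{2\lambda}\,\dr W_t$ to regularize. Let $\rho\in\sV_2\cap\cK$ be the law of a weak solution $(X_t)$ to the SDE of Def.~\ref{def:V}, with drift $b_t$. Assumption~\ref{hyp:stab} gives that $K,\nabla_2 K$ are bounded, that $\norm{\nabla F}^2\le C'F+C$ with $F$ of at most quadratic growth (from $\norm{H_F}_{op}\le C$), and, using $\int F\,\dr\rho_t<\infty$, that $\norm{b_t(x)}^2\le a(1+\norm x^2)$ for a constant $a$ independent of $\rho$. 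Since $\cK$ is $W_2$-compact in $\cP_2(\cC)$, it has uniformly integrable squared path-norm, so $m\eqdef\sup_{\rho\in\cK}\bE_\rho[\sup_{s\le t_1}\norm{X_s}^2]<\infty$. Comparing $\rho$ with the law $\mathbb{Q}$ of the Brownian flow $t\mapsto X_0+\sqrt{2\lambda}W_t$ (same initial law $\rho_0$) via Girsanov's theorem — the required integrability following from the linear growth of $b$ and $\bE_\rho\int_0^{t_1}\norm{X_s}^2\,\dr s\le t_1 m$ — yields
\[
\KL{\rho_{t_1}}{\mathbb{Q}_{t_1}} \le \KL{\rho^{t_1}}{\mathbb{Q}^{t_1}} = \frac1{4\lambda}\,\bE_\rho\!\int_0^{t_1}\norm{b_s(X_s)}^2\,\dr s \le \frac{a\,t_1\,(1+m)}{4\lambda}\,,
\]
where the first inequality is data-processing (marginalization) and $\mathbb{Q}_{t_1}=\rho_0*\cN(0,2\lambda t_1 I_d)$ has density $q_{t_1}$ with $\norm{q_{t_1}}_\infty\le(4\pi\lambda t_1)^{-d/2}$. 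Decomposing $\log\tfrac{\dr\rho_{t_1}}{\dr\pi}=\log\tfrac{\dr\rho_{t_1}}{\dr\mathbb{Q}_{t_1}}+\log q_{t_1}+F+\log Z$ (with $\pi\propto\ex^{-F}$, normalizer $Z$) and integrating against $\rho_{t_1}$,
\[
\KL{\rho_{t_1}}{\pi} \le \KL{\rho_{t_1}}{\mathbb{Q}_{t_1}} + \log\norm{q_{t_1}}_\infty + \log Z + \int F\,\dr\rho_{t_1}\,,
\]
and $\int F\,\dr\rho_{t_1}$ is bounded by a quadratic expression in $m$ since $F$ has at most quadratic growth. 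All terms on the right are bounded uniformly over $\rho\in\sV_2\cap\cK$ by a constant depending only on $t_1,d,\lambda,\alpha$ and $\cK$, which is the claimed uniform entropy bound and, via the first paragraph, the proposition.

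\emph{Points to double-check.} That $W_2$-compactness of $\cK\subset\cP_2(\cC)$ entails uniform integrability of $\sup_{s\le t_1}\norm{X_s}^2$ under the laws in $\cK$ (standard for $W_2$-compact subsets of $\cP_2$ over a path space), and that the Girsanov change of measure is licit — Novikov's condition following from the linear-growth bound on $b$ together with finiteness of the path moments on $[0,t_1]$; the latter in turn relies on the quadratic upper bound on $F$ coming from $\norm{H_F}_{op}\le C$, so this use of Assumption~\ref{hyp:stab} should be made explicit.
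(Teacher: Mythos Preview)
Your argument is correct, and the first two steps --- the Gr\"onwall contraction of $\KL{\rho_t}{\pi}$ via Prop.~\ref{prop:Lya} and the LSI, followed by the passage to $W_2$ through Otto--Villani's implication LSI $\Rightarrow$ $T_2$ --- are exactly what the paper does.

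The genuine difference lies in how you obtain the uniform bound $\sup_{\rho\in\sV_2\cap\cK}\KL{\rho_{t_1}}{\pi}<\infty$. The paper does not use Girsanov at all: it invokes Eq.~\eqref{eq:KLbound} of Lem.~\ref{lem:density}, which in turn rests on the two--sided Gaussian heat--kernel estimates of \cite{menozzi2021density} for the (frozen) linear SDE with H\"older drift. Those bounds give $|\log\varrho(t_1,x)|\le C(1+\|x\|^2+\int\|y\|^2\dr\rho_0(y))$ directly, from which the uniform entropy bound follows by integrating against $\rho_{t_1}$ and using $W_2$--compactness of $\cK$. Your route --- bound the path--space entropy against scaled Brownian motion by $\tfrac1{4\lambda}\bE_\rho\int_0^{t_1}\|b_s\|^2\dr s$, then marginalize and split off $\log q_{t_1}+F+\log Z$ --- is more elementary and self--contained, avoiding the external density estimates; the price is the Girsanov verification. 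The paper's approach, by contrast, reuses machinery (Lem.~\ref{lem:density}) already needed to make Prop.~\ref{prop:Lya} rigorous, so it costs nothing extra.

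One technical caution on your ``points to double--check'': Novikov's condition generally fails for drifts of linear growth, so you should not invoke it. What you actually need is only the \emph{inequality} $\KL{\rho^{t_1}}{\mathbb{Q}^{t_1}}\le\tfrac1{4\lambda}\bE_\rho\int_0^{t_1}\|b_s(X_s)\|^2\dr s$, and this follows from a localization/stopping argument (or Bene\v{s}' criterion, or the finite--energy form of F\"ollmer's formula) once $\bE_\rho\int_0^{t_1}\|b_s(X_s)\|^2\dr s<\infty$, which your moment bound $m<\infty$ guarantees. Also note that your constant $a$ in $\|b_t(x)\|^2\le a(1+\|x\|^2)$ is not independent of $\rho$ in the strict sense --- it depends on $\sup_{t\le t_1}\int\|y\|\dr\rho_t(y)$ --- but this is uniformly bounded over $\cK$, which is all you use.
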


\section{Noisy SVGD avoids the particles collapse}
\label{sec:collapse}

The convergence results above show the convergence of noisy SVGD in a doubly asymptotic regime $(k,n) \to (\infty,\infty)$.
These convergence results could be reproduced for the deterministic SVGD algorithm. However, in the case of SVGD, our approach would show the convergence of SVGD to a set that includes the target $\pi$, but can also include Dirac measures at stationary points of $F$. Indeed, the McKean-Vlasov process of SVGD (\textit{i.e.}, the case $\lambda = 0$) is stationary at $\delta_x$ for any $x \in \bR^d$ such that $\nabla F(x) = 0$ and $\nabla_2 K(x,x) = 0$\footnote{On the contrary, every stationary distribution of the McKean-Vlasov process of noisy SVGD (\textit{i.e.}, the case $\lambda > 0$) must have a density w.r.t. Lebesgue thanks to the noise injection.}.


This observation is inline with empirical results showing that the deterministic SVGD algorithm may not converge in high dimension and instead collapse to some Diracs which represent modes of the target distribution~\cite{ba2021understanding,zhuo2018message,d2021annealed}. On the contrary, we showed (Th.~\ref{th:ergo} and~\ref{th:nonergo}) that noisy SVGD converges to the target and, in particular, does not collapse to Dirac measures. In this section, we illustrate this fact experimentally.

Fig.~\ref{fig:images} (see Appendix for larger figures) reproduces an experiment from~\cite{ba2021understanding} on the variance collapse of SVGD. We added our algorithm, noisy SVGD, to the plot. 

\begin{figure}[ht!]
    \centering
    \begin{subfigure}[b]{0.49\textwidth} 
        \centering
        \includegraphics[width=\textwidth]{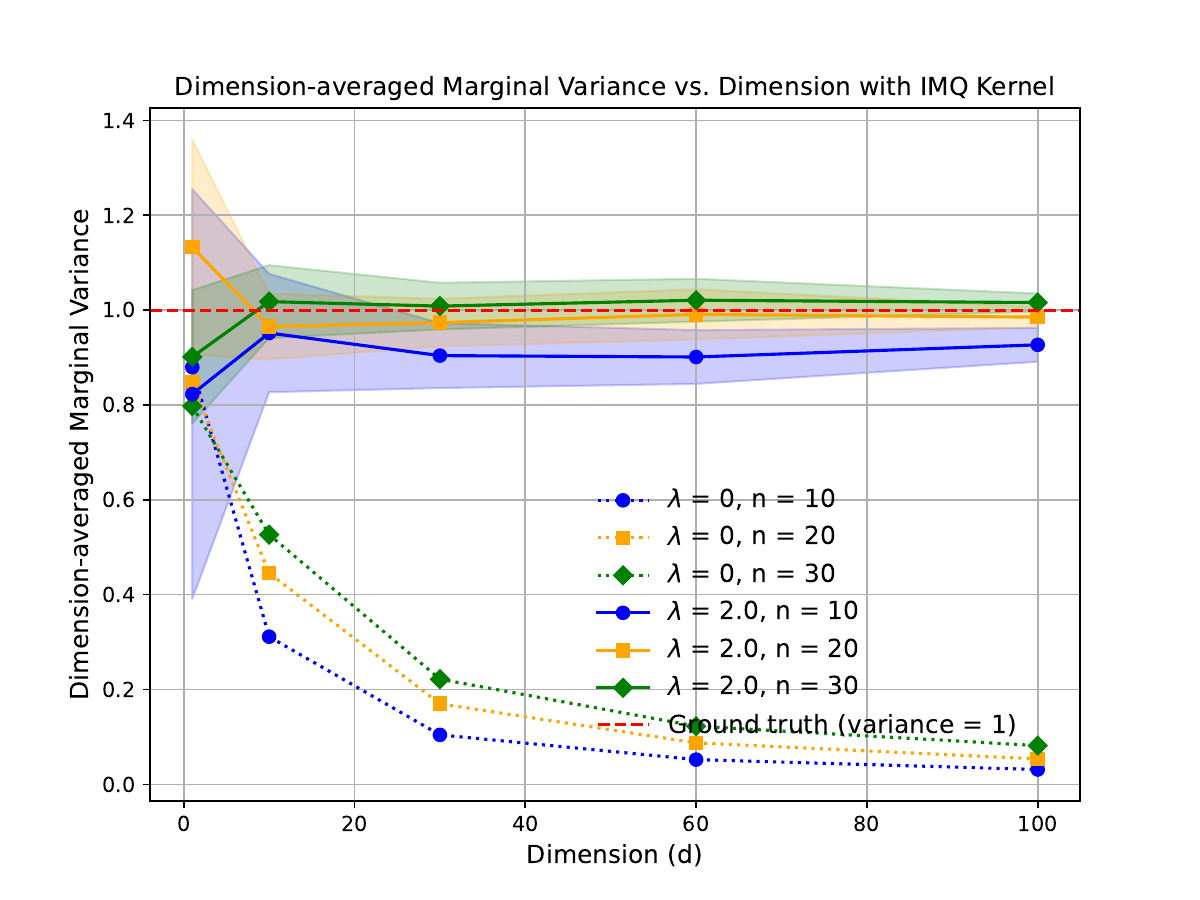} 
        \caption{IMQ kernel}
        \label{fig:image1}
    \end{subfigure}
    \hfill 
    \begin{subfigure}[b]{0.49\textwidth} 
        \centering
        \includegraphics[width=1\textwidth]{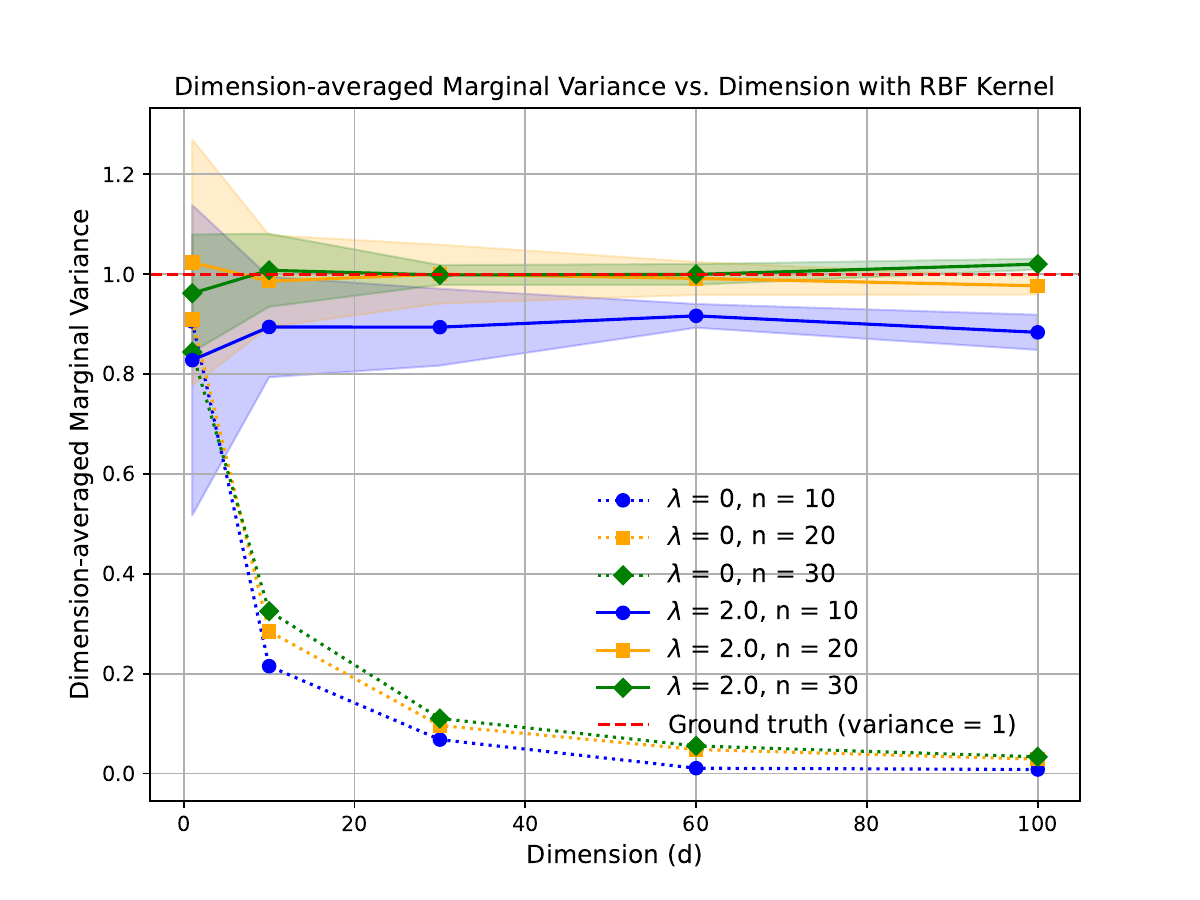} 
        \caption{RBF kernel}
        \label{fig:image2}
    \end{subfigure}
    \caption{Dimension-averaged Marginal Variance of SVGD and noisy SVGD at convergence for sampling from a standard Gaussian.}
    \label{fig:images}
\end{figure}

\textbf{The setup is the following.} We consider the task of sampling from a standard Gaussian with noisy SVGD and SVGD. We use the two most standard kernels for running SVGD: the Radial Basis Function (RBF) kernel, a.k.a. Gaussian kernel $K(x,y) = \exp(-\frac{1}{2}\|x-y\|^2)$ and the Inverse Multi-Quadratic (IMQ) kernel~\cite{gorham2017measuring,kanagawa2022controlling} $K(x,y) = \frac{1}{\sqrt{1+\frac{1}{2}\|x-y\|^2}}$. We simulate noisy SVGD until convergence (\textit{i.e.}, after a large number $k = 200$ of iterations) for different values of the dimension $d$, the number of particles $n$, and the regularization parameter $\lambda$. When $\lambda = 0$, noisy SVGD boils down to the deterministic SVGD. The particles are initialized randomly from a standard Gaussian and the step size is set to $\gamma_k = 10/k$.

Given a probability distribution over $\bR^d$, the Dimension-Averaged Marginal Variance (DAMV) is a statistics of the distribution equal to the average across the $d$ coordinates of the variance of each coordinate. We reproduce an experiment from~\cite{ba2021understanding} where they plotted the DAMV of SVGD after a large number of iterations against the dimension. We added noisy SVGD to the plot, see Fig.~\ref{fig:images}. Since noisy SVGD is random, its DAMV is a random number, therefore we plotted the averaged value of the DAMV over 10 runs and represented the standard deviation of the DAMV in the shaded area behind the curve. Our Python script is available in the Supplementary Material and Fig.~\ref{fig:images} is available in the Appendix in a larger format.   


From Fig.~\ref{fig:images}, two important observations can be made:

\begin{itemize}
\item Since each point in the figure represents a statistical measure (the DAMV) for noisy SVGD after numerous iterations, our theoretical analysis predicts that as $n$ increases, the DAMV values for noisy SVGD should converge to the DAMV of the standard Gaussian, which is $1$. This convergence towards $1$ with increasing $n$ is indeed what we observe in the noisy SVGD data.
\item Contrasting this, SVGD shows a different behavior where its DAMV tends to zero as the dimension increases, as discussed in~\cite{ba2021understanding}. Unlike SVGD, noisy SVGD does not exhibit this variance collapsing behavior.
\end{itemize}

\section{Conclusion}
\label{sec:ccl}
What does a user do? A user sets a finite value for the number $n$ of particles and then runs the algorithm until convergence. Therefore understanding what the algorithm converges to when $n$ is finite is of primary interest. In this work, we provided an understanding of the limit set $\mcL^n$ of noisy SVGD after a large number of iterations. We showed that this limit set is well-defined, and that it approaches the target as $n$ grows. We obtained various conclusions from these results. In particular, noisy SVGD, unlike SVGD, provably avoids collapsing to some modes of the target distribution.

Our work opens the door to several questions regarding the convergence speed of noisy SVGD. First, can we quantify the convergence of noisy SVGD to the set $\mcL^n$? Then, can we quantify the convergence of the set $\mcL^n$ to the target? Finally, how to choose the regularization parameter $\lambda$ and what is its effect on the convergence rate? 

These problems, which are not covered in the literature on SVGD and its variants, would strengthen our understanding of interacting particles systems for sampling, in a regime that matters from a practical perspective.


\bibliographystyle{plain}
\bibliography{bib}

\newpage
\appendix

\part*{Appendix}

\tableofcontents

\clearpage

\newpage
\section{Fig.~\ref{fig:images} in larger format}

\begin{figure}[ht!]
    \centering
    \begin{subfigure}[b]{0.9\textwidth} 
        \centering
        \includegraphics[width=\textwidth]{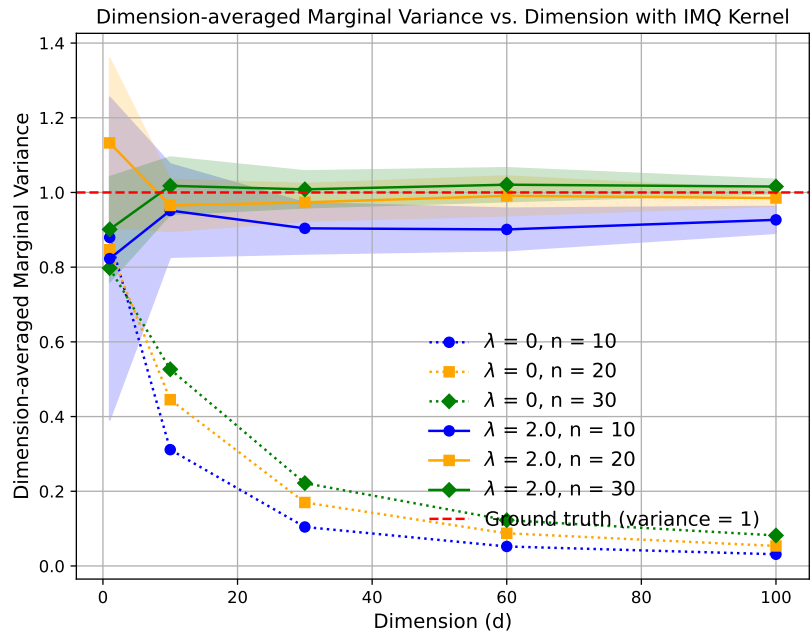} 
        \caption{IMQ kernel}
        \label{fig:image10}
    \end{subfigure}
    \hfill 
    \begin{subfigure}[b]{1\textwidth} 
        \centering
        \includegraphics[width=0.9\textwidth]{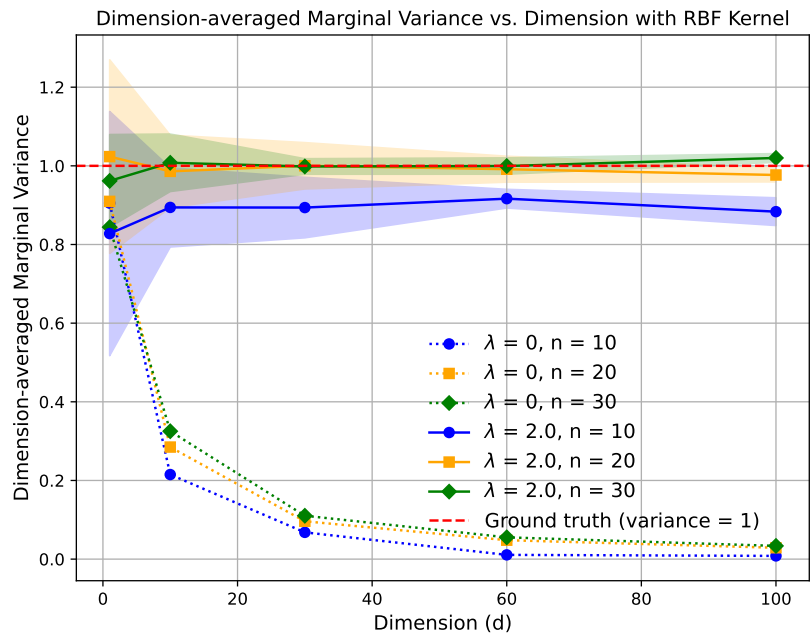} 
        \caption{RBF kernel}
        \label{fig:image20}
    \end{subfigure}
    \caption{Dimension-averaged Marginal Variance of SVGD and noisy SVGD at convergence for sampling from a standard Gaussian.}
    \label{fig:images0}
\end{figure}

\newpage

\section{Notations}

 We denote by $[n]$ the set of integers $\{1,\dots, n\}$.

We denote by $\langle \cdot, \cdot \rangle$ and $\| \cdot \|$
the inner product and the corresponding norm in a Euclidean
space. We use the same notation in an infinite dimensional space.

Let $d\in\bN^*$. For $k\in\bN\cup\{\infty\}$, we denote by $C^k(\mathbb{R}^d,\mathbb{R}^q)$ the
set of functions which are continuously differentiable up to the order $k$.  We
denote by $C_c(\bR^d,\bR)$ the set of $\bR^d\to\bR$ continuous functions with
compact support. Given $p\in \bN^*\cup \{\infty\}$, we denote as
$C_c^p(\bR^d,\bR)$ the set of compactly supported $\bR^d\to\bR$ functions which
are continuously differentiable up to the order $p$. 



The notation $f_\# \mu$ stands for the pushforward of the measure
$\mu$ by the map $f$, that is, $f_\# \mu= \mu\circ f^{-1}$.

For $t\geq 0$, we define the projections $\pi_t$ and $\pi_{[0,t]}$ as 
$\pi_t : (\bR^d)^{[0,\infty)} \to \bR^d, x \mapsto x_t$, and $\pi_{[0,t]} : (\bR^d)^{[0,\infty)} \to (\bR^d)^[0,t], x \mapsto(x_u\, : \, u\in[0,t])$

Define: 
$$
\cP_2(\cC) = \{\rho\in \cP(\cC)\,:\, \forall T>0,\, \int \sup_{t\in[0,T]} \|x_t\|^2 \dr \rho(x)<\infty\}\,.
$$
For every $\rho,\rho'\in\cP_2(\cC)$, we define:
$$
{\mathsf W}_2(\rho,\rho') = \sum_{n=1}^\infty 2^{-n} (1\wedge { W}_2( (\pi_{[0,n]})_\#\rho,(\pi_{[0,n]})_\#\rho'))\,,
$$
where we equipped the space of the $[0,n]\to \bR^d$ continuous function with the uniform norm for every $n\in \bN^*$. 
We equip $\cP_2(\cC)$ with the distance ${\mathsf W}_2$. By \cite[Prop. 1]{bianchi2024long}, $\cP_2(\cC)$ is a Polish space.

For $\rho\in\mathcal{P}_2(\mathcal{C})$, we denote $$\rho_t:=(\pi_t)_\#\rho\,.$$


\section{Proof of Lem.~\ref{lem:stab}}

In this section, we let Assumptions~\ref{hyp:algo} and~\ref{hyp:stab} hold. Additionally, we assume $\lambda>0$. Furthermore, \( C > 0 \) will denote a generic and sufficiently large constant independent of \( k \) and \( n \).

We define:
\begin{equation*}\label{eq:I}
    I_{k,n} := \frac 1n \sum_{i\in [n]} F(X^{i,n}_k) \,. 
\end{equation*}
We will proceeds in three steps. First, we will obtain:
\begin{lemma}\label{lem:I1}The following holds:
    $$\sup_{k,n}\bE(I_{k,n}) <\infty \,.$$
\end{lemma}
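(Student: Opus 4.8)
\textbf{Proof proposal for Lemma~\ref{lem:I1}.}

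The plan is to derive a one-step recursion for $\bE(I_{k+1,n})$ from the update~\eqref{eq:algo} and then iterate it, using the dissipativity in Assumption~\ref{hyp:stab}~\ref{hypenum:dissipation}. First I would fix $k,n$, write $X^{i,n}_{k+1} = X^{i,n}_k + \gamma_{k+1} v^{i,n}_k + \sqrt{2\lambda\gamma_{k+1}}\,\xi^{i,n}_{k+1}$ where $v^{i,n}_k$ collects the kernelized drift and the Langevin drift $-\lambda\nabla F(X^{i,n}_k)$, and apply a second-order Taylor expansion of $F$ with integral remainder along the segment from $X^{i,n}_k$ to $X^{i,n}_{k+1}$. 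Since $\|H_F\|_{op}\le C$, the remainder is bounded by $\tfrac C2\|X^{i,n}_{k+1}-X^{i,n}_k\|^2$. Taking conditional expectation given $\cF_k$ kills the cross term $\sqrt{2\lambda\gamma_{k+1}}\ps{\nabla F(X^{i,n}_k),\xi^{i,n}_{k+1}}$ and turns the Gaussian quadratic contribution into $\lambda\gamma_{k+1}\,\tr$ of something $O(d)$, hence $O(\gamma_{k+1})$.

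The dominant deterministic term is $\gamma_{k+1}\ps{\nabla F(X^{i,n}_k), v^{i,n}_k}$. Its Langevin part contributes $-\lambda\gamma_{k+1}\|\nabla F(X^{i,n}_k)\|^2$, which by Assumption~\ref{hyp:stab}~\ref{hypenum:dissipation} is $\le -\lambda\gamma_{k+1}(c'F(X^{i,n}_k)-C)$ — this is the crucial negative drift. The kernelized part is $-\tfrac{\gamma_{k+1}}{n}\sum_j \ps{\nabla F(X^{i,n}_k),\, K(X^{i,n}_k,X^{j,n}_k)\nabla F(X^{j,n}_k)-\nabla_2 K(X^{i,n}_k,X^{j,n}_k)}$; bounding $K$ and $\nabla_2 K$ via Assumption~\ref{hyp:stab}~iii) (the RKHS bound gives a pointwise bound) and using Young's inequality $\|\nabla F(X^{i,n}_k)\|\|\nabla F(X^{j,n}_k)\| \le \tfrac{\epsilon}{2}\|\nabla F(X^{i,n}_k)\|^2 + \tfrac{1}{2\epsilon}\|\nabla F(X^{j,n}_k)\|^2$, this term is $\le \gamma_{k+1}\big(\epsilon\|\nabla F(X^{i,n}_k)\|^2 + \tfrac{C}{n}\sum_j\|\nabla F(X^{j,n}_k)\|^2 + C\big)$. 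Finally the $O(\gamma_{k+1}^2)$ remainder $\tfrac C2\gamma_{k+1}^2\|v^{i,n}_k\|^2$ is controlled similarly since $\|v^{i,n}_k\|^2 \le C(1 + \|\nabla F(X^{i,n}_k)\|^2 + \tfrac1n\sum_j\|\nabla F(X^{j,n}_k)\|^2)$, again bounded by $\le C'F + C$ summands via Assumption~\ref{hyp:stab}~\ref{hypenum:dissipation}. Averaging over $i\in[n]$ and taking full expectation, all the $\tfrac1n\sum_j\|\nabla F\|^2$ cross-terms symmetrize into $\bE\big(\tfrac1n\sum_i\|\nabla F(X^{i,n}_k)\|^2\big) \le C'\bE(I_{k,n})+C$, so we obtain
\begin{equation*}
\bE(I_{k+1,n}) \le (1 - \lambda c' \gamma_{k+1} + C\gamma_{k+1}(\epsilon + \gamma_{k+1}))\,\bE(I_{k,n}) + C\gamma_{k+1}(1+\gamma_{k+1})\,.
\end{equation*}
Note $I_{k,n}$ is bounded below (by $cF \ge c\|x\|^2 - C$, so $F\ge -C$, giving $I_{k,n}\ge -C$); replacing $I_{k,n}$ by $I_{k,n}+C\ge 0$ keeps the recursion of the same form.

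Choosing $\epsilon$ small and using $\gamma_{k+1}\to 0$ (Assumption~\ref{hyp:algo}~\ref{hyp:gamma}), for $k$ large enough the contraction factor is $\le 1 - \tfrac{\lambda c'}{2}\gamma_{k+1} < 1$, and a standard discrete Gronwall / geometric-series argument on $a_{k} := \bE(I_{k,n}) + C$ gives $\limsup_k a_k \le \tfrac{2C(1+\sup_k\gamma_k)}{\lambda c'/2} =: C''$, uniformly in $n$; the finitely many initial terms are controlled by Assumption~\ref{hyp:stab}~\ref{hypenum:} ($\sup_n\bE(X^{1,n}_0)^4<\infty$ implies $\sup_n\bE(I_{0,n})<\infty$ since $F$ has at most quadratic-type growth controlled through $\|\nabla F\|^2\le C'F+C$ and $F\ge c\|x\|^2-C$, which forces $F(x)\le C(1+\|x\|^2)$ up to handling the gap — more carefully, integrate $\|\nabla F\|\le\sqrt{C'F+C}$ along rays). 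The main obstacle is bookkeeping the interaction (double-sum) term so that the $\tfrac1n\sum_j$ contributions close up against $\bE(I_{k,n})$ after symmetrization rather than blowing up with $n$; exchangeability of the particles (Assumption~\ref{hyp:algo}~\ref{hypenum:exch} plus identical initialization) is what makes $\bE\|\nabla F(X^{i,n}_k)\|^2$ independent of $i$ and lets the averaging go through cleanly.
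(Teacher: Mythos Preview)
Your Taylor-expansion skeleton and the use of the Langevin drift $-\lambda\|\nabla F\|^2$ for contraction are exactly right, and the discrete Gronwall closing argument is fine. The gap is in your treatment of the SVGD cross term
\[
-\frac{\gamma_{k+1}}{n^2}\sum_{i,j} K(X^{i,n}_k,X^{j,n}_k)\,\langle \nabla F(X^{i,n}_k),\nabla F(X^{j,n}_k)\rangle\,.
\]
You bound this with Young's inequality, writing the result as $\gamma_{k+1}\big(\epsilon\|\nabla F(X^{i,n}_k)\|^2 + \tfrac{C}{n}\sum_j\|\nabla F(X^{j,n}_k)\|^2\big)$ and then keeping only the $\epsilon$ term in your final recursion. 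But the second constant is not a fixed $C$: Young gives $\tfrac{1}{2\epsilon}$ there. After averaging over $i$, both halves land on the same quantity $J_{k,n}:=\tfrac1n\sum_i\|\nabla F(X^{i,n}_k)\|^2$, and the contribution is $C_K\gamma_{k+1}\big(\tfrac{\epsilon}{2}+\tfrac{1}{2\epsilon}\big)J_{k,n}\ge C_K\gamma_{k+1}J_{k,n}$ for every $\epsilon>0$. So your contraction factor is really $1-\gamma_{k+1}(\lambda - C_K)c' + O(\gamma_{k+1}^2)$, which is only $<1$ when $\lambda$ dominates the kernel bound~$C_K$ --- not assumed anywhere. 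Taking $\epsilon$ small does not help.

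The missing idea is that the double sum above is \emph{nonnegative} by positive definiteness of $K$: by the reproducing property,
\[
\frac{1}{n^2}\sum_{i,j} K(X^{i,n}_k,X^{j,n}_k)\,\langle \nabla F(X^{i,n}_k),\nabla F(X^{j,n}_k)\rangle
= \Big\|\frac1n\sum_{j} K(\cdot,X^{j,n}_k)\,\nabla F(X^{j,n}_k)\Big\|_{\cH}^2 \ge 0\,,
\]
so after summing over $i$ the SVGD drift term carries the \emph{correct} sign and can simply be dropped (it is also what absorbs the $\gamma_{k+1}^2$ replica of this quantity coming from the Hessian remainder). This is exactly what the paper does: it keeps the RKHS square, pairs it with its $\gamma_{k+1}^2$ counterpart, and is left with the clean contraction $-\lambda\gamma_{k+1}(1-C\lambda\gamma_{k+1})J_{k,n}$ plus a $C\gamma_{k+1}\sqrt{J_{k,n}}$ term from the $\nabla_2K$ inner product. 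Once you use this sign observation, the rest of your argument goes through for any $\lambda>0$; exchangeability is not even needed at this stage.
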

Secondly:
\begin{lemma}\label{lem:I2}The following holds:
    $$\sup_{k,n}\bE(I_{k,n}^2) <\infty \,.$$
\end{lemma}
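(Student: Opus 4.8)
The plan is to derive a one-step estimate of Robbins--Monro type for $u_k^n:=\bE\big((I_{k,n}+C_0)^2\big)$, namely $u_{k+1}^n\le (1-\delta\gamma_{k+1})u_k^n+C\gamma_{k+1}$ for all $k$ larger than some $n$-independent $k_0$, and then to invoke the elementary fact that such a recursion with $\gamma_k\to 0$ and $\sum_k\gamma_k=+\infty$ forces $u_k^n\le\max(u_{k_0}^n,C/\delta)$ for $k\ge k_0$; the finitely many initial indices $k\le k_0$ are controlled separately, uniformly in $n$, starting from the moment bound on the initialization. Here $C_0$ is chosen so that $\hat I_{k,n}:=I_{k,n}+C_0\ge 0$, which is possible because $F(x)\ge c\norm{x}^2-C\ge -C$ by Assumption~\ref{hyp:stab}\ref{hypenum:dissipation}. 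The whole point is that the dissipation of $F$ enters through the \emph{Langevin} part of the update, which produces the stabilizing term $-\lambda\gamma_{k+1}\tfrac1n\sum_i\norm{\nabla F(X_k^{i,n})}^2\le -\lambda c'\gamma_{k+1}\hat I_{k,n}+C\gamma_{k+1}$, while the SVGD terms do not spoil this. The argument parallels, one order higher, that of Lem.~\ref{lem:I1}.

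\textbf{The one-step pathwise bound.} Write $\Delta_k^{i,n}:=X_{k+1}^{i,n}-X_k^{i,n}$. A second-order Taylor expansion of $F$ and $\norm{H_F}_{op}\le C$ give
\[
\hat I_{k+1,n}\le \hat I_{k,n}+\frac1n\sum_i\ps{\nabla F(X_k^{i,n}),\Delta_k^{i,n}}+\frac{C}{2n}\sum_i\norm{\Delta_k^{i,n}}^2 .
\]
Inserting the update \eqref{eq:algo}, the inner-product sum decomposes into: the Langevin drift, bounded above by $-\lambda c'\gamma_{k+1}\hat I_{k,n}+C\gamma_{k+1}$ as above; the \emph{interaction} term $-\gamma_{k+1}\tfrac1{n^2}\sum_{i,j}K(X_k^{i,n},X_k^{j,n})\ps{\nabla F(X_k^{i,n}),\nabla F(X_k^{j,n})}=-\gamma_{k+1}\norm{P_{\mu_k^n}\nabla F}_{\cH}^2\le 0$ (reproducing property); the $\nabla_2K$ term, bounded in absolute value by $C\gamma_{k+1}\tfrac1n\sum_i\norm{\nabla F(X_k^{i,n})}\le C\gamma_{k+1}(\hat I_{k,n}+1)^{1/2}$ (using that $K$ and $\nabla_2 K$ are bounded on $\bR^d\times\bR^d$, a consequence of Assumption~\ref{hyp:stab} via the reproducing property, then Jensen and Assumption~\ref{hyp:stab}\ref{hypenum:dissipation}); and the martingale increment $\Xi_{k+1}:=\sqrt{2\lambda\gamma_{k+1}}\,\tfrac1n\sum_i\ps{\nabla F(X_k^{i,n}),\xi_{k+1}^{i,n}}$. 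For the quadratic term one uses $\norm{\Delta_k^{i,n}}^2\le 2\gamma_{k+1}^2\norm{g_k^{i,n}-\lambda\nabla F(X_k^{i,n})}^2+4\lambda\gamma_{k+1}\norm{\xi_{k+1}^{i,n}}^2$, where $g_k^{i,n}$ denotes the SVGD drift; boundedness of $K$ and $\norm{\nabla F}^2\le C'F+C$ give $\tfrac1n\sum_i\norm{g_k^{i,n}-\lambda\nabla F(X_k^{i,n})}^2\le C(\hat I_{k,n}+1)$, so this term is at most $C\gamma_{k+1}^2(\hat I_{k,n}+1)+C\gamma_{k+1}\Theta_{k+1}$ with $\Theta_{k+1}:=\tfrac1n\sum_i\norm{\xi_{k+1}^{i,n}}^2$. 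Since the interaction term is $\le 0$, the $\nabla_2K$ term is $o(\hat I_{k,n})$, and the $\gamma_{k+1}^2$ contribution is negligible against $-\lambda c'\gamma_{k+1}\hat I_{k,n}$ once $\gamma_{k+1}$ is small, there are $\delta:=\lambda c'/4>0$ and $k_0$ (independent of $n$) with
\[
\hat I_{k+1,n}\le (1-\delta\gamma_{k+1})\hat I_{k,n}+C\gamma_{k+1}+\Xi_{k+1}+C\gamma_{k+1}\Theta_{k+1},\qquad k\ge k_0 .
\]

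\textbf{Passing to the second moment and concluding.} Let $A_k:=(1-\delta\gamma_{k+1})\hat I_{k,n}+C\gamma_{k+1}\ge 0$ be the $\cF_k$-measurable part. Its right-hand side is $\ge\hat I_{k+1,n}\ge 0$, so we may square and take $\bE(\cdot\mid\cF_k)$, using $\bE(\Xi_{k+1}\mid\cF_k)=0$, $\bE(\Xi_{k+1}^2\mid\cF_k)=\tfrac{2\lambda\gamma_{k+1}}{n^2}\sum_i\norm{\nabla F(X_k^{i,n})}^2\le C\gamma_{k+1}(\hat I_{k,n}+1)$, $\bE(\Theta_{k+1}\mid\cF_k)=d$, $\bE(\Theta_{k+1}^2\mid\cF_k)\le d^2+2d$, and $\bE(\Xi_{k+1}\Theta_{k+1}\mid\cF_k)=0$ (oddness in the Gaussians). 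Expanding $A_k^2\le(1-2\delta\gamma_{k+1}+\delta^2\gamma_{k+1}^2)\hat I_{k,n}^2+C\gamma_{k+1}\hat I_{k,n}+C\gamma_{k+1}^2$, then absorbing $C\gamma_{k+1}\hat I_{k,n}\le\tfrac\delta2\gamma_{k+1}\hat I_{k,n}^2+C\gamma_{k+1}$ and $\delta^2\gamma_{k+1}^2\le\tfrac\delta2\gamma_{k+1}$ (enlarging $k_0$), one obtains $\bE(\hat I_{k+1,n}^2\mid\cF_k)\le(1-\delta\gamma_{k+1})\hat I_{k,n}^2+C\gamma_{k+1}$, hence $u_{k+1}^n\le(1-\delta\gamma_{k+1})u_k^n+C\gamma_{k+1}$ for $k\ge k_0$, and therefore $\sup_{k\ge k_0}u_k^n\le\max(u_{k_0}^n,C/\delta)$. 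For $k<k_0$ the same expansion, bounding each term crudely by $C(\hat I_{k,n}+1)$, $\abs{\Xi_{k+1}}$, $C\Theta_{k+1}$ and using that $\gamma_k$ is bounded, gives $u_{k+1}^n\le Cu_k^n+C$; iterating from $u_0^n\le C\bE\big(\tfrac1n\sum_iF(X_0^{i,n})^2\big)+C\le C\big(1+\sup_{i,n}\bE\norm{X_0^{i,n}}^4\big)<\infty$ — where $\abs{F(x)}\le C(1+\norm{x}^2)$ follows from the Hessian bound and the moment bound is Assumption~\ref{hyp:stab}\ref{hypenum:} — shows $\sup_n u_{k_0}^n<\infty$. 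Combining the two ranges of $k$ gives $\sup_{k,n}u_k^n<\infty$, i.e. $\sup_{k,n}\bE(I_{k,n}^2)<\infty$.

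\textbf{Main obstacle.} The delicate point is purely the bookkeeping in the one-step bound: one must check that \emph{every} positive contribution is either non-positive (the quadratic form $\norm{P_{\mu_k^n}\nabla F}_{\cH}^2$), genuinely sublinear in $\hat I_{k,n}$ (the $\nabla_2K$ term, which is $O(\hat I_{k,n}^{1/2})$ and hence swallowed by the dissipative $-\lambda c'\hat I_{k,n}$ for $\hat I_{k,n}$ large), or carries a spare factor $\gamma_{k+1}$ (the Taylor remainder and the conditional variances of $\Xi_{k+1},\Theta_{k+1}$), so that no term $c\,\gamma_{k+1}\hat I_{k,n}^2$ with $c$ of order $\delta$ survives with the wrong sign. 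Using the exact sign $-\norm{P_{\mu_k^n}\nabla F}_{\cH}^2\le 0$ of the main SVGD interaction is what lets the argument run with no smallness hypothesis on $K$.
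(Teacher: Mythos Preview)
Your argument is correct and follows the same route as the paper: you derive the pathwise one-step bound on $\hat I_{k,n}$ (exactly the paper's Eq.~(\ref{eq:Ib}) up to the additive shift by $C_0$), square it, take (conditional) expectations, and obtain a contracting scalar recursion for $u_k^n=\bE(\hat I_{k,n}^2)$ from which the uniform bound follows. The paper's write-up is terser --- it keeps the sublinear powers and writes $\bE(I_{k+1,n}^2)\le (1-\tilde c\gamma_{k+1})\bE(I_{k,n}^2)+C\gamma_{k+1}\bE(I_{k,n}^2)^{3/4}+C\gamma_{k+1}\bE(I_{k,n}^2)^{1/2}+C\gamma_{k+1}^2$, then invokes the same ``threshold $\kappa$'' argument as in Lem.~\ref{lem:I1} --- whereas you absorb those sublinear terms right away via AM--GM to get the cleaner recursion $u_{k+1}^n\le(1-\delta\gamma_{k+1})u_k^n+C\gamma_{k+1}$; this is a cosmetic difference, not a different method.
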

The latter lemma gives a bound on the cross terms of the form \(\mathbb{E}(F(X^{i,n}_k)F(X^{j,n}_k))\) for \(i \neq j\). With this at hand, we obtain:
\begin{lemma}\label{lem:I3}The following holds:
  $$\sup_{k,n}\bE(F(X_k^{1,n})^2) <\infty \,.$$
\end{lemma}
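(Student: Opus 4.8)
The goal is to upgrade the control on the \emph{averaged} potential $I_{k,n}=\frac1n\sum_i F(X^{i,n}_k)$ (Lemmas~\ref{lem:I1}, \ref{lem:I2}) to a uniform fourth-moment bound on a single particle, $\sup_{k,n}\bE(F(X^{1,n}_k)^2)<\infty$. The natural approach is to run a one-step expansion of $F(X^{1,n}_{k+1})^2$ in terms of $F(X^{1,n}_k)^2$ using the update~\eqref{eq:algo}, and show the resulting recursion is stable. Write the increment as $X^{1,n}_{k+1}-X^{1,n}_k = \gamma_{k+1} b^{1,n}_k + \sqrt{2\lambda\gamma_{k+1}}\,\xi^{1,n}_{k+1}$, where $b^{1,n}_k = -\frac1n\sum_j\big(K(X^{1,n}_k,X^{j,n}_k)\nabla F(X^{j,n}_k)-\nabla_2K(X^{1,n}_k,X^{j,n}_k)\big)-\lambda\nabla F(X^{1,n}_k)$. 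By a second-order Taylor expansion of $F$ (valid since $H_F$ is bounded, Assumption~\ref{hyp:stab}(i)),
\begin{equation*}
F(X^{1,n}_{k+1}) \le F(X^{1,n}_k) + \langle \nabla F(X^{1,n}_k), X^{1,n}_{k+1}-X^{1,n}_k\rangle + \tfrac{C}{2}\|X^{1,n}_{k+1}-X^{1,n}_k\|^2\,.
\end{equation*}
The dominant first-order term along the Langevin drift contributes $-\lambda\gamma_{k+1}\|\nabla F(X^{1,n}_k)\|^2$, which by Assumption~\ref{hyp:stab}(ii) is bounded above by $-\lambda\gamma_{k+1}(c'F(X^{1,n}_k)-C)$; this is the negative feedback that will drive the contraction. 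The kernel part of the drift is bounded in $\cH_0$ by Assumption~\ref{hyp:stab}(iii), so $\|K(X^{1,n}_k,X^{j,n}_k)\nabla F(X^{j,n}_k)\| \le C\|\nabla F(X^{j,n}_k)\| \le C(1+F(X^{j,n}_k)^{1/2})$, and averaging over $j$ and using Cauchy–Schwarz couples it to $I_{k,n}$; similarly $\|\nabla_2 K\|\le C$.

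Then I would square, take conditional expectations given $\cF_k$, and use that $\xi^{1,n}_{k+1}$ is centered with bounded moments to kill the cross terms with the noise while picking up $O(\gamma_{k+1})$ contributions from $\bE\|\xi\|^2$ and $O(\gamma_{k+1}^2)$ from higher noise moments. The key inequality to aim for is of the form
\begin{equation*}
\espcond{F(X^{1,n}_{k+1})^2}{\cF_k} \le (1-c\lambda\gamma_{k+1})F(X^{1,n}_k)^2 + C\gamma_{k+1}\big(1 + F(X^{1,n}_k) + I_{k,n}^2 + \text{(terms controlled by $\bE I_{k,n}^2$)}\big) + C\gamma_{k+1}^2(\cdots)\,.
\end{equation*}
Here the exchangeability of the particles (Assumption~\ref{hyp:algo}(ii) together with symmetry of the dynamics) lets me identify $\bE(F(X^{1,n}_k)F(X^{j,n}_k))$-type cross terms, after averaging, with quantities dominated by $\bE(I_{k,n}^2)$, which is finite uniformly by Lemma~\ref{lem:I2}; this is precisely why Lemma~\ref{lem:I2} was proved first. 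Taking full expectations, setting $a_k=\bE(F(X^{1,n}_k)^2)$, and absorbing the linear-in-$F$ term via Young's inequality ($\gamma_{k+1}F \le \varepsilon\gamma_{k+1}F^2 + C_\varepsilon\gamma_{k+1}$), I get $a_{k+1}\le (1-c'\lambda\gamma_{k+1})a_k + C\gamma_{k+1}$ uniformly in $n$. Since $\gamma_k\to 0$ and $\sum\gamma_k=\infty$ (Assumption~\ref{hyp:algo}(i)), a standard discrete Grönwall / stability argument for such recursions gives $\limsup_k a_k \le C/(c'\lambda)$, and combined with $a_0<\infty$ (Assumption~\ref{hyp:stab}(iv)) yields $\sup_{k,n}a_k<\infty$.

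The main obstacle is bookkeeping the quadratic-in-increment term $\|X^{1,n}_{k+1}-X^{1,n}_k\|^2$ and its square inside $F(\cdot)^2$: expanding $F(X^{1,n}_{k+1})^2$ produces a term like $\gamma_{k+1}^2\|\nabla F(X^{1,n}_k)\|^4 \lesssim \gamma_{k+1}^2 F(X^{1,n}_k)^2$ (using Assumption~\ref{hyp:stab}(ii) twice), which is fine since $\gamma_{k+1}^2 \ll \gamma_{k+1}$, but one must check that every such higher-order term carries a genuine extra power of $\gamma_{k+1}$ and that the coefficient of $F(X^{1,n}_k)^2$ stays strictly below $1$ for $k$ large (i.e. once $\gamma_{k+1}$ is small enough, which is guaranteed by $\gamma_k\to 0$); for small $k$ one argues separately that finitely many steps preserve finiteness. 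A second delicate point is making the cross-term reduction to $I_{k,n}^2$ rigorous — one needs the drift's dependence on the empirical measure to be bounded in a way compatible with Cauchy–Schwarz, which is exactly what Assumption~\ref{hyp:stab}(iii) provides. None of these steps is conceptually hard, but the constant-tracking must be done carefully so the negative feedback from the Langevin term dominates.
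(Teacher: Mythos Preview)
Your plan is correct and essentially mirrors the paper's proof: Taylor-expand $F(X^{1,n}_{k+1})$, square, take expectation, extract the Langevin contraction $-\lambda c'\gamma_{k+1}\bE[F(X^{1,n}_k)^2]$, and control the SVGD interaction terms via exchangeability together with Lemma~\ref{lem:I2}. One refinement worth noting: the conditional remainder is not literally $O(I_{k,n}^2)$ but rather of the form $|F(X^{1,n}_k)|\cdot\langle\nabla F(X^{1,n}_k),\nabla F(X^{j,n}_k)\rangle$, and the paper bounds its expectation by Cauchy--Schwarz as $\bE[(F^{1})^2]^{1/2}\,\bE[\|\nabla F^{1}\|^2\|\nabla F^{2}\|^2]^{1/2}$, the second factor being controlled by $\bE[F^{1}F^{2}]$ (hence by $\bE[I_{k,n}^2]$ via exchangeability), which yields the stable recursion $a_{k+1}\le(1-\tilde c\gamma_{k+1})a_k+C\gamma_{k+1}(a_k^{3/4}+a_k^{1/2}+1)$ rather than your cleaner $a_{k+1}\le(1-c'\gamma_{k+1})a_k+C\gamma_{k+1}$ --- though both give the same conclusion after a Young-inequality step.
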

Since, $F(x) \ge c' \norm{x^2} - C$ by Assumtion~\ref{hyp:stab}. By Lem.~\ref{lem:I3}, Lem.~\ref{lem:stab} is proven.

\newpage
\paragraph*{Proof of Lem.~\ref{lem:I1}}
  By Taylor-Lagrange formula, there exists $t_{k+1}^{i,n}\in [0,1]$ such that:
  \begin{multline}\label{eq:Taylor}
    F(X^{i,n}_{k+1}) = F(X^{i,n}_k) + \ps{\nabla F(X_k^{i,n}), X^{i,n}_{k+1}-X^{i,n}_{k}} +  \\
    \frac 1{2}\p{  \p{X^{i,n}_{k+1}-X^{i,n}_{k} }^T H_F\p{X^{i,n}_{k+1} + t^{i,n}_{k+1}\p{X^{i,n}_{k+1}-X^{i,n}_{k}}}(X^{i,n}_{k+1}-X^{i,n}_{k})}\,.
  \end{multline}
We recall the iteration Eq.~\eqref{eq:algo}
\begin{multline*}
  X_{k+1}^{i,n} - X_k^{i,n} = -{\frac {\gamma_{k+1}}n \sum_{j\in[n]}\p{K(X_k^{i,n},X_k^{j,n})\nabla F(X_k^{j,n}) -\nabla_2 K(X^{i,n}_k, X^{j,n}_k)}  } \\ -  \lambda\gamma_{k+1}\nabla F(X_k^{i,n})  + \sqrt{2\gamma_{k+1}\lambda }\xi^{i,n}_{k+1}\,.
\end{multline*}
By Assumption~\ref{hyp:stab}, $\norm{H_F(x)}_{op}\le C$ for every $x\in\bR^d$. Using Eq.~\eqref{eq:Taylor}, we obtain
\begin{multline*}
  {F(X^{i,n}_{k+1})} \le  {F(X^{i,n}_k)}  -\frac{\gamma_{k+1}}{n}\sum_{j\in[n]}\ps{\nabla F(X^{i,n}_k),\nabla F(X^{j,n}_k)} K(X_k^{i,n},X_k^{j,n}) \\
  + \frac{\gamma_{k+1}}n\sum_{j\in [n]} \ps{\nabla F(X^{i,n}_k),\nabla_2 K(X_k^{i,n},X_k^{j,n})} + \sqrt{2\gamma_{k+1}\lambda}\ps{\nabla F(X_k^{i,n}), \xi^{i,n}_{k+1}} \\
  +{C\gamma_{k+1}^2}\p{\norm{\frac 1n\sum_{j\in [n]} K(X^{i,n}_k, X^{j,n}_k)\nabla F(X^{j,n}_k)}^2+\norm{\frac 1n \sum_{j\in [n]}\nabla_2 K(X^{i,n}_k,X^{j,n}_k)}^2 } \\
  - \lambda\gamma_{k+1}\norm{\nabla F(X^{i,n}_k)}^2+ {C\lambda^2\gamma_{k+1}^2} \norm{\nabla F(X^{i,n}_k)}^2 + C\lambda\gamma_{k+1} \norm{\xi^{i,n}_{k+1}}^2   \,.
\end{multline*}
Note that
\[
  \begin{split}
  \frac 1n \sum_{j\in[n]} \ps{\nabla F(X^{i,n}_k), \nabla_2 K(X^{i,n}_k, X^{j,n}_k)}\le C \norm{\nabla F(X^{i,n}_k)}\,.
  \end{split}
  \]
We remark that for an arbitrary $\Phi = (\Phi_\ell)_{\ell\in[d]}\in \cH$, and for every $y\in\bR^d$
\[
  \norm{\Phi(y)}^2 = \sum_{\ell\in [d]} \ps{\Phi_\ell, K(\cdot,y) }^2_{\cH_0} \le \sum_{\ell\in[d]} \norm{\Phi_\ell}_{\cH_0}^2\norm{K(\cdot,y)}_{\cH_0}^2 \le C\norm{\Phi}_\cH^2\,.
\]
Therefore,
\[
\norm{\nabla_2 K(X^{i,n}_k,X^{j,n}_k)}^2\le C \norm{ \nabla_2 K(\cdot,X^{j,n}_k)}_\cH^2 \le C \,,
\]
and
\[
\norm{\sum_{j\in [n]} K(X^{i,n}_k, X^{j,n}_k)\nabla F(X^{j,n}_k)}^2 \le C \norm{\sum_{j\in [n]} K(\cdot, X^{j,n}_k)\nabla F(X^{j,n}_k)}^2_\cH\,.
\]
Consequently, we obtain
\begin{multline}\label{eq:F}
  {F(X^{i,n}_{k+1})} \le  {F(X^{i,n}_k)}  -\frac{\gamma_{k+1}}{n}\sum_{j\in[n]}\ps{\nabla F(X^{i,n}_k),\nabla F(X^{j,n}_k)} K(X_k^{i,n},X_k^{j,n}) \\
  + {\gamma_{k+1}}C\norm{ \nabla F(X^{i,n}_k)} +\sqrt{2\gamma_{k+1}\lambda}\ps{\nabla F(X_k^{i,n}), \xi^{i,n}_{k+1}} \\
  +{C\gamma_{k+1}^2}\p{\norm{\frac 1n \sum_{j\in [n]} K(\cdot, X^{j,n}_k)\nabla F(X^{j,n}_k)}_\cH^2+1} \\
  - \lambda\gamma_{k+1} (1- C\lambda\gamma_{k+1})\norm{\nabla F(X^{i,n}_k)}^2 + {C\lambda \gamma_{k+1}} \norm{\xi^{i,n}_{k+1}}^2  \,.
\end{multline}
We define $J_{k,n} := \frac 1n \sum_{i\in[n]} \norm{\nabla F(X^{i,n}_k)}^2$.
 Hence, we obtain
\begin{multline*}
  {I_{k+1,n}} \le  I_{k,n}  -{\gamma_{k+1}}(1-C\gamma_{k+1})\norm{\frac 1{n}\sum_{j\in [n]} K(\cdot, X^{j,n}_k)\nabla F(X^{j,n}_k)}_\cH^2 \\
   - \lambda \gamma_{k+1}(1 - C\lambda\gamma_{k+1})J_{k,n}+{\gamma_{k+1}}C \sqrt{J_{k,n}} \\
  +\sqrt{2\gamma_{k+1}\lambda}\frac 1n \sum_{i\in[n]}\ps{\nabla F(X_k^{i,n}), \xi^{i,n}_{k+1}} + {C \lambda \gamma_{k+1}} \frac 1n\sum_{i\in[n]}\norm{\xi^{i,n}_{k+1}}^2 
  +{C\gamma_{k+1}^2}\,.
\end{multline*}
By Assumption~\ref{hyp:stab}, $c' I_{k,n} - C\le J_{k,n}\le C'  I_{k,n} + C $. Hence, for $k$ large enough, there exist a constant $c>0$ small enough
  \begin{multline}\label{eq:Ib}
    {I_{k+1,n}} \le  I_{k,n}(1- c\gamma_{k+1})   
  +C {\gamma_{k+1}}\sqrt{C'I_{k,n} +C} \\
    +\sqrt{2\gamma_{k+1}\lambda}\frac 1n\sum_{i\in[n]}\ps{\nabla F(X_k^{i,n}), \xi^{i,n}_{k+1}} + {C\lambda \gamma_{k+1}} \frac 1n\sum_{i\in[n]}\norm{\xi^{i,n}_{k+1}}^2 
    +C\gamma_{k+1}\,.
  \end{multline}
Taking the expectation in Eq.~\eqref{eq:Ib}, we obtain by Assumption~\ref{hyp:algo}:
\[
\esp{I_{k+1,n}}  \le \esp{I_{k,n}}(1-c \gamma_{k+1}) + C\gamma_{k+1}\sqrt{C'\esp{I_{k,n}}+C} + C\gamma_{k+1}\,.
\]
There exists a constant $\kappa$ large enough satisfying
\[
  c\kappa  \ge  C\sqrt{C'\kappa + C} +C\,.
\]
Hence, as soon as there exists $k$ large enough such that $\esp{I_{k,n}} \ge \kappa $, we obtain $\esp{I_{k+1,n}}\le \esp{I_{k,n}}$. Consequently, since $\kappa$ is independent of $n$, Lem.~\ref{lem:I1} is proven.

\paragraph*{Proof of Lem.~\ref{lem:I2}}

Raising Eq.~\eqref{eq:Ib} to the square and taking the expectation, we obtain for $k$ large enough, the existence of a constant $\tilde c> 0$ small enough, such that
\begin{equation*}
  \esp{I_{k+1,n}^2} \le  \esp{I_{k,n}^2}(1- \tilde c\gamma_{k+1}) + C\gamma_{k+1}\esp{I_{k,n}^2}^{3/4} +C\gamma_{k+1}\esp{I_{k,n}^2}^{1/2} + C\gamma_{k+1}^2\,.\\
\end{equation*}
As in the proof of Lem.~\ref{lem:I1}, Lem.~\ref{lem:I2} is proven.

\paragraph*{Proof of Lem.~\ref{lem:I3}}

By Assumption~\ref{hyp:algo}, the sequence $(X_k^{i,n})_{i\in[n]}$ is exchangeable, i.e. the sequence is invariant in law by permutation of the indices $i\in[n]$. 
Then, by Lem.~\ref{lem:I2}, we obtain
\begin{equation}\label{eq:boundF}
   \sup_{k,n} \p{\frac {n-1}{n} \esp{F{(X^{1,n}_k)}F{(X^{2,n}_k)}} + \frac 1n \esp{F({X_k^{1,n}})^2} }<\infty \,.
\end{equation}
Going back to Eq.~\eqref{eq:F} and raising it to the square and taking the expectation, using $\norm{\nabla F(x)}^2\le C (\abs{F(x)}+1) $ and the exchangeability of $(X^{k,n}_i)_{i\in[n]}$, we obtain the existence of a constant $\tilde c$ small enough, such that
\begin{multline}\label{eq:filalF2}
  \esp{{F(X^{1,n}_{k+1})}^2} \le  \esp{F(X^{1,n}_k)^2}(1-\tilde c \gamma_{k+1}) \\+C{\gamma_{k+1}}\p{\frac{n-1}n\bE\abs{\ps{\nabla F(X^{1,n}_k),\nabla F(X^{2,n}_k)} {F(X^{1,n}_k)}}+ \frac {1}n\esp{\norm{\nabla F(X^{1,n}_k)}^2 \abs{F(X^{1,n}_k)}}}  \\
 + C{\gamma_{k+1}}\esp{\norm{\nabla F(X^{1,n}_k)}\abs{F(X^{1,n}_k)}} + C\gamma_{k+1}\esp{\abs{F(X_k^{i,n})}} + C\gamma_{k+1}\,.\\
\end{multline}
In the above inequality, we didn't write the terms in \(\gamma_k^2\) as they are dominated by the terms in \(\gamma_k\).
In the rest of the proof, we bound the second term on the right-hand side of the above inequality. The other terms are easier and are left to the reader.
By Cauchy-Schwarz inequality, we obtain
\[
  \begin{split}
    \esp{\ps{\nabla F(X^{1,n}_k),\nabla F(X^{2,n}_k)} {F(X^{1,n}_k)}} &\le \sqrt{\esp{F(X^{1,n}_k)^2}}\sqrt{\esp{\norm{\nabla F(X_k^{1,n})}^2\norm{\nabla F(X_k^{2,n})}^2}}   \,.\end{split}
\]
Moreover, by Assumption~\ref{hyp:stab}, $\|\nabla F(x)\|^2 \le C'F(x) + C$, and
\[
  \norm{\nabla F(X_k^{1,n})}^2\norm{\nabla F(X_k^{2,n})}^2 \le C'^2 F(X^{1,n}_k)F(X^{2,n}_k) +  CC'F(X^{1,n}_k) + C'CF(X^{2,n}_k) + C^2\,.  
\]
By Eq.~\eqref{eq:boundF},
\[
  \bE\abs{\norm{\nabla F(X_k^{1,n})}^2\norm{\nabla F(X_k^{2,n})}^2} \le C(1 + \sqrt{\esp{F(X_k^{1,n})^2}})\,.
\]
Hence, we obtain
\[
  \bE\abs{\ps{\nabla F(X^{1,n}_k),\nabla F(X^{2,n}_k)} {F(X^{1,n}_k)}} \le   C\p{{\esp{F(X^{1,n}_k)^2}}^{1/2}+ {\esp{F(X^{1,n}_k)^2}}^{3/4}}\,.
\]
By Eq.~\eqref{eq:boundF}, we also obtain
\[
  {\frac {1}n\esp{\norm{\nabla F(X^{1,n}_k)}^2 \abs{F(X^{1,n}_k)}}} \le \frac Cn (\esp{F(X^{1,n}_k)^2 } + \bE{\abs{F(X^{1,n}_k)}}) \le C \,.
\]
Going back to Eq.~\eqref{eq:filalF2}, we obtain
\begin{equation*}
  \esp{{F(X^{1,n}_{k+1})}^2} \le  \esp{F(X^{1,n}_k)^2}(1-\tilde c \gamma_{k+1}) +    C\gamma_{k+1}( {\esp{F(X^{1,n}_k)^2}}^{\tfrac 12}+ {\esp{F(X^{1,n}_k)^2}}^{\tfrac 34}+ 1)\,.
\end{equation*}
Hence, $\sup_{k,n} \esp{F(X_k^{1,n})^2} <\infty$.
\section{Tightness results}
 We define the \textit{intensity} of a random variable $\nu: \Omega \to \cP_2(\bR^d)$,
as the measure $\bI(\nu)\in \cP(\bR^d)$ that satisfies 
\[
\forall A \in \cB(\bR^d), \quad \bI(\nu)(A) := \bE\p{\nu(A)}. 
\]
\begin{lemma}
\label{lem:meleardGen}
A sequence $(\mu_n)$ of random variables on $\mathcal{P}_2(\bR^d)$
is tight if  the sequence $(\bI (\mu_n))$ is 
relatively compact in $\cP_2(\bR^d)$. 
\end{lemma}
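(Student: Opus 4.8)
The plan is to combine the standard characterization of relative compactness in $\cP_2(\bR^d)$ with a Markov inequality at the level of random measures. Recall that a subset of $\cP_2(\bR^d)$ is relatively compact for $W_2$ if and only if it is tight for the weak topology and $x\mapsto\|x\|^2$ is uniformly integrable over it. Applying this to the relatively compact family $(\bI(\mu_n))_n$, together with the de la Vallée–Poussin criterion (in its form for a family of measures), I would first produce a continuous nondecreasing function $\phi:[0,\infty)\to[0,\infty)$ with $\phi(r)\to\infty$ and $\phi(r)/r^2\to\infty$ as $r\to\infty$, such that
\[
M\eqdef\sup_n\int\phi(\|x\|)\,\dr\bI(\mu_n)(x)<\infty\,.
\]
Concretely, uniform integrability of $\{\|x\|^2\ \text{under}\ \bI(\mu_n)\}_n$ yields a convex increasing $G$ with $G(t)/t\to\infty$ and $\sup_n\int G(\|x\|^2)\,\dr\bI(\mu_n)<\infty$, and one sets $\phi(r)\eqdef G(r^2)$.

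Next I would introduce, for $L>0$, the sublevel set
\[
\cK_L\eqdef\Bigl\{\mu\in\cP_2(\bR^d)\ :\ \int\phi(\|x\|)\,\dr\mu(x)\le L\Bigr\}
\]
and check that it is relatively compact in $\cP_2(\bR^d)$. Weak tightness follows from $\mu(\{\|x\|>R\})\le L/\inf_{r>R}\phi(r)$, which tends to $0$ uniformly over $\mu\in\cK_L$ as $R\to\infty$; uniform integrability of $\|x\|^2$ follows from $\int_{\|x\|>R}\|x\|^2\,\dr\mu\le L\,\sup_{r>R}(r^2/\phi(r))$, which again tends to $0$ uniformly over $\cK_L$. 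Hence the closure $\overline{\cK_L}$ is compact in $\cP_2(\bR^d)$.

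Finally I would transfer the intensity bound into a uniform tail bound on the random variables $\mu_n$. Since $\phi$ is continuous and nonnegative, $\mu\mapsto\int\phi(\|x\|)\,\dr\mu(x)$ is lower semicontinuous, hence Borel, on $\cP_2(\bR^d)$, so $\int\phi(\|x\|)\,\dr\mu_n(x)$ is a $[0,\infty]$-valued random variable; by definition of the intensity and monotone convergence, $\bE\!\int\phi(\|x\|)\,\dr\mu_n(x)=\int\phi(\|x\|)\,\dr\bI(\mu_n)(x)\le M$. Markov's inequality then gives, uniformly in $n$,
\[
\bP\bigl(\mu_n\notin\cK_L\bigr)=\bP\Bigl(\int\phi(\|x\|)\,\dr\mu_n(x)>L\Bigr)\le\frac{M}{L}\,.
\]
Given $\varepsilon>0$, choosing $L=2M/\varepsilon$ yields $\bP(\mu_n\in\overline{\cK_L})\ge 1-M/L>1-\varepsilon$ for every $n$, and since $\overline{\cK_L}$ is a compact subset of $\cP_2(\bR^d)$ this is exactly the asserted tightness of $(\mu_n)$.

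The only step that is not bookkeeping is the first one: one must use that $(\bI(\mu_n))$ is relatively compact in $\cP_2$ — not merely weakly — so as to obtain a $\phi$ with strictly super-quadratic growth, which is precisely what makes the sublevel sets $\cK_L$ compact for $W_2$ rather than merely weakly compact; the de la Vallée–Poussin criterion is what converts "uniform integrability of $\|x\|^2$" into a single uniform bound $\sup_n\int\phi(\|x\|)\,\dr\bI(\mu_n)<\infty$. Everything afterwards is Markov's inequality and elementary measure theory on $\cP_2(\bR^d)$.
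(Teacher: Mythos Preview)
Your proof is correct. The paper does not give a self-contained argument here but simply refers to \cite[Lem.~2]{bianchi2024long}; your de la Vall\'ee--Poussin plus Markov argument is the standard route for this type of statement (it is the natural $\cP_2$-upgrade of the classical Sznitman/M\'el\'eard tightness lemma for random measures) and is almost certainly what the cited reference contains.
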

\begin{proof}
This proof is identical to the one presented in \cite[Lem.~2]{bianchi2024long}.
\end{proof}
\subsection{Proof of Th.~\ref{th:tight} and Prop.~\ref{prop:tightm}}

First, we state a more general result, which is a consequence of Lem.~\ref{lem:stab}.
\begin{lemma}{\cite[Prop.~4]{bianchi2024long}}\label{lem:tightG}
       The collection of measure $(\bI(m_t^n))_{t,n}$ is relatively compact in $\cP_2(\cC)$.
       Moreover, the collection of random variables $(m_t^n)_{t,n}$ is tight.
\end{lemma}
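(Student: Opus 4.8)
The two assertions of Lem.~\ref{lem:tightG} are linked: once the family of intensities $(\bI(m_t^n))_{t,n}$ is shown to be relatively compact in $\cP_2(\cC)$, the tightness of the random measures $(m_t^n)_{t,n}$ follows from the $\cP_2(\cC)$-analogue of Lem.~\ref{lem:meleardGen}, whose proof is \emph{verbatim} the one in $\cP_2(\bR^d)$ (cf.~\cite{bianchi2024long}). So the content is the relative compactness of the intensities. First I would identify them: since $(X_0^{i,n})_{i\in[n]}$ and the Gaussian family $(\xi_k^{i,n})_{i,k}$ are exchangeable in the index $i$ (Assumption~\ref{hyp:algo}), so are the interpolated processes $(\bar X^{i,n})_{i\in[n]}$, whence $\bI(m_t^n)=\mathrm{Law}\big((\bar X^{1,n}_{t+s})_{s\ge0}\big)$. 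By the definition of the metric on $\cP_2(\cC)$, relative compactness of $(\bI(m_t^n))_{t,n}$ amounts to showing, for each $T\in\bN^*$, that $\{\mathrm{Law}\big((\bar X^{1,n}_{t+u})_{u\in[0,T]}\big):t\ge0,\ n\ge1\}$ is relatively compact in the Wasserstein-$2$ space over $C([0,T],\bR^d)$; by Prokhorov and the usual characterization of $W_2$-relative compactness, this splits into (a) tightness in $C([0,T],\bR^d)$ and (b) uniform integrability of $\sup_{u\in[0,T]}\|x_u\|^2$ under these laws (which also yields $\bI(m_t^n)\in\cP_2(\cC)$).

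For (b) I would establish the uniform fourth-moment bound
\[
\sup_{t\ge0,\ n\ge1}\ \bE\Big[\sup_{u\in[0,T]}\|\bar X^{1,n}_{t+u}\|^4\Big]<\infty\,,
\]
which forces the uniform integrability of the square by de la Vall\'ee Poussin. By convexity of the norm, the supremum over $u\in[0,T]$ is attained at a grid point $\tau_k$ falling in $[t,t+T]$, so, starting from the grid point $k_0$ just before $t$ (whose fourth moment is bounded uniformly by Lem.~\ref{lem:stab}), it suffices to run a running-maximum version of the recursion behind Lem.~\ref{lem:stab}: decompose the update~\eqref{eq:algo} into the dissipative drift $-\lambda\gamma_{k+1}\nabla F(X_k^{1,n})$, the bounded-kernel interaction, and the Gaussian increment $\sqrt{2\lambda\gamma_{k+1}}\,\xi^{1,n}_{k+1}$; control the partial sums of the drift via the dissipativity of $F$ (Assumption~\ref{hyp:stab}) and those of the noise via Doob's $L^4$ maximal inequality applied to the martingale $\sum_j\sqrt{2\lambda\gamma_j}\,\xi^{1,n}_j$. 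All estimates are uniform in the starting index $k_0$, hence in $t$, and in $n$ by Lem.~\ref{lem:stab}.

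For (a) I would use the exact decomposition $\bar X^{1,n}_{t+v}-\bar X^{1,n}_{t+u}=\int_u^v b^n_{t+s}\,\dr s+\big(\widetilde{M}^n_{t+v}-\widetilde{M}^n_{t+u}\big)$, where $b^n$ is the piecewise-constant drift and $\widetilde{M}^n$ the piecewise-linear interpolation of $\sum_j\sqrt{2\lambda\gamma_j}\,\xi^{1,n}_j$. The drift part is harmless: Assumption~\ref{hyp:stab} ($\|K(\cdot,y)\|_{\cH_0},\ \|\nabla_2K(\cdot,y)\|_{\cH}\le C$ and $\|\nabla F\|^2\le C'F+C$) together with the by-product $\sup_{k,n}\bE F(X_k^{1,n})^2<\infty$ of the proof of Lem.~\ref{lem:stab} give $\sup_{s,n}\bE\|b^n_s\|^4<\infty$, hence $\bE\|\int_u^v b^n_{t+s}\,\dr s\|^4\le C|v-u|^4$ uniformly in $t$, i.e.\ the Kolmogorov--Billingsley modulus condition. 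For the interpolated martingale, the quadratic variation of $\sum_j\sqrt{2\lambda\gamma_j}\,\xi^{1,n}_j$ over a time-lapse $u$ is deterministic and of order $u$ (up to an additive $\sup_k\gamma_k<\infty$), so its tightness in $C([0,T],\bR^d)$ is the classical tightness of Donsker-type interpolated random walks and follows from Aldous' criterion. Combined with the tightness of the time-$t$ marginals already supplied by (b), this gives (a), hence the relative compactness of the intensities, hence the tightness of $(m_t^n)_{t,n}$.

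The main obstacle is (a): establishing $C([0,T],\bR^d)$-tightness \emph{uniformly in the shift $t$} when $\bar X^{1,n}$ is only a piecewise-linear Euler interpolation with $\gamma_k\to0$ but $\sum_k\gamma_k=\infty$. A plain Kolmogorov estimate does not close on the interpolated-martingale part because of its affine behaviour on sub-step time scales; one must separate the grid contributions (handled by the $L^4$ recursion of Lem.~\ref{lem:stab} and Doob's inequality) from the within-step excursions and invoke Aldous' criterion. All of this is precisely \cite[Prop.~4]{bianchi2024long}, whose hypotheses hold here thanks to Lem.~\ref{lem:stab}; the plan above transcribes that proof to the present setting.
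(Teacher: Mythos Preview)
Your proposal is correct and, as you yourself note in the last line, transcribes \cite[Prop.~4]{bianchi2024long}; the paper does not give an independent proof of Lem.~\ref{lem:tightG} but simply cites that reference, observing that its hypotheses are supplied by Lem.~\ref{lem:stab}. Your reconstruction of the argument (exchangeability to identify $\bI(m_t^n)$ as the law of a single shifted interpolated particle, fourth-moment bounds plus Doob's maximal inequality for the running supremum, Kolmogorov/Aldous for the modulus of continuity, and the $\cP_2(\cC)$-analogue of Lem.~\ref{lem:meleardGen} to pass from relative compactness of intensities to tightness of the random measures) is the content of the cited proposition.
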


Next, as the consequence of the above lemma, we obtain the proof of Prop.~\ref{prop:tightm}.
\paragraph{Proof of Prop.~\ref{prop:tightm}}
This is given by~\cite[Lem.~8]{bianchi2024long}.

\paragraph{Proof of Th.~\ref{th:tight}} Remark that $(\pi_0)_\# m_{\tau_k}^n =\mu_k^n$, for every $k$. Hence, $(\pi_0)_\# \bI(m_{\tau_k}^n)$.
For a compact set $\cK\subset \cP_2(\cC)$, one can obtain that $(\pi_0)_\# \cK$ is a compact set in $\cP_2(\bR^d)$. Consequently, since $\bI(m_t^n)_{t,n}$ is relatively compact in $\cP_2(\cC)$ by Lem.~\ref{lem:tightG}, $(\bI(\mu_k^n))_{k,n}$ is relatively compact in $\cP_2(\bR^d)$. This yields the first claim of the theorem, by Lem.~\ref{lem:meleardGen}.

Moreover,
\[
    \bI(\bar \mu_k^n) =\frac{\sum_{i\in[k]} \gamma_i \bI(\mu^n_i)  }{\sum_{i\in [k]}\gamma_i } \,.
\]
Since,  $(\bI(\mu_k^n))_{k,n}$ is relatively compact in $\cP_2(\bR^d)$,  the same holds for $(\bI(\bar \mu_k^n))_{k,n}$. The proof is left to the reader.
By Lem.~\ref{lem:meleardGen}, this finishes the proof.

\section{The McKean-Vlasov measures}

For every $\mu\in \cP_2(\bR^d)$, we define $L(\mu)$ which, to every test
function $\phi\in C_c^2(\bR^d,\bR)$, associates the function $L(\mu)(\phi)$
given by 
\begin{equation}\label{eq:L} 
L(\mu)(\phi)(x)=  \ps{\int ( -K(x,y)\nabla F(y) +\nabla_2 K(x,y))\dr \mu(y) -\lambda\nabla F(x),\nabla \phi(x) } + \lambda \Delta \phi(x)\,.
\end{equation}
Let $(X_t:t\in [0,\infty))$ be the canonical process on $\cC$.
Denote by $(\cF_t^X)_{t\geq 0}$ the natural filtration (\emph{i.e.}, the filtration generated by $\{X_s:0\leq s\leq t\}$).

By a weak solution of the McKean-Vlasov SDE in Definition~\ref{def:V}, we mean a solution of the martingale problem defined hereafter. Hence, for the rest of the appendix, we will take the subsequent definition of \(\sV_2\) into account.
\begin{definition}
\label{def:Vbis}
We say that a measure $\rho\in \cP_2(\cC)$ belongs to the class $\sV_2$ if,  for every $\phi\in C_c^2(\bR^d,\bR)$,
$$
\phi(X_t) -\int_0^t L(\rho_s)(\phi)(X_s)ds
$$
is a $(\cF_t^X)_{t\geq 0}$-martingale on the probability space $(\cC,\cB(\cC),\rho)$.
\end{definition}

We define the function
\[
    b(x,y) := - K(x,y)\nabla F(y) + \nabla_2 K(x,y) -\lambda \nabla F(x)
\]
With a slight abuse of notation, for a measure $\mu\in\cP(\bR^d)$, we denote
$b(x,\mu):=\int b(x,y)\dr \mu(y)$.
Therefore,  $L(\mu)(\phi)(x) = \ps{b(x,\mu),\nabla\phi(x)} + \lambda \Delta \phi(x) $. When $b$ is continuous with linear growth, i.e. $\norm{b(x,y)} \le C(1+\norm{x} +\norm{y})$ for every $x,y\in\bR^d$, the space $\sV_2$ is Polish.
\begin{lemma}{\cite[Prop. 3]{bianchi2024long}}\label{lem:V2}
    Let Assumption~\ref{hyp:stab} holds. $\sV_2$ is closed. Consequently, the space $(\sV_2,\sW_2)$ is Polish.
\end{lemma}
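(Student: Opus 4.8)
The plan is to reduce everything to showing that $\sV_2$ is closed in $(\cP_2(\cC),\sW_2)$: the space $\cP_2(\cC)$ is Polish by \cite[Prop.~1]{bianchi2024long} (recalled in the Notations), and a closed subset of a Polish space, with the induced topology, is itself Polish; so the second assertion of the lemma follows from the first. Thus it suffices to take a sequence $\rho^m\in\sV_2$ with $\sW_2(\rho^m,\rho)\to0$ for some $\rho\in\cP_2(\cC)$ and to prove $\rho\in\sV_2$.

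First I would record the regularity of the drift under Assumption~\ref{hyp:stab}. The bound $\norm{H_F(x)}_{op}\le C$ forces $F$ to have at most quadratic growth, which together with $\norm{\nabla F(x)}^2\le C'F(x)+C$ gives $\norm{\nabla F(x)}\le C(1+\norm x)$; the reproducing property together with $\norm{K(\cdot,y)}_{\cH_0}+\norm{\nabla_2 K(\cdot,y)}_{\cH}\le C$ gives $\abs{K(x,y)}\le C$ and $\norm{\nabla_2 K(x,y)}\le C$. Hence $b$ is continuous with linear growth, $\norm{b(x,y)}\le C(1+\norm x+\norm y)$; so $b(\cdot,\mu)=\int b(\cdot,y)\dr\mu(y)$ is well defined for $\mu\in\cP_2(\bR^d)$, and for any $\phi\in C_c^2(\bR^d,\bR)$ the map $z\mapsto L(\mu)(\phi)(z)=\ps{b(z,\mu),\nabla\phi(z)}+\lambda\Delta\phi(z)$ is bounded and supported in the fixed compact set $S\eqdef\supp\nabla\phi\cup\supp\Delta\phi$, with $\sup_{z\in S}\abs{L(\mu)(\phi)(z)}$ controlled by the second moment of $\mu$.

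For the closedness argument, fix $\phi\in C_c^2(\bR^d,\bR)$, times $0\le s_1<\cdots<s_\ell\le s<t$, and $g_1,\dots,g_\ell\in C_b(\bR^d)$; by Definition~\ref{def:Vbis}, $\rho\in\sV_2$ is equivalent to the vanishing, for all such data, of
\[
\Lambda(\rho,\rho)\eqdef\int_{\cC}\p{\phi(x_t)-\phi(x_s)-\int_s^t L(\rho_u)(\phi)(x_u)\,\dr u}\prod_{i=1}^\ell g_i(x_{s_i})\,\dr\rho(x),
\]
where $\Lambda(\sigma,\eta)$ denotes the same integral with $\dr\rho$ replaced by $\dr\sigma$ and $\rho_u$ by $\eta_u$ (well defined and bounded thanks to the previous step). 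Since $\rho^m\in\sV_2$ we have $\Lambda(\rho^m,\rho^m)=0$, and I would split
\[
\Lambda(\rho^m,\rho^m)-\Lambda(\rho,\rho)=\p{\Lambda(\rho^m,\rho^m)-\Lambda(\rho^m,\rho)}+\p{\Lambda(\rho^m,\rho)-\Lambda(\rho,\rho)}.
\]
The second bracket tends to $0$ because the integrand $x\mapsto(\phi(x_t)-\phi(x_s)-\int_s^tL(\rho_u)(\phi)(x_u)\dr u)\prod_i g_i(x_{s_i})$ is bounded and continuous on $\cC$ (continuity of the time integral follows from dominated convergence, $x^m\to x$ uniformly on compacts implying $x^m_u\to x_u$ for each $u$), and $\sW_2(\rho^m,\rho)\to0$ implies weak convergence $\rho^m\to\rho$. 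For the first bracket, $\sW_2$-convergence yields $W_2(\rho^m_u,\rho_u)\to0$ for each $u$ (push-forward by the $1$-Lipschitz evaluation maps) and $\sup_m\int\sup_{v\le t}\norm{x_v}^2\dr\rho^m<\infty$ (convergence of second moments in $W_2$ on $C([0,t],\bR^d)$), whence $\sup_{z\in S,\,u\le t,\,m}\abs{L(\rho^m_u)(\phi)(z)}<\infty$ and, for each fixed $u$, $\sup_{z\in S}\norm{L(\rho^m_u)(\phi)(z)-L(\rho_u)(\phi)(z)}\to0$ using the continuity and uniform-in-$z\in S$ linear growth of $y\mapsto-K(z,y)\nabla F(y)+\nabla_2 K(z,y)$; then $\abs{\Lambda(\rho^m,\rho^m)-\Lambda(\rho^m,\rho)}\le\p{\prod_i\norm{g_i}_\infty}\int_s^t\sup_{z\in S}\norm{L(\rho^m_u)(\phi)(z)-L(\rho_u)(\phi)(z)}\dr u\to0$ by dominated convergence in $u$. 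Hence $\Lambda(\rho,\rho)=0$ for all data, so $\rho\in\sV_2$ and $\sV_2$ is closed.

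The main obstacle is the first bracket: one must upgrade the weak-type convergence $\rho^m_u\to\rho_u$ to convergence of $b(z,\rho^m_u)$ to $b(z,\rho_u)$ that is \emph{uniform over $z$ in the compact $S$}, while simultaneously integrating against the moving measures $\rho^m$; this is exactly where the uniform second-moment bound extracted from $\sW_2$-convergence, the compact support of $\nabla\phi$ and $\Delta\phi$, and a uniform-in-$z$ equicontinuity argument for $y\mapsto-K(z,y)\nabla F(y)+\nabla_2 K(z,y)$ (using the standing joint continuity of $K$ and $\nabla_2 K$) must be combined. Everything else is routine bookkeeping with bounded continuous functionals on $\cC$.
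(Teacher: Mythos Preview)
Your proof is correct and follows the standard martingale-problem route; the paper itself gives no argument beyond the citation to \cite[Prop.~3]{bianchi2024long}, and that reference proceeds in essentially the same way---fix test data, encode the martingale property as $\Lambda(\rho,\rho)=0$, split $\Lambda(\rho^m,\rho^m)-\Lambda(\rho,\rho)$ into the two brackets you describe, and pass to the limit. Your identification of the only genuinely delicate step, namely the uniform-in-$z\in S$ convergence $b(z,\rho^m_u)\to b(z,\rho_u)$, and your plan to handle it via the uniform second-moment bound coming from $\sW_2$-convergence together with equicontinuity on compacta of the family $\{y\mapsto -K(z,y)\nabla F(y)+\nabla_2 K(z,y):z\in S\}$, are exactly right.

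One minor remark: the joint continuity of $K$ and $\nabla_2 K$ that you invoke is not literally listed among the items of Assumption~\ref{hyp:stab}, but the paper explicitly takes $b$ to be continuous with linear growth in the sentence immediately preceding the lemma, so this is part of the standing hypotheses and your argument is complete under that reading.
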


In the rest of the appendix, we will use the following property when we want to obtain properties on the space $\sV_2$
\begin{prop}\label{prop:ito}Let Assumption~\ref{hyp:stab} holds. 
  Let $\psi\in C_c^{\infty}(\bR_+\times\bR^d)$, then for every $t_2\ge t_1 \ge 0$, we obtain
  \begin{multline}\label{eq:ito}
  \int \psi(t_2,x) \dr \rho_{t_2}(x)-\int \psi(t_1,x) \dr \rho_{t_1}(x) = \int_{t_1}^{t_2}\!\!\!\int \6_t\psi(t,x) \dr\rho_t(x)\dr t\\
   + \int_{t_1}^{t_2} \!\!\!\int \ps{\nabla \psi(t,x), b(x,\rho_t)} \dr \rho_t(x)\dr t + \lambda \int_{t_1}^{t_2}\!\!\! \int \Delta\psi(t,x)\dr \rho_t(x)\dr t\,.
  \end{multline}
\end{prop}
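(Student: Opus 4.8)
The plan is to derive the integral identity \eqref{eq:ito} directly from the martingale characterization of $\sV_2$ given in Definition~\ref{def:Vbis}, upgrading from time-independent test functions $\phi\in C_c^2(\bR^d)$ to time-dependent ones $\psi\in C_c^\infty(\bR_+\times\bR^d)$, and then taking expectations to pass from the pathwise martingale statement to a statement about the marginals $\rho_t$. First I would fix $\rho\in\sV_2$, let $(X_t)$ be the canonical process on $(\cC,\cB(\cC),\rho)$, and recall that for each fixed $\phi\in C_c^2(\bR^d)$ the process $M_t^\phi := \phi(X_t)-\int_0^t L(\rho_s)(\phi)(X_s)\,ds$ is an $(\cF_t^X)$-martingale. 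The standard device is to approximate $\psi(t,x)$ by step functions in $t$ with values in $C_c^2(\bR^d)$: on a partition $t_1 = s_0 < s_1 < \dots < s_N = t_2$, apply the martingale property of $M^{\psi(s_j,\cdot)}$ on $[s_j,s_{j+1}]$, telescope, and let the mesh go to zero; the increments of $\psi$ in the time variable produce the $\int\partial_t\psi\,d\rho_t\,dt$ term, while the generator terms assemble into $\int\langle\nabla\psi,b(\cdot,\rho_t)\rangle\,d\rho_t + \lambda\int\Delta\psi\,d\rho_t$ after integrating in $t$. Equivalently one can invoke the well-known fact that for time-dependent test functions the process $\psi(t,X_t) - \psi(0,X_0) - \int_0^t(\partial_s\psi + L(\rho_s)\psi)(s,X_s)\,ds$ is again a martingale; I would state this as a lemma and prove it by the step-function approximation just described, using that $\psi$ and its derivatives are bounded (compact support) so all the $L^1(\rho)$ integrability needed for the martingale manipulations and dominated convergence is immediate from $\rho\in\cP_2(\cC)$ and the linear growth of $b$.

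Next I would take $\rho$-expectations of this martingale identity between times $t_1$ and $t_2$. Since a martingale has constant expectation, $\bE_\rho[\psi(t_2,X_{t_2})] - \bE_\rho[\psi(t_1,X_{t_1})] = \bE_\rho\big[\int_{t_1}^{t_2}(\partial_s\psi + L(\rho_s)\psi)(s,X_s)\,ds\big]$. Now $\bE_\rho[\psi(t,X_t)] = \int\psi(t,x)\,d\rho_t(x)$ by definition of the marginal $\rho_t = (\pi_t)_\#\rho$, and by Fubini (justified by boundedness of the integrand and finiteness of $t_2-t_1$) the right-hand side becomes $\int_{t_1}^{t_2}\int(\partial_s\psi(s,x) + \langle\nabla\psi(s,x),b(x,\rho_s)\rangle + \lambda\Delta\psi(s,x))\,d\rho_s(x)\,ds$, using $L(\rho_s)(\psi(s,\cdot))(x) = \langle b(x,\rho_s),\nabla\psi(s,x)\rangle + \lambda\Delta\psi(s,x)$ from the definition of $L$ in \eqref{eq:L}. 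Splitting the three terms yields exactly \eqref{eq:ito}.

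The main obstacle, and the place to be careful, is the upgrade from time-independent to time-dependent test functions — i.e., proving the lemma that $\psi(t,X_t) - \int_0^t(\partial_s + L(\rho_s))\psi\,ds$ is a martingale. This is standard but needs the step-function approximation to be controlled: one must check that $L(\rho_s)(\phi)(x)$ depends measurably and integrably on $s$ (it does, since $s\mapsto\rho_s$ is weakly continuous and $b$ has linear growth with the uniform second-moment bound on $\rho$ coming from $\rho\in\cP_2(\cC)$), and that the Riemann sums in $s$ converge; continuity of $s\mapsto b(x,\rho_s)$ is what makes the time-partition limit work. A minor secondary point is that Definition~\ref{def:Vbis} only asserts the martingale property for $\phi\in C_c^2(\bR^d)$, whereas after the time-freezing we apply it to $\psi(s_j,\cdot)\in C_c^2(\bR^d)$, which is fine since $\psi\in C_c^\infty(\bR_+\times\bR^d)$ has each time-slice compactly supported and $C^\infty\subset C_c^2$. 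Everything else is bookkeeping with Fubini and the marginal pushforward identity.
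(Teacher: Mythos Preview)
Your proposal is correct, but it takes a different route from the paper's proof. The paper never upgrades the pathwise martingale to time-dependent test functions: instead, it immediately takes expectations of the $\phi\in C_c^\infty(\bR^d)$ martingale to obtain that $\Phi(t):=\int\phi\,d\rho_t$ is absolutely continuous with $\Phi'(t)=\int L(\rho_t)(\phi)\,d\rho_t$, then applies the ordinary product rule to $\Phi(t)\eta(t)$ for $\eta\in C_c^\infty(\bR_+)$, which yields \eqref{eq:ito} for the tensor product $\psi(t,x)=\phi(x)\eta(t)$; finally it invokes the density of such tensor products in $C_c^\infty(\bR_+\times\bR^d)$. Your step-function/partition argument establishes the stronger intermediate fact that $\psi(t,X_t)-\int_0^t(\partial_s+L(\rho_s))\psi\,ds$ is itself a martingale, which is standard and useful in its own right, but requires the Riemann-sum convergence you flag (continuity of $s\mapsto\rho_s$ and linear growth of $b$). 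The paper's approach is shorter because it works entirely at the marginal level and replaces the partition limit by a one-line density argument; your approach buys the pathwise statement at the cost of slightly more bookkeeping.
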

\begin{proof}
    Let $\phi\in C_c^\infty(\bR^d)$.
    Let $\rho\in\sV_2$. By Def.~\ref{def:Vbis}, the function
    \[
    t\in\bR_+\mapsto \int \phi(x)\dr \rho_t(x) - \int_0^t \int L(\rho_s)(\phi)(x)\dr\rho_s(x)\dr s\,
    \]
    is constant. 
    Hence, the function $\Phi(t) := \int\phi(x)\dr \rho_t(x) $ is absolutely continuous, with derivative $\Phi'(t) = \int L(\rho_t)(\phi)(x)\dr\rho_t(x)$, which is bounded on compacts under Assumption~\ref{hyp:stab}.
    Let $\eta\in C^\infty_c(\bR_+)$, by an integration by parts,
    we obtain for every $t_2>t_1\ge 0$
    \[
        \int_{t_1}^{t_2} \Phi(t)\eta(t) \dr t= \int_{t_1}^{t_2} \Phi'(t)\eta(t) + \Phi(t)\eta'(t) \dr t\,.
    \]
   Hence, if we define $\psi(t,x) := \psi(x)\eta(t)$, we obtain Eq.~\eqref{eq:ito}. It suffices to remark that functions of the form $(t,x)\mapsto \psi(x)\eta(t)$ for every $(\eta,\phi)\in C_c^\infty(\bR_+)\times C_c^\infty(\bR^d)$ are dense in $C_c^\infty(\bR_+\times\bR^d)$, and the proof is finished.
\end{proof}

\begin{lemma}
  \label{lem:density}
   Let Assumptions~\ref{hyp:stab} and~\ref{hyp:holder} hold. Moreover, we assume $\lambda >0$.
   Let $\rho\in\sV_2$. For every $t>0$, $\rho_t$ admits a density $x\mapsto \varrho(t,x) \in C^1(\bR^d,\bR)$. 
   Moreover, for every $R>0, t_2>t_1>0$, there exists a constant $C_{R,t_1,t_2}>0$ such that:
    \begin{equation}\label{eq:binfdensity}
    \inf_{t\in[t_1,t_2], \norm{x}\le R}  \varrho(t,x) \ge C_{R,t_1,t_2}  \,,
    \end{equation}
      and there exist a constant $C_{t_1,t_2}>0$, such that
  \begin{equation}\label{eq:boundrho}
      \sup_{x\in\bR^d, t\in[t_1,t_2]} \norm{\nabla \varrho(t,x)}  + \varrho(t,x)\le C_{t_1,t_2}\,.
  \end{equation}
  Additionally,
    \begin{equation}
      \label{eq:bsupdensity}
    \sup_{t\in[t_1,t_2]}\int (1+\|x\|^2)\norm{\nabla \varrho(t,x)}\dr x <\infty \,.
    \end{equation}
Finally, 
\begin{equation}\label{eq:KLbound}
    \sup_{\rho\in\cK} \KL{\rho_{t_1}}{\pi} <\infty\, ,
\end{equation}
for every compact set $\cK\subset \sV_2$.
\end{lemma}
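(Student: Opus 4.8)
Fix $\rho\in\sV_2$. By Definition~\ref{def:Vbis} and Prop.~\ref{prop:ito}, the marginals $(\rho_t)_{t\ge0}$ form a distributional solution of the linear Fokker--Planck equation $\partial_t\rho_t=\lambda\Delta\rho_t-\nabla\cdot\big(b(\cdot,\rho_t)\rho_t\big)$, with constant non-degenerate diffusion since $\lambda>0$. The strategy is to extract enough regularity of the frozen drift $x\mapsto b(x,\rho_t)$ to feed classical parabolic theory into this linear equation. First I would record the a priori facts used throughout. Since $\rho\in\cP_2(\cC)$, $\sup_{t\le T}\int\|x\|^2\dr\rho_t<\infty$; since $\|H_F\|_{op}\le C$ forces $F(x)\le C(1+\|x\|^2)$ by Taylor, also $\sup_{t\le T}\int F\dr\rho_t<\infty$; together with $\|\nabla F(x)\|^2\le C'F(x)+C$ and the RKHS bounds $|K(x,y)|\le C^2$, $\|\nabla_2K(x,y)\|\le C$ of Assumption~\ref{hyp:stab}, this gives $\|b(x,\rho_t)\|\le C_T(1+\|x\|)$ on $[0,T]$. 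By Assumption~\ref{hyp:holder} and the global Lipschitzness of $\nabla F$, $x\mapsto b(x,\rho_t)$ is globally $\beta$-Hölder uniformly in $t\le T$, and weak continuity of $t\mapsto\rho_t$ (from Prop.~\ref{prop:ito}) plus uniform integrability of $\|\nabla F(\cdot)\|$ under $(\rho_t)_{t\le T}$ makes $t\mapsto b(x,\rho_t)$ continuous. So the Fokker--Planck operator has locally Hölder coefficients of linear growth.

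Next, the density and the $L^\infty$ part of \eqref{eq:boundrho}. I would use the mild (Duhamel) formulation with the heat semigroup, whose kernel I write $g^u_\lambda$. Testing the weak formulation against $\psi(r,\cdot)=g^{t-r}_\lambda*\phi$, after a routine truncation to remove the compact-support requirement (justified by the moment bounds), gives for $0<s<t$
\[
\rho_t=g^{t-s}_\lambda*\rho_s-\int_s^t\big(\nabla g^{t-r}_\lambda\big)*\big(b(\cdot,\rho_r)\rho_r\big)\dr r .
\]
Since $\int\|b(y,\rho_r)\|\dr\rho_r(y)$ is bounded on $[0,T]$ and $\|\nabla g^u_\lambda\|_{L^1}\le c_\lambda u^{-1/2}$ is integrable, the time integral is an honest $L^\infty$ function; letting $s\downarrow0$ shows $\rho_t$ has a density $\varrho(t,\cdot)\in L^\infty(\bR^d)$ with $\|\varrho(t,\cdot)\|_\infty\le(4\pi\lambda t)^{-d/2}+C_T\sqrt t$, so that $\sup_{t\in[t_1,t_2]}\|\varrho(t,\cdot)\|_\infty<\infty$.

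Then, $C^1$-regularity (part 1), the gradient bound in \eqref{eq:boundrho}, \eqref{eq:bsupdensity}, and the lower bound \eqref{eq:binfdensity}. I would bootstrap from time $t_1/2$: knowing $\varrho(t_1/2,\cdot)\in L^\infty$, anchor the mild formula at $s=t_1/2$ and differentiate in $x$ for $t\in[t_1,t_2]$. The leading term $\nabla g^{t-t_1/2}_\lambda*\varrho(t_1/2,\cdot)$ is smooth and bounded by $C_{t_1}\|\varrho(t_1/2,\cdot)\|_\infty$; for the Duhamel term I would invoke interior parabolic regularity (De Giorgi--Nash--Moser and Schauder estimates for the linear Kolmogorov equation with locally Hölder coefficients, e.g.\ Friedman's parabolic PDE monograph or the Fokker--Planck--Kolmogorov literature) to first get local $\beta$-Hölder continuity of $\varrho$, and then the classical singular-integral estimate, using the mean-zero property $\int\nabla^2 g^u_\lambda=0$ to trade the non-integrable factor $(t-r)^{-1}$ for $(t-r)^{-1+\beta/2}$ against the $\beta$-Hölder modulus of $b(\cdot,\rho_r)\varrho(r,\cdot)$, to conclude $\varrho\in C^1$ in $x$ (indeed locally $C^{2,\beta}$). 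Gaussian decay of $g^u_\lambda,\nabla g^u_\lambda$ against the uniform mass and first-moment bounds makes this uniform over $x\in\bR^d$, giving $\sup_{x,\,t\in[t_1,t_2]}\|\nabla\varrho(t,x)\|<\infty$; integrating the mild representation of $\nabla\varrho(t,\cdot)$ against $1+\|x\|^2$, via Tonelli, $\int(1+\|x\|^2)|\nabla^k g^u_\lambda(x-y)|\dr x\le C_u(1+\|y\|^2)$ for $k\le2$, $\int(1+\|y\|^2)\varrho(r,y)\dr y=1+\int\|y\|^2\dr\rho_r\le C_T$, and again the Hölder cancellation, gives \eqref{eq:bsupdensity}. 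For \eqref{eq:binfdensity}: the drift is bounded on $B_{2R'}$ for $R'\ge R$ large enough that $\rho_{t_1/2}(B_{R'})\ge\tfrac12$, so the process killed on exiting $B_{2R'}$ has a Dirichlet transition density obeying an Aronson-type lower bound on $[t_1,t_2]$; propagating the mass of $\rho_{t_1/2}$ for time $\ge t_1/2$ gives $\inf_{t\in[t_1,t_2],\,\|x\|\le R}\varrho(t,x)\ge C_{R,t_1,t_2}>0$ (equivalently, the parabolic Harnack inequality / strong maximum principle applied to the nonnegative, not-identically-zero solution $\varrho$).

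Finally, \eqref{eq:KLbound}. A $\mathsf{W}_2$-compact set $\cK\subset\sV_2$ is uniformly integrable, so $\sup_{\rho\in\cK}\sup_{t\le T}\int\|x\|^2\dr\rho_t<\infty$, whence $\sup_{\rho\in\cK}\int F\dr\rho_{t_1}<\infty$ and, since the bound on $\|\varrho(t_1,\cdot)\|_\infty$ above depends on $\rho$ only through these moments, $\sup_{\rho\in\cK}\|\varrho(t_1,\cdot)\|_\infty<\infty$; writing $\pi=e^{-F}/Z$,
\[
\KL{\rho_{t_1}}{\pi}=\int\varrho(t_1,x)\log\varrho(t_1,x)\dr x+\int F\dr\rho_{t_1}+\log Z\le\log\|\varrho(t_1,\cdot)\|_\infty+\int F\dr\rho_{t_1}+\log Z ,
\]
uniformly over $\cK$. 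I expect the main obstacle to be the interplay between the drift being only $\beta$-Hölder with linear (hence unbounded) growth and the demand for \emph{global}-in-space gradient and weighted tail estimates: two-sided Gaussian (Aronson) bounds and Schauder estimates are cleanly available for bounded Hölder coefficients, so one must localize for the lower bound and carefully exploit the decay of the heat kernel for the global upper/tail bounds, besides rigorously deriving the mild representation from the martingale-problem definition of $\sV_2$.
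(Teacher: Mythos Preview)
Your overall strategy coincides with the paper's: freeze the drift $\tilde b(t,x)=\int b(x,y)\dr\rho_t(y)$, check it is locally $\beta$-Hölder with linear growth (exactly as you do, from Assumptions~\ref{hyp:stab}--\ref{hyp:holder} and $\rho\in\cP_2(\cC)$), and then feed this into parabolic theory for the linear Fokker--Planck equation. The paper, however, does not build the estimates by hand: it invokes \cite[Th.~1.2]{menozzi2021density}, a parametrix result for SDEs with Hölder drift of linear growth, which directly returns two-sided Gaussian bounds on $\varrho(t,\cdot)$ and a Gaussian upper bound on $\nabla\varrho(t,\cdot)$, centered at the flow $\theta_t(y)$ of $\dot\theta=\tilde b(t,\theta)$. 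All four displayed estimates then drop out of these Gaussian envelopes after bounding $\|\theta_t(y)\|\le C_T\|y\|$ by Grönwall. The KL bound is obtained similarly: the two-sided Gaussian bounds give $|\log\varrho(t_1,x)|\le C(1+\|x\|^2+\int\|y\|^2\dr\rho_0(y))$, and compactness of $\cK$ in $\cP_2(\cC)$ controls the second moments uniformly. Your one-sided $\int\varrho\log\varrho\le\log\|\varrho\|_\infty$ argument for \eqref{eq:KLbound} is also fine.

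The concrete gap in your write-up is the $L^\infty$ step in the Duhamel argument. With measure data $b(\cdot,\rho_r)\rho_r$, the only Young inequality available is $\|f*\mu\|_\infty\le\|f\|_\infty\,|\mu|(\bR^d)$, which forces the non-integrable $\|\nabla g^{u}_\lambda\|_\infty\sim u^{-(d+1)/2}$; the pairing $\|\nabla g^u_\lambda\|_{L^1}\cdot\int\|b(y,\rho_r)\|\dr\rho_r(y)$ you wrote bounds the $L^1$ norm, not $L^\infty$. So the claimed bound $\|\varrho(t,\cdot)\|_\infty\le(4\pi\lambda t)^{-d/2}+C_T\sqrt t$ does not follow from the displayed ingredients. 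This is exactly the ``unbounded Hölder drift vs.\ global estimates'' obstacle you anticipate at the end, and it is the reason the paper outsources to the parametrix machinery of \cite{menozzi2021density} rather than a bare Duhamel bound; if you want to stay with your route, you would need an intermediate bootstrap (e.g., $L^p$ first, or a short-time fixed-point in weighted spaces) before the $L^\infty$ and gradient estimates become accessible.
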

    \begin{proof}
  The result is an application of~\cite[Th. 1.2]{menozzi2021density} with the non homogeneous vector field
  $\tilde b(t,x) :=\int  b(x,y)\dr \rho_t(y)$. The proof consists in verifying the conditions of the latter theorem.
By Assumptions~\ref{hyp:stab} and~\ref{hyp:holder}, for every $(x,y,T)\in(\bR^d)^2\times  \bR_+$,
\begin{align*}
\sup_{t\in[0,T]}\norm{\tilde b(t,x) -\tilde b(t,y)} &\le \lambda \norm{\nabla F(x) -\nabla F(y)} \\ & + \sup_{t\in[0,T]}\int \norm{\nabla_2 K(x,z) -\nabla_2 K(y,z)}\dr\rho_t(z)\\&+\sup_{t\in[0,T]}\int \norm{\nabla F(z)}\abs{K(x,z) - K(y,z)}\dr\rho_t(z)\\ 
                                                    &\le  C (\norm{x-y}^\beta\vee \norm{x-y})\,,
\end{align*}
Moreover,
\begin{equation}
  \label{eq:borne-bt}
   \sup_{t\in[0,T]}\tilde b(t,x) \le C (1+\norm{x} + \int \sup_{t\in[0,T]} \norm{y_t} \dr \rho(y))\le C (1+\norm{x}) \,.
\end{equation}
As $\lambda>0$,  \cite[Th. 1.2]{menozzi2021density} applies: $\rho$ admits a density $x\mapsto \varrho(t,x)\in C^1(\bR^d)$, for $0<t\le T$,
and there exists four constants $(C_{i,T}, \lambda_{i,T})_{i\in[2]}$, such that:
\begin{gather*}
  \frac {1} { C_{1,T}t^{d/2}} \int \exp\p{-\frac {\norm{x-\theta_t(y)}^2}{\lambda_{1,T} t} }\dr\rho_0(y) \le {\varrho(t,x)} \\
 {\varrho(t,x)} \le   \frac {C_{1,T}}{ t^{d/2}}\int \exp\p{-\frac {\lambda_{1,T}}{t} \norm{x-\theta_t(y)}^2}\dr\rho_0(y) \\
 \norm{\nabla \varrho(t,x)} \le \frac {C_{2,T}}{t^{(d+1)/2}}\int \exp\p{-\frac {\lambda_{2,T}}{ t} \norm{x-\theta_t(y)}^2}\dr\rho_0(y)\,,
\end{gather*}
where the map $t\mapsto \theta_t(y)$ is a solution to the ordinary differential equation:
$\dd{\theta_t(y)}{t} = \tilde b(t,\theta_t(y))$ with initial condition $\theta_0(y) = y$. 
By Grönwall's lemma and Eq.~(\ref{eq:borne-bt}), there exists a constant $C_T$ such that $\|\theta_t(y)\| \le C_T\|y\|$, for every $n,y$, and $t\le T$. 
For every $t_1\le t\le t_2$, and every $x$, we obtain using a change of variables:
\begin{align*}
(C_{1,t_2} {t_1}^{d/2})^{-1}\ge {\varrho(t,x)} \ge  C_{1,t_2}{ t_2^{-d/2}}\exp\p{-\frac 2{\lambda_{1,t_2}t_1}\|x\|^2}\int \exp\p{ -\frac {2C_{t_2}}{\lambda_{1,t_2}t_1} \norm{y}^2}\dr\rho_0(y) 
\end{align*}
\begin{multline*}
 \int (1+\|x\|^2)\norm{\nabla \varrho(t,x)} \dr x \\ \le  C_{2,t_2}{t_1^{-(d+1)/2}}\int (1+2\|x\|^2+2C_{t_2}^2\int \|y\|^2d\rho_0(y))\exp\p{- \lambda_{2,t_2} t_2^{-1} \norm{x}^2}\dr x\,, 
\end{multline*}
and $\norm{\nabla \varrho(t,x)} \le C_{2,t_2} t_1^{-(d+1)/2}\,.$
Consequently, $\rho$ satisfies Eq.~\eqref{eq:binfdensity}, Eq.~\eqref{eq:boundrho} and Eq.~\eqref{eq:bsupdensity}.

It remains to obtain Eq.~\eqref{eq:KLbound}. 
Let $\cK\subset \sV_2$ be a compact set and let $\rho\in\cK$.
We observe
\begin{equation}\label{eq:KLetape1}
\KL{\rho_{t_1}}{\pi} \le C + \int \abs{F(x)} \dr \rho_{t_1}(x)  + \int \norm{\log\varrho(t_1,x)} \dr \rho_{t_1}(x)\,.
\end{equation}
By Assumption~\ref{hyp:stab}, since $(\pi_{t_1})_\# \cK$ is a compact set in $\cP_2(\bR^d)$, we obtain
\[
 \sup_{\rho\in\cK}\int \abs{F(x)}\dr\rho_{t_1}(x) \le C \sup_{\rho\in\cK}\int \norm{x}^2\dr \rho_{t_1}(x)\le C \sup_{\mu \in (\pi_{t_1})_\# \cK}  \int \norm{x}^2 \dr \mu(x) < \infty\,.
 \]
 Moreover, by the lower bound and the upper bound on $\varrho$,
 \begin{equation}\label{eq:logrho}
      \norm{\log \varrho(t_1,x)} \le C\p{1 + \norm{x}^2 + \int \norm{y}^2\dr\rho_0(y)}\,.
 \end{equation}
Hence, we obtain
\[
  \sup_{\rho\in\cK} \int \norm{\log\varrho(t_1,x)} \dr\rho_{t_1}(x) <\infty \,.
\]
Finally, applying the latter results in Eq.~\eqref{eq:KLetape1}, we obtain Eq.~\eqref{eq:KLbound}.
\end{proof}

\subsection{Sketch of the proof of Prop~\ref{prop:Lya} using Wasserstein calculus}

We give a sketch of the proof of Lyapunov using Wasserstein calculus~\cite{ambrosio2005gradient}. This proof is not fully rigorous because we would need to check th assumptions of the results from~\cite{ambrosio2005gradient} that we are using. In the next section we give a fully rigorous proof. 

Consider $\rho\in\sV_2$, \textit{i.e.}, the law of a weak solution $(X_t)_t$ of the McKean-Vlasov equation
\[
dX_t = -\int \left( K(X_t,y) \nabla F(y) - \nabla_2 K(X_t,y) \right) d\rho_t(y) \, dt - \lambda \nabla F(X_t) \, dt + \sqrt{2\lambda} \, dW_t.
\] 
For every $t>0$, we denote by $\rho_t$ the marginal of $\rho$. In other words, $\rho_t$ is the law of $X_t$. 

Using integration by parts, the McKean-Vlasov equation can be represented by
\[
dX_t = -  P_\mu\nabla\log \dd{\rho_t}{\pi}(X_t) \, dt - \lambda \nabla F(X_t) \, dt + \sqrt{2\lambda} \, dW_t.
\] 
From this representation, we can derive the continuity equation satisfied by $(\rho_t)_t$:
\[
\frac{\partial \rho_t}{\partial t} = \nabla \cdot (\rho_t \tilde v_t),
\] 
where $\tilde v_t$ is the velocity field
\[
\tilde v_t := -  P_\mu\nabla\log \dd{\rho_t}{\pi} -  \lambda \nabla\log \dd{\rho_t}{\pi}.
\]  
Using the chain rule in the Wasserstein space~\cite[Equation 10.1.16]{ambrosio2005gradient}, we have for every functional $\cF : \cP_2(\bR^d) \to (-\infty, +\infty]$ regular enough that
\[
\frac{d}{dt}\cF(\rho_t) = \ps{\nabla_W \cF(\rho_t), v_t}_{\rho_t},
\]  
where $\ps{\cdot,\cdot}_\rho$ is the standard inner product in $L^2(\rho)$ and $\nabla_W \cF(\rho) \in L^2(\rho)$ is the Wasserstein gradient of $\cF$ at $\rho$. In the case where $\cF(\rho) = \KL{\rho}{\pi}$, we have $\nabla_W \cF(\rho) = \nabla\log \dd{\rho}{\pi}$, therefore 
\begin{align*}
\frac{d}{dt}\cF(\rho_t) &= \left \langle \nabla\log \dd{\rho}{\pi}, -  P_\mu\nabla\log \dd{\rho_t}{\pi} - \lambda \nabla\log \dd{\rho_t}{\pi} \right \rangle_{\rho_t} \\
&= - \left \langle \nabla\log \dd{\rho}{\pi},  P_\mu\nabla\log \dd{\rho_t}{\pi}\right \rangle_{\rho_t} - \lambda \left \langle \nabla\log \dd{\rho}{\pi}, \nabla\log \dd{\rho_t}{\pi}\right \rangle_{\rho_t}. 
\end{align*}
Finally, we use that the kernel integral operator is the adjoint of the injection~\cite{carmeli2010vector} $\iota_\rho : \cH \to L^2(\rho)$. In other words, for every $f \in L^2(\rho), g \in \cH$, $\ps{f,g}_\rho = \ps{P_\rho f,g}_\cH$. Here, this property gives  
\[
\left \langle \nabla\log \dd{\rho}{\pi},  P_\mu\nabla\log \dd{\rho_t}{\pi}\right \rangle_{\rho_t} = \norm{P_\mu\nabla\log \dd{\mu}{\pi}}^2_\cH.
\]
Therefore,
\[
\frac{d}{dt}\cF(\rho_t) = -\norm{P_\mu\nabla\log \dd{\mu}{\pi}}^2_\cH - \lambda\norm{\nabla\log \dd{\mu}{\pi}}^2_{\rho_t}.
\] 
In other words,
\begin{equation*}
          \frac{d}{dt}\KL{\rho_{t}}{\pi}  
             =  -\IS{\rho_t}{\pi} - \lambda \I{\rho_t}{\pi},
    \end{equation*}
and we can conclude by integrating between $t_1>0$ and $t_2>0$.

\subsection{Proof of Prop.~\ref{prop:Lya}}
In this subsection, we let Assumptions~\ref{hyp:stab} and~\ref{hyp:holder} hold. Moreover, we assume $\lambda>0$.

We consider $\rho\in\sV_2$. Moreover, we define two reels $0<t_1 <t_2$.

Let 
\begin{equation}
  v_t(x) := -\int \p{K(x,y)\nabla F(y) - \nabla_2 K(x,y) \dr \rho_t(y)} - \lambda \nabla F(x)  - \lambda \nabla \log \varrho(t,x) \,.
\end{equation}
By Prop~\ref{prop:ito}, with Lem.~\ref{lem:density}, we obtain
  \begin{multline}\label{eq:eqCont}
  \int \psi(t_2,x) \dr \rho_{t_2}(x)-\int \psi(t_1,x) \dr \rho_{t_1}(x)\\ = \int_{t_1}^{t_2}\!\!\!\int \6_t\psi(t,x) \dr\rho_t(x)\dr t
   + \int_{t_1}^{t_2} \!\!\!\int \ps{\nabla \psi(t,x),v_t(x)} \dr \rho_t(x)\dr t \,.
  \end{multline}
Note that the latter quantity is well-defined, since $\int_{t_1}^{t_2}\!\!\int \norm{v_t(x)} \dr\rho_t(x)\dr t $ by Lem.~\ref{lem:density}.
Define a smooth, compactly supported, even function $\eta:\bR^d\to\bR_+$
such that $\int \eta(x)dx=1$, and
define $\eta_\varepsilon(x) := \varepsilon^{-d}\eta(x/\varepsilon)$
for every $\varepsilon>0$. For every $t>0$, we introduce
the density $\varrho_\varepsilon(t,\cdot) := \eta_\varepsilon *
\rho_\varepsilon(t,\cdot)$, and we denote by
$\rho^\varepsilon_t(dx) = \varrho_\varepsilon(t,x)dx$ the corresponding
probability measure.  Finally, we define:
$$
v_t^\varepsilon := \frac{\eta_\varepsilon * (v_t\varrho(t,\cdot))}{\varrho_\varepsilon(t,\cdot)}\,.
$$
With these definitions at hand, it is straightforward to check that Eq.~\eqref{eq:eqCont}
holds when $\rho_t,v_t$ are replaced by $\rho_t^\varepsilon,v_t^\varepsilon$. More specifically,
we shall apply Eq.~(\ref{eq:eqCont}) using a specific smooth function $\psi = \psi_{\varepsilon,\delta,R}$, which we will
define hereafter for fixed values of $\delta,R>0$, yielding our main equation:
\begin{multline}
  \label{eq:continuityBis}
  \int\psi_{\varepsilon,\delta,R}({t_2},x)\varrho_\varepsilon(t_2,x)dx - \int\psi_{\varepsilon,\delta,R}({t_1},x)\varrho_\varepsilon(t_1,x)dx =\\
  \int_{t_1}^{t_2} \int (\partial_t\psi_{\varepsilon,\delta,R}(t,x) +  \ps{\nabla \psi_{\varepsilon,\delta,R}(t,x),v_t^\varepsilon(x)}) \varrho_\varepsilon(t,x)dx dt\,.
\end{multline}
Let $\theta\in C_c^\infty(\bR,\bR)$ be a nonnegative function supported by the interval $[-t_1,t_1]$ and satisfying
$\int \theta(t)dt=1$. For every $\delta\in (0,1)$, define $\theta_\delta(t) = \theta(t/\delta)/\delta$.
We define $\varrho_{\varepsilon,\delta}(\cdot,x) := \theta_\delta* \varrho_{\varepsilon}(\cdot,x)$.
The map $t\mapsto \varrho_{\varepsilon,\delta}(t,\dot)$ is well-defined on $[t_1,t_2]$, non negative, and smooth in both variables $t,x$.
In addition, we define 
$F_\varepsilon := \eta_\varepsilon * F$. Finally, we introduce a smooth function $\chi$ on $\bR^d$ equal to one on the unit ball and to zero outside the ball of radius $2$,
and we define $\chi_R(x) := \chi(x/R)$. For every $(t,x)\in [t_1,t_2]\times \bR$, we define:
\begin{equation}\label{eq:psi}
     \psi_{\varepsilon,\delta,R}(t,x) :=(\log \varrho_{\varepsilon,\delta}(t,x) +  F_\varepsilon(x)) \chi_R(x)\,.
\end{equation}
We extend $\psi_{\varepsilon,\delta,R}$ to a smooth compactly supported function on $\bR_+\times \bR^d$.
We define $U(x,\rho_t) := \int (K(x,y)\nabla F(y) - \nabla_2 K(x,y)\dr\rho_t(y)$.
Applying Eq.~\eqref{eq:continuityBis} with $\psi_{\varepsilon,\delta,R}$,
\[
  \begin{split}
   & \int\psi_{\varepsilon,\delta,R}({t_2},x)d\rho_{t_2}(x) - \int\psi_{\varepsilon,\delta,R}({t_1},x)d\rho_{t_1}(x) \\
  &= \int_{t_1}^{t_2}\!\!\!  \int (\partial_t\psi_{\varepsilon,\delta,R}(t,x) +  \ps{\nabla\psi_{\varepsilon,\delta,R}(t,x),v_t^\varepsilon(x)}) \dr \rho_t^\varepsilon(x)dt \\
  & = \int_{t_1}^{t_2}\!\!\! \int \6_t\varrho_{\varepsilon,\delta} (t,x) \frac{\varrho_{\varepsilon}(t,x)}{\varrho_{\varepsilon,\delta}(t,x)}\chi_R(x)\dr x\dr t \\
  & -\lambda \int_{t_1}^{t_2}\!\!\! \int \ps{\nabla F_\varepsilon(x) + \nabla\log\varrho_{\varepsilon,\delta}(t,x),\frac{ \eta_\epsilon*(\nabla F(\cdot)\varrho(t,\cdot))(x)}{\varrho_\varepsilon(t,x)} + \nabla\log \varrho^{\varepsilon}(t,x) }\chi_R(x)d \rho_t^{\varepsilon}(x)\dr t\\
  & - \int_{t_1}^{t_2}\!\!\! \int \ps{\nabla F_\varepsilon(x) + \nabla\log\varrho_{\varepsilon,\delta}(t,x),\frac{ \eta_\epsilon*(U(\cdot,\rho_t) \varrho(t,\cdot))(x)}{\varrho_\varepsilon(t,x)}  }\chi_R(x)d \rho_t^{\varepsilon}(x)\dr t\\
 & + \int_{t_1}^{t_2}\!\!\!\int (\log \varrho_{\varepsilon,\delta}(t,x) + F_\varepsilon(x))\ps{ \nabla \chi_R(x), v^\varepsilon_t(x) } \dr \rho^\varepsilon_t(x) \dr t \\
 \end{split}
\]
We define, for every $t\in[t_1,t_2]$,
\begin{gather*}
  \Pi_1(t):= \int\psi_{\varepsilon,\delta,R}({t},x)d\rho^\varepsilon_{t}(x), \\
  \Pi_2:= \int_{t_1}^{t_2}\!\!\! \int \6_t\varrho_{\varepsilon,\delta} (t,x) \frac{\varrho_{\varepsilon}(t,x)}{\varrho_{\varepsilon,\delta}(t,x)}\chi_R(x)\dr x\dr t ,\\
  \Pi_3:= \int_{t_1}^{t_2}\!\!\! \int \ps{\nabla F_\varepsilon(x) + \nabla\log\varrho_{\varepsilon,\delta}(t,x),{ \eta_\epsilon*(\nabla F(\cdot)\varrho(t,\cdot))(x)} + \nabla \varrho^{\varepsilon}(t,x) }\chi_R(x)dx\dr t,\\
  \Pi_4:=  \int_{t_1}^{t_2}\!\!\! \int \ps{\nabla F_\varepsilon(x) + \nabla\log\varrho_{\varepsilon,\delta}(t,x),{ \eta_\epsilon*(U(\cdot,\rho_t) \varrho(t,\cdot))(x)} }\chi_R(x)dx\dr t, \\
  \Pi_5:= \int_{t_1}^{t_2}\!\!\!\int (\log \varrho_{\varepsilon,\delta}(t,x) + F_\varepsilon(x))\ps{ \nabla \chi_R(x),v^\varepsilon_t(x) } \varrho^{\varepsilon}(t,x) \dr x \dr t \,.
\end{gather*}
And, it holds:
\begin{equation}\label{eq:Pi}
  \Pi_1(t_2)- \Pi_1(t_1) = \Pi_2 -\lambda \Pi_3 - \Pi_4 + \Pi_5\,.
\end{equation}
We now investigate successively the limit of each term in Eq.~(\ref{eq:Pi}) as $\delta,\varepsilon,R$ successively tend to $0,0,\infty$.

We state a technical result proven at the end of the subsection.
\begin{lemma}\label{lem:continuityRho} For every $\varepsilon, x\in \bR^d$, $t\mapsto \rho^\varepsilon(x,t)$ and $t\mapsto \nabla\varrho^\varepsilon(t,x)$ are absolute continuous functions. Moreover, 
  \[
   \sup_{t\in[t_1,t_2],x\in\bR^d} \abs{\6_t \varrho_{\varepsilon}  (t,x)} \le C_{\varepsilon} \,,
  \]
  for a constant $C_\varepsilon>0$.
\end{lemma}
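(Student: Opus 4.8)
\textbf{Proof plan for Lemma~\ref{lem:continuityRho}.}

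The claim is essentially a regularity statement about the mollified density $\varrho_\varepsilon(t,\cdot) = \eta_\varepsilon * \varrho(t,\cdot)$ and its spatial gradient as functions of time. The plan is to go back to the martingale characterization of $\sV_2$ (Definition~\ref{def:Vbis}) and exploit it with test functions built from $\eta_\varepsilon$. For fixed $x\in\bR^d$ and fixed $\varepsilon>0$, the function $y\mapsto \eta_\varepsilon(x-y)$ is smooth and compactly supported, so it is an admissible test function $\phi\in C_c^\infty(\bR^d)$. By Proposition~\ref{prop:ito} (or directly by Definition~\ref{def:Vbis}), the map
\[
t\mapsto \int \eta_\varepsilon(x-y)\,\dr\rho_t(y) = \varrho_\varepsilon(t,x)
\]
is absolutely continuous on every compact $[t_1,t_2]\subset(0,\infty)$, with
\[
\6_t\varrho_\varepsilon(t,x) = \int \Big( \ps{\nabla_y\eta_\varepsilon(x-y), b(y,\rho_t)} + \lambda \Delta_y \eta_\varepsilon(x-y)\Big)\dr\rho_t(y)\,.
\]
Similarly, applying the same argument coordinate-wise with the test function $y\mapsto \partial_{x_\ell}\eta_\varepsilon(x-y)$ shows that $t\mapsto \partial_{x_\ell}\varrho_\varepsilon(t,x) = \int \partial_{x_\ell}\eta_\varepsilon(x-y)\dr\rho_t(y)$ is absolutely continuous as well. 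This establishes the first sentence of the lemma.

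For the quantitative bound, I would estimate the right-hand side of the displayed formula for $\6_t\varrho_\varepsilon(t,x)$ uniformly over $t\in[t_1,t_2]$ and $x\in\bR^d$. The term $\lambda\Delta_y\eta_\varepsilon(x-y)$ is bounded by $C_\varepsilon := \lambda\|\Delta\eta_\varepsilon\|_\infty$, and integrating against the probability measure $\rho_t$ keeps it bounded by $C_\varepsilon$. For the first term, recall $b(y,\rho_t) = \int(-K(y,z)\nabla F(z) + \nabla_2 K(y,z))\dr\rho_t(z) - \lambda\nabla F(y)$. Using Assumption~\ref{hyp:stab} (boundedness of $\norm{K(\cdot,z)}_{\cH_0}$ and $\norm{\nabla_2 K(\cdot,z)}_{\cH}$, hence pointwise bounds on $K$ and $\nabla_2 K$, together with $\norm{\nabla F(y)}^2 \le C'F(y)+C$ and $c\norm{y}^2 - C \le F(y)$), we get $\norm{b(y,\rho_t)} \le C(1+\norm{y})$, and since $\nabla_y\eta_\varepsilon(x-y)$ is supported in $\{\norm{x-y}\le \varepsilon\}$ where $\norm{y}\le \norm{x}+\varepsilon$, one would be tempted to bound by $C_\varepsilon(1+\norm{x})$. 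To get a bound uniform in $x$, I instead split: on the support of $\nabla_y\eta_\varepsilon(x-y)$ one has $\norm{y}\le\norm{x}+\varepsilon$, but we also know $\int \norm{y}^2\dr\rho_t(y)$ is bounded uniformly on $[t_1,t_2]$ for $\rho\in\sV_2$ — more precisely, one uses that $\rho_t$ has density bounded by the Gaussian-type estimate from Lemma~\ref{lem:density}, so $\int_{\norm{x-y}\le\varepsilon}\norm{y}\,\varrho(t,y)\dr y$ decays in $\norm{x}$ fast enough (actually $\varrho(t,y)\le C_{t_1,t_2}$ uniformly by \eqref{eq:boundrho}, so the integrand over a ball of radius $\varepsilon$ contributes at most $C_\varepsilon(1+\norm{x})\cdot\mathrm{vol}(B_\varepsilon)\cdot C_{t_1,t_2}$ — still growing in $\norm{x}$). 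The clean fix: since $\varrho$ is bounded on $[t_1,t_2]\times\bR^d$ by \eqref{eq:boundrho} and integrable, actually $\6_t\varrho_\varepsilon(t,x) = \int(\6_t\varrho)(t,y)\,\eta_\varepsilon(x-y)\dr y$ formally, but more robustly, integrating by parts inside the first integral moves derivatives off $\eta_\varepsilon$ onto $b(\cdot,\rho_t)\varrho(t,\cdot)$; using Assumption~\ref{hyp:holder} and \eqref{eq:boundrho}–\eqref{eq:bsupdensity} one bounds $\|b(\cdot,\rho_t)\varrho(t,\cdot)\|$ and its relevant derivatives uniformly, yielding the stated $C_\varepsilon$ independent of $x$.

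The main obstacle I anticipate is precisely this last point: obtaining the bound \emph{uniformly in $x\in\bR^d$} rather than merely locally, since the naive estimate $\norm{b(y,\rho_t)}\le C(1+\norm{y})$ grows at infinity. The resolution relies on the decay of the density $\varrho(t,\cdot)$ established in Lemma~\ref{lem:density} (the Gaussian-type upper bound), which forces the mass of $\rho_t$ near $y=x$ to be negligible when $\norm{x}$ is large, compensating the linear growth of $b$; combined with the compact support of $\eta_\varepsilon$ and its derivatives, this yields a constant $C_\varepsilon$ depending only on $\varepsilon$, $t_1$, $t_2$ (and $\rho$ through the fixed compact data), as claimed. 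The absolute continuity itself is routine once we identify $\varrho_\varepsilon(t,x)$ and $\nabla\varrho_\varepsilon(t,x)$ as integrals of admissible test functions against $\rho_t$ and invoke Proposition~\ref{prop:ito}.
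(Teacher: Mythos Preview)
Your approach is essentially the paper's: test $\rho_t$ against $y\mapsto \eta_\varepsilon(x-y)$ (and its first partials) via Proposition~\ref{prop:ito}, read off absolute continuity, and bound the resulting time-derivative. The only difference is that you manufacture an obstacle that is not there. In the expression
\[
\6_t\varrho_\varepsilon(t,x)=\int\Big(\ps{\nabla_y\eta_\varepsilon(x-y),\,b(y,\rho_t)}+\lambda\,\Delta_y\eta_\varepsilon(x-y)\Big)\dr\rho_t(y),
\]
simply use the crude bound $|\nabla_y\eta_\varepsilon(x-y)|\le\|\nabla\eta_\varepsilon\|_\infty$ (and likewise for the Laplacian), \emph{without} restricting $y$ to a ball around $x$. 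This gives
\[
|\6_t\varrho_\varepsilon(t,x)|\le \|\nabla\eta_\varepsilon\|_\infty\int\|b(y,\rho_t)\|\,\dr\rho_t(y)+\lambda\|\Delta\eta_\varepsilon\|_\infty
\le C_\varepsilon\Big(1+\int(1+\|y\|)\,\dr\rho_t(y)\Big),
\]
and the right-hand side does not depend on $x$ at all. Uniformity in $t\in[t_1,t_2]$ then follows from $\rho\in\cP_2(\cC)$, which gives $\sup_{t\in[t_1,t_2]}\int\|y\|\,\dr\rho_t(y)<\infty$. There is no need to invoke the Gaussian-type density estimates of Lemma~\ref{lem:density}, nor any integration by parts that would require Assumption~\ref{hyp:holder}; the paper's proof uses only Assumption~\ref{hyp:stab} and the fact that $\nabla\eta_\varepsilon$, $\Delta\eta_\varepsilon$ are bounded.
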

Since, by Lem.~\ref{lem:density}, the mappings $t\mapsto \varrho_\varepsilon(t,x)$, $x\mapsto  F(x)$ and $x\mapsto \varrho(t,x) $ are continuous, and by Eq~\eqref{eq:binfdensity}, we obtain
\begin{equation}\label{eq:limPsi}
  \lim_{R\to\infty}\lim_{\varepsilon\to 0}\lim_{\delta\to 0} \psi_{\varepsilon,\delta,R}(t,x) = \log{\varrho(t,x)} + F(x)\,.
\end{equation}
By Lem.~\ref{lem:density}, we obtain
\[
    \psi_{\varepsilon,\delta,R}\varrho_\varepsilon(t,x) \le C_R\chi_R(x)\,,
\]
for a constant $C_R$ independent of $\delta,\varepsilon,x$. Hence, we can apply the dominated convergence theorem and we obtain $\lim_{\varepsilon\to 0}\lim_{\delta\to 0}\Pi_1(t) = \int \log (\varrho(t,x) + F(x))\chi_R(x)\dr \rho_t(x)$. Since $\rho_t$ admits moments of order 2, we obtain
\[
  \lim_{R\to\infty}  \lim_{\varepsilon\to 0}\lim_{\delta\to 0}\Pi_1(t) = \KL{\rho_t}{\pi} - \int \exp(-F(x))\dr x\,,
\]
for every $t>0$.

In the following, we will obtain the convergence of $\Pi_2$.
We obtain
\[
\Pi_2 =  \int_{t_1}^{t_2}\!\!\! \int \6_t\varrho_{\varepsilon,\delta} (t,x) \chi_R(x)\dr x\dr t +  \int_{t_1}^{t_2}\!\!\! \int \6_t\varrho_{\varepsilon,\delta} (t,x) \p{\frac{\varrho_{\varepsilon}(t,x)}{\varrho_{\varepsilon,\delta}(t,x)}  -1} \chi_R(x)\dr x\dr t\,.
\]
By Lem.~\ref{lem:continuityRho}, and a convergence dominated argument, we obtain
\[
\lim_{\delta \to 0}\int_{t_1}^{t_2}\!\!\! \int \6_t\varrho_{\varepsilon,\delta} (t,x) \p{\frac{\varrho_{\varepsilon}(t,x)}{\varrho_{\varepsilon,\delta}(t,x)}  -1} \chi_R(x)\dr x\dr t = 0 \,.
\]
Moreover,
\[
 \int_{t_1}^{t_2}\!\!\! \int \6_t\varrho_{\varepsilon,\delta} (t,x) \chi_R(x)\dr x\dr t = \int \varrho_{\varepsilon,\delta}(t_2,x) \chi_R(x)\dr x - \int \varrho_{\varepsilon,\delta}(t_1,x) \chi_R(x)\dr x\,.
\]
Since $\sup_{x\in\bR^d, t>0}{\varrho(t,x)} \le C$, we obtain the by dominated convergence theorem
\[
\lim_{R\to\infty}\lim_{\varepsilon\to 0}\lim_{\delta \to 0} \int \varrho_{\varepsilon,\delta}(t_2,x) \chi_R(x)\dr x - \int \varrho_{\varepsilon,\delta}(t_1,x) \chi_R(x)\dr x = \int \dr \rho_{t_2} - \int \dr \rho_{t_1} = 0\,.
\]
Hence, 
\[
\lim_{R\to\infty}\lim_{\varepsilon\to 0}\lim_{\delta \to 0}  \Pi_2 = 0\,.
\]

Next, we will obtain the convergence of $\Pi_3$. By Lem.~\ref{lem:density} and~\ref{lem:continuityRho}, we obtain 
\[
\lim_{\varepsilon\to 0}\lim_{\delta \to 0} \Pi_3 = \int_{t_1}^{t_2} \int \norm{\nabla F(x) + \nabla \log \varrho(t,x)}^2 \chi_R(x) \rho_t(x)\dr t\,.
\]
And by the monotone convergence theorem, we obtain the limit in $R$:
\[
\lim_{R\to\infty}\lim_{\varepsilon\to 0}\lim_{\delta \to 0} \Pi_3 = \int_{t_1}^{t_2} \int \norm{\nabla F(x) + \nabla \log \varrho(t,x)}^2 \dr \rho_t(x)\dr t\,.
\]

Now, we will obtain the convergence of $\Pi_4$. 
We recall that the kernel $K$ is bounded by Assumption~\ref{hyp:stab}.
First, remark that an integration by parts yields, 
\[
    U(x,\rho_t) = \int K(x,y)\p{\nabla F(y) + \nabla \log \varrho(t,y)             }\dr \rho_t(y)\,,
\]
for every $x\in\bR^d$, which is possible by Lem.~\ref{lem:density}.
 Hence, taking the limit in $\delta,\varepsilon$, we obtain 
\begin{multline*}
    \lim_{\varepsilon\to 0}\lim_{\delta\to 0} \Pi_4 \\ = \int_{t_1}^{t_2}\!\!\!\int\!\!\!\int K(x,y)\ps{\nabla F(x) + \nabla\log\varrho(t,x), \nabla F(y) +\nabla\log\varrho(t,y)} \chi_R(x) \dr\rho_t(x)\dr\rho_t(y)\dr t\,.
\end{multline*}
Since, by Lem.~\ref{lem:density},
$
\sup_{t\in[t_1,t_2]}\int\norm{ \nabla \varrho(t,x)} \dr x <\infty\,,
$
we obtain 
\[
\sup_{t\in[t_1,t_2]} \int  \norm{\nabla F(y) +\nabla \varrho(t,y)} \dr \rho_t(y)<\infty\,.
\]
Hence, taking the limit in $R$,
\begin{multline*}
  \lim_{R\to\infty}  \lim_{\varepsilon\to 0}\lim_{\delta\to 0} \Pi_4 \\ = \int_{t_1}^{t_2}\!\!\!\int\!\!\!\int K(x,y)\ps{\nabla F(x) + \nabla\log\varrho(t,x), \nabla F(y) +\nabla\log\varrho(t,y)} \dr\rho_t(x)\dr\rho_t(y)\dr t \,.
\end{multline*}

 It remains to study a last term: $\Pi_5$. And, we obtain by Lem.~\ref{lem:density} and~\ref{lem:continuityRho},
 \[
   \lim_{\varepsilon\to 0}\lim_{\delta\to0}   \Pi_5 = \int_{t_1}^{t_2}\!\!\!\int (\log \varrho(t,x) + F(x))\ps{\nabla \chi_R(x),v_t(x)}\dr\rho_t(x)\,.
 \]
 By Eq.~\eqref{eq:logrho} and ~\eqref{eq:bsupdensity}, 
 \[
 \sup_{t\in[t_1,t_2]}\int \norm{ (\log \varrho(t,x) + F(x)) \nabla\varrho(t,x)} \dr x < \infty \,.
 \]
Now, we remark that $\norm{\nabla \chi_R(x)} \le \frac C{\norm{x}}$.
Then, $$\sup_{t\in[t_1,t_2], x\in\bR^d}\norm{\nabla \chi_R(x)}\norm{ U(x,\rho_t) +\nabla F(x)} < \infty .$$
Consequently, by the two above equations, we can apply a dominated convergence theorem:
\[
    \lim_{R\to\infty}\lim_{\varepsilon\to 0}\lim_{\delta\to 0} \Pi_5 =0.
\]
Going back to Eq.~\eqref{eq:Pi}, we have shown
\[
    \KL{\rho_{t_2}}{\pi} - \KL{\rho_{t_1}}{\pi} = -\int_{t_1}^{t_2}\IS{\rho_t}{\pi} +\lambda \I{\rho_t}{\pi}  \dr t\,.
\]
\paragraph{Proof of Lem.~\ref{lem:continuityRho}}

 Using Eq.~(\ref{eq:continuityBis}) and integration by parts,
  \begin{multline*}
    \varrho^\varepsilon(t_2,x)-\varrho^\varepsilon(t_1,x) \\= -\int_{t_1}^{t_2}\!\! \int\ps{\nabla\eta_\varepsilon(x-y),b(y,\rho_s)} \dr \rho_s(y) \dr s  + \lambda \int_{t_1}^{t_2} \!\!\int \Delta\eta_\varepsilon(x-y) \dr\rho_s(y)\dr s \,.
  \end{multline*}
  Since $\rho \in \cP_2(\cC)$, $\sup_{t\in[t_1,t_2]} \|b(y,\rho_t)\| \le C(1+\norm{y}) + C\int  \sup_{t\in[t_1,t_2]} \norm{x_t} \dr \rho(x)$.
  As a consequence, $\sup_{t\in[1,T]} \|b(y,\rho_t)\| \le  C(1+\norm{y}) \,.$
Along with the observation that, for any fixed $\varepsilon$,  $\nabla \eta_\varepsilon$ and $\Delta \eta_\varepsilon$ are bounded,
it follows that $t\mapsto \varrho^\varepsilon(t,x)$ is Lipschitz continuous on $[t_1,t_2]$, and that its derivative almost everywhere is given by:
 $\6_t\varrho^\varepsilon(t,x) =\int (\ps{\nabla\eta_\varepsilon(x-y),b(y,\rho_t)}  +\lambda  \Delta\eta_\varepsilon(x-y)) \dr\rho_t(y)$. 
Thus, there exists a constant $C_{\varepsilon}>0$, such that:
\[
\sup_{t\in[t_1,t_2],x\in\bR^d}\6_t \varrho^{\varepsilon} (t,x) \le C_{\varepsilon}\,.
\]
$t\mapsto \nabla \varrho^\varepsilon(t,x)$ is also absolutely continuous by the same reasoning.

 \subsection{Proof of Prop.~\ref{prop:contraction}}
First, we introduce the Talagrand inequality $T_2$.
\begin{definition}
  The distribution $\pi$ satisfies the Talagrand inequality $T_2$, if there exists $\alpha >0$ such that for every $\mu\in\cP_2(\bR^d)$
  \[
    W_2(\mu,\pi) \le \sqrt{\frac 2\alpha \KL{\mu}{\pi}}\,.
  \]
\end{definition}
According to \cite[Th. 1]{otto2000generalization}, LSI implies $T_2$ with the same constant $\alpha$.

 In this subsection, we let Assumptions~\ref{hyp:stab},~\ref{hyp:holder} and Assumption~\ref{hyp:LSI} hold. Moreover, we assume $\lambda>0$.

 Let $\rho\in\sV_2$.
 By Prop.~\ref{prop:Lya} and Assumption~\ref{hyp:LSI}, we obtain
     \[
     \KL{\rho_{t_2}}{\pi} - \KL{\rho_{t_1}}{\pi} \le- 2\alpha\lambda \int_{t_1}^{t_2} \KL{\rho_t}{\pi}\dr t \,,
     \]
     for every $t_2>t_1>0$. By Grönwall's lemma, we obtain $\KL{\rho_{t_2}}{\pi} \le \ex^{-2\alpha\lambda(t_2-t_1)} \KL{\rho_{t_1}}{\pi}$. Using the Talagrand inequality $T_2$, we obtain 
   \[
   W_2(\rho_{t_2},\pi) \le \sqrt{\frac 2\alpha \KL{\rho_{t_1}}{\pi}} \ex^{-\alpha\lambda (t_2-t_1)}W_2(\rho_{t_1}, \pi)\,,
   \]
   for every $t_2>t_1>0$. Using Eq.~\eqref{eq:KLbound}, the proof is finished.
\section{Proof of convergence results}
In this section, we let Assumptions~\ref{hyp:algo},~\ref{hyp:stab}, and~\ref{hyp:holder} hold. Moreover, we assume $\lambda>0$.

First, we show the stronger ergodic convegergence result: 
\begin{prop}\label{prop:ergo} For every sequence $(\varphi_n,\psi_n)\to (\infty,\infty)$, we obtain
\[
   \lim_{n\to\infty} \bP\p{\frac{\sum_{i\in[\psi_n]}\gamma_i W_2(\mu^{\varphi_n}_i, \pi)}{\sum_{i\in[\psi_n]} \gamma_i} \ge \varepsilon} =0\,,
\]
for every $\varepsilon> 0$.
The latter still holds when we replace $W_2(\cdot,\cdot)$ by $W_2(\cdot, \cdot)^2$.

\end{prop}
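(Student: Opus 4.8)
The plan is to identify the $\gamma$-weighted Cesàro average of $W_2(\mu_i^n,\pi)$ with the integral of a test function against the occupation measure $M_t^n$, and then let the dynamical machinery do the work: Prop.~\ref{prop:MinfV2}, the shift-invariance/Poincar\'e recurrence argument, and the Lyapunov identity of Prop.~\ref{prop:Lya} together force any distributional limit point of $M_t^n$ to concentrate on McKean--Vlasov distributions with marginals equal to $\pi$. Concretely, let $g\colon\cP_2(\cC)\to\bR_+$ be $g(\rho)\eqdef W_2(\rho_0,\pi)$, so that, since $(\pi_0)_\# m_{\tau_k}^n=\mu_k^n$ and $(m_s^n)_0=\tfrac1n\sum_i\delta_{\bar X^{i,n}_s}$, one has $\int g\,\dr M_t^n=\tfrac1t\int_0^t W_2\big((m_s^n)_0,\pi\big)\,\dr s$. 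On $[\tau_{i-1},\tau_i)$ the interpolant $\bar X^{j,n}_s$ stays on the segment $[X_{i-1}^{j,n},X_i^{j,n}]$, hence $W_2\big((m_s^n)_0,\mu_{i-1}^n\big)\le\big(\tfrac1n\sum_j\norm{X_i^{j,n}-X_{i-1}^{j,n}}^2\big)^{1/2}$, and by Eq.~\eqref{eq:algo}, Assumption~\ref{hyp:stab} and Lem.~\ref{lem:stab} we have $\bE\norm{X_i^{j,n}-X_{i-1}^{j,n}}^2\le C\gamma_i$ uniformly in $i,j,n$. A triangle inequality, a re-indexing of the sums, and Cauchy--Schwarz then give $\bE\big|\tfrac{1}{\tau_k}\sum_{i\in[k]}\gamma_i W_2(\mu_i^n,\pi)-\tfrac1{\tau_k}\int_0^{\tau_k}W_2((m_s^n)_0,\pi)\dr s\big|\le \tfrac{C}{\tau_k}\sum_{i\in[k]}\gamma_i^{3/2}$, which tends to $0$ as $k\to\infty$ uniformly in $n$ because $\gamma_i\to0$ and $\sum_i\gamma_i=\infty$. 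It therefore suffices to prove $\int g\,\dr M_t^n\convProbaDoubleCont 0$.

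\emph{Truncation.} For $A>0$ put $g_A\eqdef g\wedge A$; it is bounded and continuous on $\cP_2(\cC)$, since $\rho\mapsto\rho_0$ is continuous $\cP_2(\cC)\to\cP_2(\bR^d)$, $\mu\mapsto W_2(\mu,\pi)$ is $1$-Lipschitz, and $x\mapsto x\wedge A$ is continuous. From $g(\rho)^2\le 2\int\norm{x}^2\dr\rho_0(x)+2\int\norm{x}^2\dr\pi(x)$ and Lem.~\ref{lem:stab} (applied to the interpolated particles), $\sup_{t,n}\bE\int g^2\,\dr M_t^n<\infty$; hence $g-g_A\le g^2/A$ and Markov's inequality yield $\sup_{t,n}\bP\big(\int(g-g_A)\,\dr M_t^n>\varepsilon/2\big)\to0$ as $A\to\infty$. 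It remains to show, for each fixed $A$, that $\int g_A\,\dr M_t^n\convProbaDoubleCont 0$.

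\emph{The truncated integral vanishes.} Argue by contradiction: if not, then along a subsequence $(t_j,n_j)\to(\infty,\infty)$ one has $\bP\big(\int g_A\,\dr M_{t_j}^{n_j}>\varepsilon\big)\ge\delta>0$. By tightness of $(M_t^n)_{t,n}$ (which follows from Lem.~\ref{lem:tightG}, as in~\cite{bianchi2024long}), extract a further subsequence along which $M_{t_j}^{n_j}$ converges in distribution to a random $M$ on $\cP(\cP_2(\cC))$. By Prop.~\ref{prop:MinfV2}, $M(\sV_2)=1$ a.s.; by shift-invariance of $M$ and the Poincar\'e recurrence theorem, $M$ is a.s. supported on \emph{recurrent} McKean--Vlasov distributions; and, as shown after Prop.~\ref{prop:Lya}, every recurrent $\rho\in\sV_2$ satisfies $\rho_t=\pi$ for all $t\ge0$, so $g_A(\rho)=0$. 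Thus $\int g_A\,\dr M=0$ a.s. Since $\Lambda\mapsto\int g_A\,\dr\Lambda$ is bounded and continuous on $\cP(\cP_2(\cC))$, the continuous mapping theorem gives $\int g_A\,\dr M_{t_j}^{n_j}\to\int g_A\,\dr M=0$ in distribution, hence in probability --- contradicting $\bP\big(\int g_A\,\dr M_{t_j}^{n_j}>\varepsilon\big)\ge\delta$.

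\emph{Conclusion.} Splitting $\int g\,\dr M_t^n$ as $\int g_A\,\dr M_t^n+\int(g-g_A)\,\dr M_t^n$ and combining the two previous steps gives $\int g\,\dr M_t^n\convProbaDoubleCont 0$; taking the number of particles equal to $\varphi_n$ and $t=\tau_{\psi_n}$ (so that $(t,n)\to(\infty,\infty)$ as the outer index tends to infinity) and adding the vanishing remainder of the first step proves the first assertion. The $W_2^2$ version is obtained verbatim with $g^2$ in place of $g$, using the uniform \emph{fourth}-moment bound in Lem.~\ref{lem:stab} to control the truncation remainder $g^2-(g^2\wedge A)\le g^4/A$ together with the estimate $\bE\big|W_2^2((m_s^n)_0,\pi)-W_2^2(\mu_{i-1}^n,\pi)\big|\le C\sqrt{\gamma_i}$. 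I expect the only real work to be the first step (passing from the discrete $\gamma$-weighted sum to the continuous-time occupation integral, with a uniform-in-$n$ Cesàro estimate for the discretization error) and the moment bookkeeping in the truncation; the substantive dynamical content --- that limit points of $M_t^n$ are carried by recurrent McKean--Vlasov distributions whose marginals coincide with $\pi$ --- is already available from Prop.~\ref{prop:MinfV2} and Prop.~\ref{prop:Lya}.
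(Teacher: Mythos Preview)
Your proposal is correct and follows the same strategy as the paper: reduce to showing that any distributional limit $M$ of $(M_t^n)$ is supported on recurrent elements of $\sV_2$, and use the Lyapunov identity of Prop.~\ref{prop:Lya} to force $\rho_0=\pi$ for such $\rho$. The only real difference is packaging. The paper's proof invokes \cite[Cor.~1]{bianchi2024long} as a black box once Lem.~\ref{lem:stab} is in hand, and then spends its effort on identifying the Birkhoff center (via the Lyapunov function $\rho\mapsto\KL{\rho_\varepsilon}{\pi}$ and Lem.~\ref{lem:rec}). You instead unpack that corollary by hand: the discrete--to--continuous passage $\sum_i\gamma_iW_2(\mu_i^n,\pi)\approx\tau_k\int g\,\dr M_{\tau_k}^n$, the truncation $g\mapsto g\wedge A$ controlled by moment bounds, and the subsequence/continuous-mapping argument against a limit $M$. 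This makes your write-up more self-contained, at the cost of redoing work that the cited reference already provides; conversely the paper's version is shorter but opaque unless one has \cite{bianchi2024long} open. One small point worth making explicit in your version: the conclusion $\rho_0=\pi$ for recurrent $\rho$ (not just $\rho_t=\pi$ for $t>0$) uses recurrence a second time, via $\rho_0=\lim_l\rho_{\tau_l}=\pi$ --- this is exactly what the paper does in its appendix proof, and your citation of ``as shown after Prop.~\ref{prop:Lya}'' is pointing at an overview sentence rather than the actual argument.
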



\begin{proof}
By Lem.~\ref{lem:stab}, it is straightforward to check that  \cite[Cor.~1]{bianchi2024long} holds under Assumptions~\ref{hyp:algo} and~\ref{hyp:stab}. The proof consists in identifying the Birkhoff center $\text{BC}_2$, defined hereafter.

We define the translation $\Theta_t: x\in\cC\to x(t+\cdot)$.
We say that a point $\rho\in\sV_2$ is recurrent if there exists a sequence $(t_n)$ such that $\lim_{n\to\infty}(\Theta_{t_n})_\# \rho=\rho $. The Birkhoff center $\text{BC}_2$ is the closure of all recurrent points.

Let $\Lambda\subset \sV_2$. Let $\cF:\sV_2 \to \bR$ be a l.s.c. function such that $t\mapsto \cF((\Theta_t)_\# \rho)$ is strictly decreasing when $\rho\notin \Lambda$ and constant when $\rho\in\sV_2$. 
We say that a function $\cF$ defined as above is a Lyapunov function for a set $\Lambda$.

\begin{lemma}\label{lem:rec}
    Let $\cF$ be a Lyapunov function for a set $\Lambda$. Every recurrent points belongs to $\Lambda$.
\end{lemma}
\begin{proof}
    The limit $\ell:=\lim_{t\to\infty}\cF((\Theta_t)_\#\rho)$ is well-defined because
$\cF((\Theta_t)_\#\rho)$ is non increasing.  Consider a recurrent point $\rho\in
\sV_2$, say $\rho=\lim_{n} (\Theta_{t_n})_\#\rho$. Clearly $\cF(\rho)\geq
\cF((\Theta_{t_n})_\#\rho )\geq \ell$.  Moreover, by lower semi-continuity of $\cF$, $\ell
= \lim_n \cF((\Theta_{t_n})_\#\rho)\geq \cF(\rho)$. Therefore, $\ell$ is finite, and
$\cF(\rho)=\ell$.  This implies that $t\mapsto \cF((\Theta_t)_\#\rho)$ is constant. By
definition, this in turn implies $\rho\in \Lambda$, which concludes the proof.
\end{proof}

We define the l.s.c. function $\cF_\varepsilon:\rho\in\sV_2 \to \KL{\rho_\varepsilon}{\pi}$. By Prop.~\ref{prop:Lya}, this is a Lyapunov function for the set
\[
\Lambda_\varepsilon := \{ \rho\in\sV_2\,:\, \IS{\rho_t}{\pi} = \I{\rho}{\pi} =0,\, \forall t\ge \varepsilon \,a.e. \}\,.
\]
For $\mu\in\cP_2(\cC)$, $\I{\mu}{\pi} =0 $ implies $\mu=\pi$, and therefore $\KL{\mu}{\pi}=0$. Moreover, $t\mapsto \KL{\rho_t}{\pi}$ is constant for $t\ge \varepsilon$. Consequently,
\[
\Lambda_\varepsilon =\{\rho\in\sV_2\,:\, \rho_t = \pi,\, \forall t\ge\varepsilon \}.
\]
Let $\rho\in\sV_2$ a recurrent point, say $\lim_{n\to\infty}(\Theta_{t_n})_\#\rho = \rho$. 
By continuity of the projection $(\pi_0)_\#$, 
we obtain $\lim_{n\to\infty} \rho_{t_n} = \rho_0 =\pi$.

 Let $\rho\in\text{BC}_2$. It is a limit of recurrent points $\rho$ satisfying $\rho_0=\pi$. Hence, still by continuity of the mapping $(\pi_0)_\#$, $\rho_0 =\pi$. This finishes the proof of the fist claim of Prop~\ref{prop:ergo}.

 The second claim holds by redoing \cite[Prop.~\ref{coro:ergodic}]{bianchi2024long} with $W_2(\cdot,\cdot)^2$ instead of $W_2(\cdot,\cdot)^2$.
\end{proof}

Next, we state a stronger convergence result.
\begin{prop}
    \label{prop:pw}For every sequence $(\varphi_n,\psi_n)\to (\infty,\infty)$, we obtain
\[
   \lim_{n\to\infty} \bP\p{ W_2(\mu^{\varphi_n}_{\psi_n}, \pi) \ge \varepsilon} =0\,,
\]
for every $\varepsilon\ge  0$.
\end{prop}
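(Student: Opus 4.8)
The plan is to reduce the pointwise statement to the uniform contraction estimate of Prop.~\ref{prop:contraction} (which is why Assumption~\ref{hyp:LSI} must be invoked here, in addition to Assumptions~\ref{hyp:algo},~\ref{hyp:stab},~\ref{hyp:holder}). The starting point is the shift identity at the level of the path-space empirical measures: since $\Phi_\tau(m^n_s)=m^n_{s+\tau}$ and $(\pi_0)_\# m^n_{\tau_k}=\mu^n_k$, we get, for every fixed window length $T>0$ and every $n$ with $\tau_{\psi_n}\ge T$,
\[
\mu^{\varphi_n}_{\psi_n}=(\pi_0)_\# m^{\varphi_n}_{\tau_{\psi_n}}=(\pi_T)_\# m^{\varphi_n}_{\tau_{\psi_n}-T}=\bigl(m^{\varphi_n}_{\tau_{\psi_n}-T}\bigr)_T\,.
\]
Because $\tau_{\psi_n}\to\infty$, the measure $m^{\varphi_n}_{\tau_{\psi_n}-T}$ sits at a time that still tends to infinity, so it is ``asymptotically McKean--Vlasov'', and we only need to understand the effect of letting the dynamics run for the \emph{fixed} additional time $T$, which we will take large.

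First I would fix $\varepsilon>0$ and $\delta>0$ and use the tightness of $(m^n_t)_{t,n}$ (Lem.~\ref{lem:tightG}) to pick a compact set $\cK=\cK_\delta\subset\cP_2(\cC)$ with $\bP(m^n_t\in\cK)>1-\delta$ for all $t,n$. Since $\sV_2$ is closed (Lem.~\ref{lem:V2}), $\sV_2\cap\cK$ is compact, so Prop.~\ref{prop:contraction} with $t_1=1$, $t_2=T$ gives $\sup_{\rho\in\sV_2\cap\cK}W_2(\rho_T,\pi)\le C_{1,\cK}\,\ex^{-\alpha\lambda(T-1)}$; I then fix $T=T_\delta>1$ large enough that this bound is $<\varepsilon$. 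Next I would argue along subsequences: given any subsequence of $\bN$, one still has $\varphi_n\to\infty$ and $\tau_{\psi_n}-T\to\infty$ along it, so by tightness a further subsequence can be extracted along which $m^{\varphi_n}_{\tau_{\psi_n}-T}$ converges in distribution in $\cP_2(\cC)$ to some random measure $\rho$. The crucial input is that any such doubly-asymptotic distributional limit point satisfies $\rho\in\sV_2$ almost surely: this is precisely the propagation-of-chaos / nonlinear-martingale-problem content that also underlies Prop.~\ref{prop:MinfV2}, which I would import from \cite{bianchi2024long}. Applying the Portmanteau theorem to the open set $\cP_2(\cC)\setminus\cK$ gives $\bP(\rho\notin\cK)\le\delta$, so on an event of probability at least $1-\delta$ we have $\rho\in\sV_2\cap\cK$, hence $W_2(\rho_T,\pi)<\varepsilon$.

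Finally I would transfer this back to $\mu^{\varphi_n}_{\psi_n}$. The evaluation map $\rho\mapsto\rho_T=(\pi_T)_\#\rho$ is continuous from $(\cP_2(\cC),W_2)$ to $(\cP_2(\bR^d),W_2)$: restricting paths to any interval $[0,m]$ with $m\ge T$ and then evaluating at $T$ is $1$-Lipschitz for the uniform norm, hence nonexpansive for $W_2$ after pushforward, and the uniform fourth-moment bound of Lem.~\ref{lem:stab} rules out escape of mass. Therefore, along the chosen subsequence, $\mu^{\varphi_n}_{\psi_n}=\bigl(m^{\varphi_n}_{\tau_{\psi_n}-T}\bigr)_T\to\rho_T$ in distribution, so $W_2(\mu^{\varphi_n}_{\psi_n},\pi)\to W_2(\rho_T,\pi)$ in distribution; Portmanteau applied to the closed set $[\varepsilon,\infty)$ then yields $\limsup_n\bP\bigl(W_2(\mu^{\varphi_n}_{\psi_n},\pi)>\varepsilon\bigr)\le\bP\bigl(W_2(\rho_T,\pi)\ge\varepsilon\bigr)\le\bP(\rho\notin\cK)\le\delta$ along the subsequence (using that on $\{\rho\in\cK\}$ one has $\rho\in\sV_2\cap\cK$ a.s.\ and hence $W_2(\rho_T,\pi)<\varepsilon$). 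Since every subsequence has such a further subsequence, the full sequence satisfies $\limsup_n\bP(W_2(\mu^{\varphi_n}_{\psi_n},\pi)>\varepsilon)\le\delta$, and letting $\delta\to0$ concludes. The main obstacle is the imported input that doubly-asymptotic distributional limit points of $m^n_t$ are McKean--Vlasov distributions: this requires a genuine propagation-of-chaos statement for $m^n_t$ itself, uniform over $(t,n)\to(\infty,\infty)$ — stronger than the time-averaged statement on the occupation measure $M^n_t$ in Prop.~\ref{prop:MinfV2} — together with the care needed to upgrade $W_2$-convergence of path laws to $W_2$-convergence of the time-$T$ marginal, which is where the uniform moment bound of Lem.~\ref{lem:stab} enters.
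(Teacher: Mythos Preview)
Your proposal is correct and follows essentially the same route as the paper: both arguments use the shift identity $\mu^{\varphi_n}_{\psi_n}=(\pi_T)_\# m^{\varphi_n}_{\tau_{\psi_n}-T}$, pick a compact $\cK$ by tightness (Lem.~\ref{lem:tightG}), choose $T$ via Prop.~\ref{prop:contraction} on $\sV_2\cap\cK$, pass to a distributional limit of $m^{\varphi_n}_{\tau_{\psi_n}-T}$ and use that this limit lies a.s.\ in $\sV_2$ (the imported propagation-of-chaos input from~\cite{bianchi2024long} that you correctly flag), then transfer to the time-$T$ marginal by continuity of $\rho\mapsto\rho_T$. The only cosmetic difference is that the paper invokes Skorokhod's representation theorem and a triangular inequality where you use Portmanteau directly; your identification of the ``$\rho\in\sV_2$ a.s.'' step as the genuine external input (stronger than the time-averaged Prop.~\ref{prop:MinfV2}) matches exactly what the paper uses, though the paper's own wording there is somewhat elliptic.
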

\begin{proof}
By Prop.~\ref{prop:contraction}, we obtain
\begin{equation}\label{eq:compact}
\lim_{t\to\infty}\sup_{\rho\in\cK} W_2(\rho_t,\pi) =0\,,
\end{equation}
for every compact $\cK$ of $\cP_2(\cC)$.
Recall that the collection of random variables 
$\{m^n_t \}$ is tight in $\cP_2(\cC)$ by Lem.~\ref{lem:tightG}. Let
$(t_n,\varphi_n)$ be a sequence such that $(t_n,\varphi_n) \to_n
(\infty,\infty)$ and such that $(m^{\varphi_n}_{t_n})_n$ converges
in distribution to $M$. 
To prove Cor.~\ref{prop:pw}, it will be enough to show that 
\[
\forall \delta, \varepsilon > 0, \exists T > 0, \quad 
\limsup_n \bP\left(W_2\left((\pi_0)_\# m^{\varphi_n}_{t_n+T}, \pi\right) 
  \geq \delta\right) \leq \varepsilon. 
\] 
This shows indeed that 
\[
W_2\left( (\pi_0)_\# m^n_t,\pi \right)
   \xrightarrow[(t,n)\to (\infty,\infty)]{\bP} 0 , 
\]
and by taking $t = \tau_k$ and by recalling that $(\pi_0)_\# m^n_{\tau_k} = \mu_k^n$, 
we obtain our theorem. 

Fix $\delta$ and $\varepsilon$.  By the tightness of the family of random variables 
$\{m^n_t\}$, there exists a compact set $\cD\subset \cP_2(\cC)$ such that
$\bP(m^n_t \in \cD) \geq 1 - \varepsilon / 2$ for each couple $(t,n)$.
This implies that $M(\cD) \geq 1 - \varepsilon / 2$ by the Portmanteau theorem.
Since $\sV_2$ is closed by Lem.~\ref{lem:V2}, the set $\cK = \cD
\cap \sV_2$ is compact in $\cP_2(\cC)$, and by consequence, it is compact in
$\sV_2$ for the trace topology.  By the same proposition, $M(\sV_2) = 1$,
therefore, $M(\cK) \geq 1 - \varepsilon / 2$. 

Since $\cP_2(\cC)$ is Polish, we can apply Skorokhod's representation theorem
\cite[Th.~6.7]{billingsley2013convergence} to the sequence
$(m^{\varphi_n}_{t_n})$, yielding the existence of a probability space
$(\widetilde \Omega, \widetilde \cF, \widetilde \bP)$, a sequence of
$\cP_2(\cC)$--valued random variables $(\rho^n)$ on $\widetilde\Omega$ and
a $\cP_2(\cC)$--valued random variable $\rho^\infty$ on
$\widetilde\Omega$ such that $(\rho^{n})_\#\widetilde\bP =
(m^{\varphi_n}_{t_n})_\#\bP$, $( \rho^\infty)_\#\widetilde \bP = M$,
and $\rho ^n \to \rho^\infty$ pointwise on $\widetilde\Omega$.  Noting
that $(\pi_0)_\# m^{\varphi_n}_{t_n+T}$ and $\rho^n_T$ have the same probability
distribution as $\cP_2(\bR^d)$--valued random variables, we show that 
\begin{equation}
\label{eq:Ptilde}
 \exists T > 0, \quad 
  \limsup_n \widetilde\bP \left(W_2\left(\rho^{n}_T, \pi\right) 
  \geq \delta\right) \leq \varepsilon,
\end{equation}
to establish our theorem. Applying Eq.~\eqref{eq:compact} 
to the compact $\cK$, we set $T > 0$ in such a way that 
\[
\sup_{\rho \in \cK} W_2(\rho_T, \pi) \leq \delta / 2. 
\]
By the triangular inequality, we have 
\[
W_2\left(\rho^{n}_T, \pi\right) \leq 
  W_2\left( \rho^{n}_T, \rho^{\infty}_T\right) 
  + W_2\left(  \rho^{\infty}_T, \pi\right).
\]
The first term at the right hand side converges to zero for each
$\tilde\omega\in\widetilde\Omega$ by the continuity of the function $\rho
\mapsto \rho_T$, thus, this convergence takes place in probability.
We also know that for $\widetilde\bP$--almost all
$\tilde\omega\in\widetilde\Omega$, it holds that $\rho^\infty \in \sV_2$.
Thus, regarding the second term, we can write    
\[
\widetilde\bP\left( 
  W_2\left(\rho^{\infty}_T, \pi\right) \geq\delta 
  \right) \leq 
\widetilde\bP\left( \rho^{\infty} \not\in \cK \right) 
 + 
 \widetilde\bP\left( 
  \left( 
   W_2\left(\rho^{\infty}_T, \pi\right) \geq\delta 
  \right) \cap \left( \rho^{\infty} \in \cK \right) \right) . 
\] 
When $\rho^{\infty}\in \cK$, it holds that 
$W_2\left(\rho^{\infty}_T, \pi\right) \leq \delta / 2$,
thus, the second term at the right hand side of the last inequality is 
zero. The first term satisfies 
$\widetilde\bP\left( \rho^{\infty} \not\in \cK \right) = 
 1 - M(\cK) \leq \varepsilon / 2$, and the statement~\eqref{eq:Ptilde} 
follows. Cor.~\ref{prop:pw} is proven.
\end{proof}

\subsection{Proof of Th.~\ref{th:ergo}}

Instead of seeing $\bar{\mathscr L}^n$ as set of random variable on $\cP_2(\bR^d)$, we see it as a set of measures in $\cP(\cP_2(\bR^d))$. We denote such a set as $\bar \cL^n$. 

Let $\varepsilon>0$. By contradiction, there exists $\delta>0$, a subsequence $\varphi_n \to \infty$ and a sequence of measures $\nu^n\in\bar \cL^{\varphi_n}$ satisfying 
\[
 \int \indicatrice_{W_2(\mu,\pi)>\varepsilon} \dr \nu^n(\mu) \ge \delta\,.
\]
As shown in the proof of Th.~\ref{th:tight}, the sequence of random variable $(\bar \mu_k^n\,:\, k,n\in\bN^*)$ is tight. Hence, there exists a measure $\nu^\infty\in\cP_2(\bR^d)$ such that $(\nu^n)$ converges to $\nu^\infty$ along a subsequence. To keep the notations simple, we say that $\nu^n\to\nu^\infty$.
Since, $\mu\in\cP_2(\bR^d)\mapsto\indicatrice_{W_2(\mu,\pi)}$ is continuous bounded, we obtain 
\[
 \int \indicatrice_{W_2(\mu,\pi)>\varepsilon} \dr \nu^\infty(\mu) \ge \delta\,.
\]
Let \((\psi_k^n)_k\) be a sequence diverging to \(\infty\) such that \(\bar{\mu}_{\psi_k^n}^n \to_k\nu^n\), for every \(n \in \bN^*\).

Let $\varepsilon'>0$, there exists $n_0$ such that,
\[
\abs{ \int \indicatrice_{W_2(\mu,\pi)>\varepsilon} \dr \nu^\infty(\mu)-  \int \indicatrice_{W_2(\mu,\pi)>\varepsilon} \dr \nu^{n_0}(\mu)} \le \frac {\varepsilon'} 2\,.
\]
Moreover, there exists \(k_0\) such that 
\[
\abs{ \bP( W_2(\bar \mu_{\psi_{k_0}^{n_0}}^{n_0},\pi)>\varepsilon)  -  \int \indicatrice_{W_2(\mu,\pi)>\varepsilon} \dr \nu^{n_0}(\mu)} \le \frac {\varepsilon'} 2\,.
\]
Consequently, there exists a subsequence
\((\tilde{\varphi}_n, \tilde{\psi}_n) \to (\infty, \infty)\) such that
\[
\lim_{n\to\infty} \bP(W_2(\bar \mu_{\tilde\psi_n}^{\tilde\varphi_n},\pi)\ge \varepsilon ) = \int \indicatrice_{W_2(\mu,\pi)>\varepsilon} \dr \nu^{\infty}(\mu) \ge \delta\,.
\]
By Jensen's inequality, we obtain
\[
W_2(\bar \mu_{\tilde\psi_n}^{\tilde\varphi_n},\pi)^2\le \frac{\sum_{k\in[\tilde \psi_n]} \gamma_k W_2(\mu^{\tilde \varphi_n}_k,\pi)^2}{\sum_{k\in[\tilde \psi_n]} \gamma_k}\,.
\]
Consequently,
\[
\lim_{n\to\infty}\bP\p{\frac{\sum_{k\in[\tilde \psi_n]} \gamma_k W_2(\mu^{\tilde \varphi_n}_k,\pi)^2}{\sum_{k\in[\tilde \psi_n]} \gamma_k}\ge \varepsilon^2 } \ge \delta\,.
\]
The latter contradicts the second claim of Prop.~\ref{prop:ergo}. Thus, the proof is finished.


\subsection{Proof of Th.~\ref{th:nonergo}}
This is the same proof as Th.~\ref{th:ergo}. But this time, we use Prop.~\ref{prop:pw}.

\subsection{Proof of Cor.~\ref{coro:ergodic}}
By contradiction, assume that there exists $\delta>0$ and a subsequence $\varphi_n$, such that for every $n$, $\limsup_{k\to\infty} \bP(W_2( \bar\mu_k^{\varphi_n},\pi) \geq \varepsilon)>\delta$. Assume $\varphi_n=n$ to simplify the notations.
For any $n$, this implies that one can extract a subsequence, say $(\psi_k^n:k\in \bN)$, such that for every $k$, $\bP(W_2( \bar\mu_{\psi_k^n}^{n},\pi) \geq \varepsilon)>\delta/2$.
By Th.~\ref{th:tight}, the sequence $(\bar\mu_{\psi_k^n}^{n}:k\in \bN)$ is tight, so that there exists $\nu^n\in \bar{\mathscr{L}}^n$, such that
$\bar\mu_{\psi_k^n}^{n}$ converges in distribution to $\nu^n$ as $k\to\infty$, along some subsequence which we still denote by $\psi_k^n$ to keep the notations simple. By the Portmanteau theorem, 
 \begin{equation}
     \label{eq:limsuppsi}
     \limsup_{k\to\infty} \bP(W_2( \bar\mu_{\psi_k^n}^{n},\pi) \geq \varepsilon) \leq \bP(W_2( \nu^n,\pi) \geq \varepsilon)\,.
 \end{equation}
By Th.~\ref{th:ergo}, $\nu^n$ converges in probability to $\pi$ in $\cP_2(\bR^d)$ as $n\to\infty$. Therefore, $\bP(W_2( \nu^n,\pi) \geq \varepsilon)<\delta/3$ for all $n$ large enough. Using Eq.~(\ref{eq:limsuppsi}), it follows that $\bP(W_2( \bar\mu_{\psi_k^n}^{n},\pi) \geq \varepsilon)<\delta/2$ along some subsequence, hence a contradiction. This proves the first point. The second point follows the same arguments.

\end{document}